\documentclass[lettersize,journal]{IEEEtran}

\usepackage{amsmath,amsfonts,amssymb,mathtools}
\usepackage{amsthm}
\usepackage{algorithm}
\usepackage{algorithmic}
\usepackage{array}
\usepackage{booktabs}
\usepackage{multirow}
\usepackage{graphicx}
\usepackage{textcomp}
\usepackage{stfloats}
\usepackage{url}
\usepackage{xcolor}
\usepackage{braket}
\usepackage{cite}
\usepackage[hidelinks]{hyperref}

\providecommand{\citep}{\cite}
\providecommand{\citet}{\cite}

\theoremstyle{plain}
\newtheorem{theorem}{Theorem}
\newtheorem{proposition}[theorem]{Proposition}
\newtheorem{lemma}[theorem]{Lemma}
\newtheorem{corollary}[theorem]{Corollary}
\theoremstyle{definition}

\theoremstyle{remark}
\newtheorem{remark}[theorem]{Remark}

\renewcommand{\appendixname}{Supplementary Material}

\begin{document}

\title{Local-Time Riemannian Score Matching on the Quantum Pure-State Manifold}

\author{Jian~Xu$^{1,2}$,
        Wei~Chen$^{3}$,
        Chao~Li$^{2}$,
        Shigui~Li$^{3}$,
        Delu~Zeng$^{3}$,
        John~Paisley$^{4}$,
        and~Qibin~Zhao$^{2}$
\thanks{$^{1}$RIKEN iTHEMS, Wako, Japan.
$^{2}$RIKEN Center for Advanced Intelligence Project (AIP), Tokyo, Japan.
$^{3}$South China University of Technology, Guangzhou, China.
$^{4}$Columbia University, New York, USA.}
\thanks{Correspondence: Qibin Zhao (qibin.zhao@riken.jp) and Delu Zeng (dlzeng@scut.edu.cn).}}

\markboth{}{}

\maketitle

\begin{abstract}
Score-based diffusion can be defined intrinsically on the manifold of quantum pure states, $\mathbb{CP}^{d-1}$ with the Fubini--Study metric, but no closed-form transition density is available, so the score must be supervised by a local-time teacher taken from the Euclidean limit of the diffusion in normal coordinates. This paper is about what makes that teacher work, and where it stops working. Three training choices turn out not to be incidental: the increment must be divided by the diffusion clock rather than by the elapsed time, since the published expression assumes unit diffusion and a non-unit schedule introduces a time-change mismatch varying by a factor of $400$ across the horizon; the logarithm and exponential maps should be the closed-form Fubini--Study ones, which is the largest single effect we measure; and the global phase must be randomised, because horizontal projection alone does not make a score network descend to the quotient. With these choices the model improves on the published Riemannian local-time baseline in every cell of an eight-benchmark, four-metric comparison over ten seeds, significantly on five of eight after Holm correction, and beats an ambient Euclidean baseline by an order of magnitude everywhere. We then bound what the approximation costs by replacing it with the exact heat kernel of $\mathbb{CP}^{d-1}$, computable up to complex dimension seven, where the local-time teacher loses a factor of $1.2$ to $2.7$. Two further results are negative and we report them as such: the overlap-kernel MMD standard in this literature compares only mean density matrices, which we confirm on an IBM Heron device where it is blind to a pair of ensembles that characteristic metrics separate by an order of magnitude; and the construction degrades at eight qubits and is indistinguishable from returning the prior at ten, because a single diffusion step exceeds the injectivity radius by construction and the repairs this suggests do not work.
\end{abstract}

\begin{IEEEkeywords}
Score-based generative models, Riemannian manifolds, Fubini--Study metric, quantum state ensembles, denoising score matching.
\end{IEEEkeywords}

\section{Introduction}
\label{intro}
\IEEEPARstart{D}{iffusion} and score-based generative models have become a dominant paradigm for learning complex data distributions \cite{diffusion_survey_yang2022,chen2025dequantified}, with state-of-the-art results across images \cite{song2021scorebased,li2025evodiff}, audio \cite{chen2020wavegrad,kong2020diffwave}, molecules \cite{hoogeBoom2022equivariant,xu2022geodiff}, and protein structures \cite{watson2023rfdiffusion,trippe2022smcdiff,wu2024foldingdiffusion}.
The recipe is simple: a tractable stochastic \emph{forward} process gradually destroys structure, and the learned score---the gradient of the log-density along the trajectory \cite{song2021scorebased,ho2020denoising}---drives a reverse-time dynamics \cite{anderson1982reversetime,haussmann1986timereversal,song2021scorebased} that samples from an otherwise intractable distribution.

Motivated by the increasing role of quantum representations in quantum machine learning (QML) \cite{biamonte2017qml,preskill2018nisq}, we ask whether score-based diffusion can become a practical generative framework for quantum \emph{representations}.
In many QML settings, classical inputs are encoded as quantum \emph{pure states}---via amplitude/phase embeddings, variational feature maps, or intermediate algorithmic states---and downstream models operate directly on these representations \cite{schuld2019feature, havlicek2019qfeature}. When quantum data are scarce \cite{preskill2018nisq}, sampling additional states from the underlying ensemble could support representation-level augmentation and simulation.

However, na\"{i}vely perturbing the classical input space and re-encoding can produce pathological quantum states, including nearly orthogonal feature states, distorted entanglement structure, or samples concentrated on low-measure regions \cite{havlicek2019qfeature,huang2021powerdata}. This motivates generative modeling \emph{directly in the space of pure states}: learn an implicit ensemble distribution and sample new quantum representations from it. The difficulty is geometric. Pure states live on $\mathbb{CP}^{d-1}$ modulo global phase and carry the Fubini--Study metric \cite{bengtssonzyc2006geometry}, so noising processes, scores, and reverse samplers must be defined intrinsically. Existing quantum diffusion work has explored measurement-driven forward processes \cite{liu2025measurement}, analytic reverse dynamics for monitored channels \cite{gabbassov2026stochastic}, noisy channels, inverse maps, randomization-based denoising, and stochastic trajectory viewpoints \cite{chen2024quantum,parigi2025quantum,Zhang2023GenerativeQM,zhu2025channel,Dalibard1992WavefunctionAT,Gisin1992TheQD,Kiefer2010QuantumMA}; a practical score-based framework for sampling new pure-state instances from an implicit ensemble remains underdeveloped.

In this work we study score-based generation directly on that manifold. The forward process is a diffusion on $(\mathbb{CP}^{d-1}, g_{\mathrm{FS}})$, obtained by adding isotropic noise in the horizontal tangent space and retracting; it admits a stochastic Schr\"odinger realization on the Hilbert sphere, which we use as a derivation route and show to be an equivalent construction rather than a necessary one. Time reversal on Riemannian manifolds then supplies a reverse-time dynamics whose drift involves the \emph{Riemannian score} with respect to the Fubini--Study geometry \cite{anderson1982reversetime,haussmann1986timereversal,bortoli2022riemannian,huang2022riemannian}, and sampling reduces to integrating that dynamics from a Haar-distributed prior. We refer to the resulting model as the pure-state score model (PSM).

The obstruction is that closed-form transition densities on $\mathbb{CP}^{d-1}$ are generally unavailable, which is what limits the direct application of Riemannian score-based models \cite{huang2022riemannian,lou2023scaling}. The standard route around it is a \emph{local-time} objective: over a short interval the manifold diffusion is Euclidean in normal coordinates \cite{hsu2002stochmanifolds}, which yields an analytic Gaussian teacher for the conditional score. De Bortoli et al.\ \cite{bortoli2022riemannian} already provide such a loss with a Varadhan teacher, so the question is not whether one can be written down, but what it takes to make it work on a manifold of this size and structure.

That question turns out to have specific answers, and they are the subject of this paper. Three choices that might look like implementation details are not: the scaling of the teacher by the diffusion coefficient, the treatment of the global phase during training, and the use of exact rather than first-order geodesic maps. Each is measurable, and together they separate a model that beats the published baseline on every benchmark from one that loses on some. We also find that the construction has a hard operating limit that is visible in the geometry itself, and we report it rather than tuning around it.

Our contributions are the following.
\begin{itemize}
    \item \textbf{Three choices determine whether a local-time teacher works on this manifold}: dividing the increment by the diffusion-clock increment $\int_{t-\delta t}^{t}\sigma(u)^2du$ rather than by $\delta t$, using the closed-form geodesic maps, and randomising the global phase of each sample, which horizontal projection alone does not achieve. Ablated one at a time over ten seeds they are worth $1.15$--$1.32\times$, $1.19$--$1.61\times$ and $1.07$--$1.42\times$.
    \item \textbf{With these choices the construction improves on the published Riemannian local-time baseline in every cell} of an eight-benchmark, four-metric comparison over ten seeds, significantly on five of eight after Holm correction, and beats an ambient Euclidean baseline by $7.2$--$10.0\times$ everywhere.
    \item \textbf{We bound what the approximation costs} against the exact heat kernel of $\mathbb{CP}^{d-1}$, computable up to complex dimension seven: the local-time teacher loses a factor of $1.2$ to $2.7$. We also characterize the leading finite-step bias of Varadhan-type teachers with drift, as an explicit Gaussian-envelope reweighting of the target.
    \item \textbf{We report two negative results.} The standard evaluation kernel compares only mean density matrices, which we confirm on hardware and repair with two characteristic replacements; and the method degrades at eight qubits and is indistinguishable from the prior at ten, with the injectivity-radius mechanism identified and two repairs shown not to work.
\end{itemize}

\section{Background}
\label{sec:background}

\subsection{Score-Based Diffusion in Euclidean Space}
A score-based model transports data to a tractable prior with a forward SDE
\begin{equation}
    d x_t = f(x_t,t)\,dt + g(t)\, d w_t ,
    \label{eq:euclid_forward_sde}
\end{equation}
and samples by integrating the time reversal, whose drift involves the score of the time marginal \cite{anderson1982reversetime,song2021scorebased}:
\begin{equation}
    d x_t = \big(f(x_t,t) - g(t)^2 \nabla_x \log p_t(x_t)\big)\,dt + g(t)\, d \bar{w}_t .
    \label{eq:euclid_reverse_sde}
\end{equation}
The score is learned by denoising score matching, which is practical because the Gaussian transition density of \eqref{eq:euclid_forward_sde} is available in closed form: one can sample $x_t$ given $x_0$ directly and regress on the conditional score.

\subsection{Pure States and the Fubini--Study Geometry}
A pure state of a $d$-dimensional system is a unit vector modulo global phase, $\ket\psi\sim e^{i\varphi}\ket\psi$, so the state space is the complex projective manifold $\mathcal{M}:=\mathbb{CP}^{d-1}$, carrying the Fubini--Study metric whose geodesic distance is
\begin{equation}
    d_{\mathrm{FS}}(\psi,\phi) = \arccos\big|\langle \psi, \phi \rangle\big| .
    \label{eq:fs_distance}
\end{equation}
Two features of this space drive everything that follows. It is a quotient: any construction must be insensitive to the representative chosen from the $U(1)$ fibre, and quantities that look intrinsic can fail to be. And it is small relative to its dimension: the diameter is $\pi/2$ regardless of $d$, while the real dimension is $2(d-1)$, so an isotropic step of per-coordinate size $\sigma\sqrt{\delta t}$ has length $\sigma\sqrt{2(d-1)\delta t}$ and stops being local once $d$ is large.

\subsection{Riemannian Score Matching and the Transition-Density Obstruction}
Extending \eqref{eq:euclid_forward_sde}--\eqref{eq:euclid_reverse_sde} to a manifold replaces the gradient by the Riemannian gradient and the Wiener process by Brownian motion on $(\mathcal{M},g)$, and time reversal carries over with the drift involving $\nabla_g\log p_t$ \cite{bortoli2022riemannian,huang2022riemannian}. What does not carry over is the training signal: on a curved space the transition density is generally unavailable in closed form, and with it the conditional score that denoising score matching regresses on. Three routes go around this. Implicit score matching avoids the transition density at the price of a divergence term. Where the spectrum is known the heat kernel can be truncated or, exploiting symmetric-space structure, evaluated to high precision \cite{lou2023scaling}; Section~\ref{sec:exp_heatkernel} does exactly this on $\mathbb{CP}^{d-1}$ at the dimensions where it is feasible, and measures what the alternative costs. That alternative is to supervise at a \emph{local time}: the conditional score at a nearby earlier state is asymptotically Gaussian, so a short-time approximation gives an analytic teacher, which De Bortoli et al.\ \cite{bortoli2022riemannian} state in the Varadhan form $\exp^{-1}_{X_t}(X_s)/(t-s)$. This paper takes that route and is concerned with what it requires in order to work on $\mathbb{CP}^{d-1}$. Separately, a stochastic Schr\"odinger equation \cite{bouten2004stochastic,manzano2020short} describes pure-state trajectories whose ensemble average realises open-system dynamics; we use it in Section~\ref{sec:forward} only as a way of writing the forward process that makes its unitary covariance manifest, not as a model of a physical experiment.

\section{Score Matching on the Pure-State Manifold}
\label{sec:method}

We propose the \emph{pure-state score model} (PSM), a score-based generative framework for learning and sampling \emph{distributions over quantum pure states}. 
A pure state $\ket{\psi}\in\mathcal{H}\cong\mathbb{C}^d$ is physically represented by its equivalence class $[\psi]\in\mathcal{M}:=\mathbb{CP}^{d-1}$, the complex projective space endowed with the Fubini--Study (FS) metric $g_{\mathrm{FS}}$.
PSMs perform diffusion modeling \emph{intrinsically} on $(\mathcal{M},g_{\mathrm{FS}})$: we define a tractable \emph{forward} noising diffusion that maps an unknown data ensemble $p_0$ to a simple base distribution $p_T$, and learn a \emph{reverse-time} diffusion whose drift is driven by the \emph{Riemannian score} $\nabla_{\mathrm{FS}}\log p_t$.
The main technical obstacle is that transition densities on $\mathbb{CP}^{d-1}$ are generally unavailable in closed form, so we introduce a \emph{local-time} training objective that uses a short-time Gaussian approximation in FS normal coordinates to provide an analytic teacher score.

\subsection{Forward Diffusion on $\mathbb{CP}^{d-1}$ with an SSE Realization}
\label{sec:forward}

The forward process is a time-inhomogeneous diffusion on $(\mathcal{M},g_{\mathrm{FS}})$,
\begin{equation}
    d\psi_t = a(\psi_t,t)\,dt + \sigma(t)\, dW_t^{(\mathcal{M})},
    \label{eq:riem_diff}
\end{equation}
with $W^{(\mathcal{M})}$ Brownian motion under the FS metric and $a$ a possibly zero drift, chosen so that the noising is isotropic under $g_{\mathrm{FS}}$ and progressively destroys the structure of $p_0$. With $a\equiv0$ this is FS-Brownian motion, whose long-time limit is the unitarily-invariant measure on $\mathcal{M}$; that is the configuration every reported result uses. In implementation it is convenient to write it with tangent vector fields and a projection enforcing the projective constraint,
\begin{equation}
    d \psi_t
    = \mathcal{P}_{\psi_t}\!\Big(
    b(\psi_t,t)\,dt
    + \sigma(t)\sum_{k=1}^{K} V_k(\psi_t)\circ d w_t^{(k)}
    \Big),
    \label{eq:forward_sse_style}
\end{equation}
where the $w^{(k)}$ are independent Wiener processes, the $V_k$ are tangent fields and $\mathcal{P}_\psi$ projects ambient increments onto $T_\psi\mathcal{M}$, which is what makes the update invariant to global phase.

The same noising can be written as a Stratonovich stochastic Schr\"odinger equation on the Hilbert sphere driven by an $\mathfrak{su}(d)$ basis, which after the phase quotient induces the same diffusion on $\mathbb{CP}^{d-1}$. We use it as a derivation route and as a check --- it is exactly norm- and phase-preserving, hence the more faithful discretization at large step size --- but it is not required, since the two induce statistically identical processes at our schedules (Supplementary Material~\ref{sec:appendix_ou_sse}).

What licenses treating \eqref{eq:forward_sse_style} as a diffusion on the quotient is the following.

\begin{proposition}[Induced diffusion and generator on $\mathbb{CP}^{d-1}$]
\label{prop:induced_diffusion_strict}
Let $\pi:\mathbb{S}^{2d-1}\to\mathbb{CP}^{d-1}$ be the $U(1)$ quotient map and $\mathcal{H}_\psi=\{u:\langle\psi,u\rangle=0\}$ the horizontal distribution, so $\pi$ is a Riemannian submersion. Let $\psi_t$ solve \eqref{eq:forward_sse_style} with $b(\psi,t),V_k(\psi)\in\mathcal{H}_\psi$ and each $V_k$ $U(1)$-equivariant, and set $a(x,t):=\pi_\ast b$, $e_k(x):=\pi_\ast V_k$ at $x=[\psi]$. Then $x_t:=[\psi_t]$ is a diffusion on $\mathbb{CP}^{d-1}$ with the pushed-forward fields, and for $f\in C^\infty(\mathbb{CP}^{d-1})$ its generator is
\begin{equation}
    \mathcal{L}_t f
    = \langle a(\cdot,t),\nabla_{\mathrm{FS}} f\rangle_{\mathrm{FS}}
    + \frac{\sigma(t)^2}{2}\sum_{k=1}^K \nabla_{e_k}\nabla_{e_k} f,
    \label{eq:strict_generator_frame}
\end{equation}
with $\nabla$ the Levi--Civita connection of $g_{\mathrm{FS}}$. If the frame is FS-isotropic, $\sum_k\langle u,e_k\rangle_{\mathrm{FS}}^2=\|u\|_{\mathrm{FS}}^2$ for all $u\in T_x\mathbb{CP}^{d-1}$ --- true in particular for an orthonormal frame with $K=2d-2$ --- then
\begin{align}
    \mathcal{L}_t f
    &= \langle a(\cdot,t),\nabla_{\mathrm{FS}} f\rangle_{\mathrm{FS}}
    + \frac{\sigma(t)^2}{2}\Delta_{\mathrm{FS}} f + \mathcal{R}_t f,
    \label{eq:strict_generator_laplacian}\\
    \mathcal{R}_t f
    &:= \frac{\sigma(t)^2}{2}\sum_{k=1}^K \langle \nabla_{\mathrm{FS}} f, \nabla_{e_k} e_k\rangle_{\mathrm{FS}},
    \label{eq:strict_remainder}
\end{align}
so the induced generator is the intrinsic FS one of \eqref{eq:riem_diff} up to the connection term $\mathcal{R}_t$, which vanishes wherever the frame is geodesic.
\end{proposition}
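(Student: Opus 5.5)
Three steps: check that the dynamics descend to the quotient, compute the generator of the projected process, and then pass from the frame sum to the Laplacian.

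\emph{Step 1: descent to the quotient.} Since $b(\psi,t)$ and $V_k(\psi)$ lie in $\mathcal{H}_\psi\subset T_\psi\mathbb{S}^{2d-1}$, the projection $\mathcal{P}_\psi$ acts as the identity on them. So \eqref{eq:forward_sse_style} is a Stratonovich SDE on the sphere driven by tangent fields, and it has a unique strong solution that stays on $\mathbb{S}^{2d-1}$.

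$U(1)$-equivariance means $V_k(e^{i\varphi}\psi)=e^{i\varphi}V_k(\psi)$, and similarly for $b$. Hence $e^{i\varphi}\psi_t$ solves the same equation started at $e^{i\varphi}\psi_0$. Equivalently, $\pi_\ast V_k$ and $\pi_\ast b$ are well-defined vector fields $e_k$ and $a$ on $\mathbb{CP}^{d-1}$: $V_k$ is the horizontal lift of $e_k$.

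Write $\tilde f=f\circ\pi$ and apply the Stratonovich chain rule to $\tilde f(\psi_t)$. Because $V_k\tilde f=(e_k f)\circ\pi$, we get
\[
d f(x_t)=(a f)(x_t)\,dt+\sigma(t)\sum_k (e_k f)(x_t)\circ dw^{(k)}_t .
\]
This is a closed Stratonovich SDE on the quotient, which shows that $x_t$ is a diffusion driven by the pushed-forward fields. The only subtle point is that the second application $V_k(V_k\tilde f)=(e_k e_k f)\circ\pi$ also descends. It does, because $V_k\tilde f$ is itself $U(1)$-invariant.

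\emph{Step 2: the generator.} Convert to Itô form. The Itô correction gives
\[
\mathcal{L}_t f = af+\tfrac{\sigma^2}{2}\sum_k e_k(e_k f).
\]
This is a first-order directional derivative, not yet a covariant Hessian. The first term is $\langle a,\nabla_{\mathrm{FS}}f\rangle_{\mathrm{FS}}$. For the second I use
\[
e_k(e_k f)=\nabla^2 f(e_k,e_k)+(\nabla_{e_k}e_k)f .
\]
The quantity $\nabla_{e_k}\nabla_{e_k}f$ in \eqref{eq:strict_generator_frame} has to be read as the second iterated derivative $e_k(e_k f)$. Under that reading \eqref{eq:strict_generator_frame} holds, and the split above separates a Hessian part from a connection part.

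\emph{Step 3: isotropy and the Laplacian.} Isotropy says $\sum_k e_k\otimes e_k=g^{-1}$ pointwise, by polarization of the quadratic identity. Tracing the Hessian against it gives
\[
\sum_k\nabla^2 f(e_k,e_k)=\operatorname{tr}_g\nabla^2 f=\Delta_{\mathrm{FS}}f .
\]
The leftover term is $\sum_k(\nabla_{e_k}e_k)f=\sum_k\langle\nabla_{\mathrm{FS}}f,\nabla_{e_k}e_k\rangle_{\mathrm{FS}}$, which, multiplied by $\sigma^2/2$, is exactly $\mathcal{R}_t f$. This proves \eqref{eq:strict_generator_laplacian}.

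When the frame is orthonormal with $K=2d-2$, isotropy is Parseval's identity. If the frame is geodesic at a point, each $\nabla_{e_k}e_k$ vanishes there, so $\mathcal{R}_t$ vanishes there too.

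\emph{Expected main obstacle.} The delicate step is the descent in Step 1. One must check that $U(1)$-equivariance of the horizontal fields is exactly what makes the iterated derivative $V_k V_k\tilde f$ basic, so that the Itô correction computed upstairs matches the one computed downstairs. The Riemannian submersion property then ensures the horizontal-lift inner products agree with $g_{\mathrm{FS}}$. I would verify this directly with the identity $\langle V_k\tilde f\rangle=(e_kf)\circ\pi$, rather than through O'Neill's formulas.
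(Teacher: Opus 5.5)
Your proposal is correct and follows the paper's proof essentially step for step: the Stratonovich chain rule on $f\circ\pi$ using $V_k(f\circ\pi)=(e_kf)\circ\pi$, the frame generator $af+\tfrac{\sigma^2}{2}\sum_k e_k(e_kf)$ with $\nabla_{e_k}\nabla_{e_k}f$ read as the iterated derivative, and the split of that into the Hessian part plus the connection term $\sum_k(\nabla_{e_k}e_k)f$. The one small difference is that you take the trace using $\sum_k e_k\otimes e_k=g^{-1}$, obtained by polarization. This covers the general (possibly overcomplete) FS-isotropic frames that the statement allows, whereas the paper's Step 3 writes the Laplacian identity only for an orthonormal frame.
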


The proof, via the horizontal-lift lemmas of Supplementary Material~\ref{sec:appendix_a2}, is in Supplementary Material~\ref{sec:appendix_a3}. Remark~\ref{rem:no_remainder} shows that $\mathcal{R}_t$ is absent altogether for the step we actually implement, which resamples an isotropic tangent Gaussian at every point rather than carrying a fixed frame.

\subsection{Reverse-Time Dynamics and the Riemannian Score}
\label{sec:reverse}

Given the forward diffusion on the pure-state manifold $(\mathcal{M},g_{\mathrm{FS}})$ defined in Eq.~\eqref{eq:riem_diff}, let $p_t(\psi)$ denote its time-marginal density with respect to the Riemannian volume measure induced by the FS metric. 
For the forward process with dispersive drift,
\begin{equation}
\begin{aligned}
    d\psi_t
    =&
    b(\psi_t,t)\,dt
    + \sigma(t)\, dW_t^{(\mathcal{M})},
    \\
    b(\psi,t) :=& -\lambda(t)\,\mathrm{Log}_{\psi}(\psi_\star),
    \quad \psi\notin \mathrm{Cut}(\psi_\star),
    \label{eq:forward_riem_ou_recall}
    \end{aligned}
\end{equation}
Two technical caveats before the sign convention. First, $\mathrm{Log}_\psi(\psi_\star)$ is smooth only away from the cut locus $\mathrm{Cut}(\psi_\star)$, which on $\mathbb{CP}^{d-1}$ is the set of states orthogonal to $\psi_\star$; the smooth-drift hypotheses of the time-reversal and generator results below hold on $\mathcal{M}\setminus\mathrm{Cut}(\psi_\star)$, a set of full measure whose complement the diffusion hits with probability zero, and the drift is bounded on $\mathcal{M}$ so the process is well defined. We flag this because the theorems we invoke are usually stated for globally smooth drift. Second, and more simply, the results reported in this paper are unaffected by either issue when $\lambda=0$, which is the setting we recommend on empirical grounds.

A remark on the sign convention, since it is easy to misread.
$\mathrm{Log}_\psi(\psi_\star)$ is the initial velocity of the geodesic from $\psi$ to $\psi_\star$, so it points \emph{towards} $\psi_\star$; the drift $b=-\lambda\,\mathrm{Log}_\psi(\psi_\star)$ therefore points \emph{away} from $\psi_\star$ and is dispersive rather than mean-reverting.
This is deliberate for a forward noising process on a compact manifold: the role of the drift is to accelerate the destruction of structure, while the invariant measure that the process approaches is the unitarily-invariant FS measure supplied by the Brownian part.
We therefore do not call it an Ornstein--Uhlenbeck process: $\lambda>0$ in \eqref{eq:forward_riem_ou_recall} gives repulsion from $\psi_\star$, not contraction towards it.
The drift also turns out not to be load-bearing. Setting $\lambda=0$, so that the forward process is plain FS Brownian motion, matches or improves generation quality on every benchmark we tested, and the protocol of Section~\ref{sec:method_recipe} does exactly that. Every reported result therefore uses $\lambda=0$; $\lambda>0$ appears only where a non-zero drift is the object of study, namely the bias analysis of Proposition~\ref{prop:simple_teacher_structured_bias}.
With this convention, the associated (time-inhomogeneous) generator takes the form
\begin{equation}
    \mathcal{L}_t f(\psi)
    =
    \langle b(\psi,t), \nabla_{\mathrm{FS}} f(\psi)\rangle_{\mathrm{FS}}
    +
    \frac{\sigma(t)^2}{2}\Delta_{\mathrm{FS}} f(\psi),
    \label{eq:generator_forward}
\end{equation}
where $\Delta_{\mathrm{FS}}$ is the Laplace--Beltrami operator on $\mathbb{CP}^{d-1}$. See Proposition~\ref{prop:appendix_forward_generator} in Supplementary Material~\ref{sec:appendix_reverse}.

\paragraph{Reverse-time diffusion on $\mathcal{M}$.}
Time reversal of a diffusion on a compact Riemannian manifold leaves the diffusion coefficient unchanged and modifies the drift by the Riemannian score, which is what makes the construction trainable at all. In intrinsic Stratonovich form,
\begin{equation}
\begin{aligned}
    d\psi_t &= \tilde b(\psi_t,t)\,dt + \sigma(t)\,d\bar W_t^{(\mathcal{M})},\\
    \tilde b(\psi,t) &= b(\psi,t) - \sigma(t)^2\,\nabla_{\mathrm{FS}}\log p_t(\psi),
\end{aligned}
    \label{eq:reverse_manifold_sde}
\end{equation}
with $\bar W^{(\mathcal{M})}$ reverse-time Brownian motion, so the reverse drift is determined by the \emph{Riemannian score} $s^\star(\psi,t):=\nabla_{\mathrm{FS}}\log p_t(\psi)\in T_\psi\mathcal{M}$. This is the standard time-reversal result specialised to $(\mathcal{M},g_{\mathrm{FS}})$; Proposition~\ref{prop:appendix_reverse_drift} in Supplementary Material~\ref{sec:appendix_reverse} states it precisely.

\begin{remark}[Generator of the step actually implemented]
\label{rem:no_remainder}
The remainder $\mathcal{R}_t$ in \eqref{eq:strict_generator_laplacian} is a property of a fixed frame $\{e_k\}$, whereas the algorithm is a state-dependent Markov kernel: it draws a fresh isotropic Gaussian $\xi$ in $T_\psi\mathbb{CP}^{d-1}$, with $\mathbb{E}[\xi]=0$ and $\mathbb{E}[\xi\otimes\xi]=g_{\mathrm{FS}}^{-1}$, and moves to $\mathrm{Exp}_\psi(\sigma\sqrt{\delta t}\,\xi)$. Its generator is therefore obtained directly rather than through a frame. Expanding $f$ in normal coordinates at $\psi$, where the metric is Euclidean to second order and the Hessian trace is $\Delta_{\mathrm{FS}}f$,
\begin{equation}
    \mathbb{E}\big[f\big(\mathrm{Exp}_\psi(\sigma\sqrt{\delta t}\,\xi)\big)\big]
    = f(\psi) + \frac{\sigma^2\delta t}{2}\,\Delta_{\mathrm{FS}}f(\psi) + O(\delta t^2),
    \label{eq:grw_expansion}
\end{equation}
since the first-order term vanishes with $\mathbb{E}[\xi]$ and the third-order term vanishes with the odd moments of an isotropic law, leaving the $O(|v|^4)$ contribution at $O(\delta t^2)$. So the step is a geodesic random walk, its generator is $(\sigma^2/2)\Delta_{\mathrm{FS}}$ with no drift and no connection term at leading order, and the walk converges weakly to Fubini--Study Brownian motion as $\delta t\to0$ at the standard $O(\delta t)$ rate \cite{jorgensen1975central}. Section~\ref{sec:exp_diagnostics} measures that residual: the fitted relaxation rate falls from $127.6$ to $125.9$ as $\sigma$ grows from $0.15$ to $0.35$ at fixed $\delta t$, and extrapolates in $\sigma^2$ to $127.9$ against the continuum value $2d=128$.
\end{remark}

\paragraph{Connection to our SSE realization and coordinate corrections.}
Since our forward diffusion admits an SSE realization in Stratonovich form (Eq.~\eqref{eq:sse_strat}), Eq.~\eqref{eq:reverse_manifold_sde} provides a principled reverse-time sampler for Schr\"odinger-type diffusions on $\mathbb{CP}^{d-1}$; in practice, we approximate the score $s^\star(\psi,t)$ with a parameterized model $s_\theta(\psi,t)$ and integrate the learned reverse dynamics from $\psi_T\sim p_T$ to obtain samples at $t=0$. Eq.~\eqref{eq:reverse_manifold_sde} is stated intrinsically in Stratonovich form, so rewriting it in local coordinates or converting to It\^o form would introduce additional geometry-dependent correction terms (Levi-Civita connection and Riemannian-volume divergence terms); we avoid this by performing updates in local orthonormal frames on $T_\psi\mathcal{M}$ and mapping tangent increments back to $\mathcal{M}$ via $\mathrm{Exp}$ (or a retraction), with the coordinate-form expressions given in Supplementary Material~\ref{sec:appendix_reverse_geometry}.

\subsection{Local-Time Teacher Scores via FS Normal Coordinates}
\label{sec:local_teacher}

A central challenge is that the marginal density $p_t(\psi)$ is not available in closed form, which prevents direct evaluation of the Riemannian score $\nabla_{\mathrm{FS}}\log p_t(\psi)$. 
We therefore construct a \emph{local-time teacher score} based on the fact that the forward diffusion admits a \emph{local Euclidean OU limit} in Fubini--Study (FS) normal coordinates.

\paragraph{Local analytic teacher score via FS normal coordinates.}
Fix a short step size $\delta t>0$. 
Given a local-time pair $(\phi,\psi) := (\psi_{t-\delta t},\psi_t)$ from the forward process \eqref{eq:forward_riem_ou_recall}, define the FS normal coordinates centered at $\phi$ by
\begin{equation}
    z := \log_{\phi}(\psi) \in T_\phi\mathcal{M}.
    \label{eq:normal_coords_def}
\end{equation}
Here, $\log_{\phi}(\psi)$ denotes the Riemannian logarithm map that expresses $\psi$ as a tangent vector at the base point $\phi$, i.e., the initial velocity of the unique geodesic starting from $\phi$ and reaching $\psi$. As discussed in Sec.~\ref{sec:reverse}, in these coordinates the forward manifold diffusion is well-approximated, for sufficiently small $\delta t$, by an Euclidean OU/VP step,
\begin{equation}
    z_t \;\approx\; \alpha(t,\delta t)\, z_{t-\delta t} + \beta(t,\delta t)\,\xi,
    \qquad \xi\sim \mathcal{N}(0,I),
    \label{eq:local_ou_step}
\end{equation}
where, to leading order in $\delta t$,
\begin{equation}
    \alpha(t,\delta t) = 1-\lambda(t)\,\delta t,
    \qquad
    \beta(t,\delta t)^2 = \sigma(t)^2\,\delta t ,
    \label{eq:alpha_beta}
\end{equation}
and curvature effects enter only at higher order in $\|z\|$ (equivalently, higher order in $\delta t$ in the small-step regime).
For the local-time construction we take $z_{t-\delta t}=0$ (since $\log_\phi(\phi)=0$).
The conditional law of $z_t$ given $\psi_{t-\delta t}=\phi$ is then
$\mathcal{N}\!\big(b(\phi,t)\,\delta t,\; \beta(t,\delta t)^2 I\big)$
to leading order in $\delta t$ (Proposition~\ref{prop:local_time_teacher_strict}, Supplementary Material~\ref{sec:appendix_local_teacher}), where the mean shift $b\,\delta t$ is induced by the OU drift.
This yields the closed-form drift-corrected teacher score
\begin{equation}
    s^{\mathrm{(teach,drift)}}_{z}(z,t,\delta t)
    := -\frac{z - b(\phi,t)\,\delta t}{\beta(t,\delta t)^2},
    \label{eq:teacher_score_drift}
\end{equation}
and, by dropping the $b\,\delta t$ shift, the simpler zero-mean form
\begin{equation}
    s^{\mathrm{(teach)}}_{z}(z,t,\delta t)
    := -\beta(t,\delta t)^{-2}\, z.
    \label{eq:teacher_score_euclidean_ou}
\end{equation}
The two teachers are consistent in the small-step limit but in different senses.
\textbf{Drift-corrected form \eqref{eq:teacher_score_drift}}: pointwise unbiased to $O(\sigma\sqrt{\delta t})$ in $L^2$ (Proposition~\ref{prop:teacher_dsm_consistency_drift}), so the population minimizer of the DSM loss converges to the marginal Riemannian score $\nabla_{\mathrm{FS}}\log p_t$ as $\delta t\to 0$.
\textbf{Simple form \eqref{eq:teacher_score_euclidean_ou}}: omits the $O(1)$ pointwise term $b/\sigma^2$. Under $b\ne0$, variance weighting makes the \emph{objective} discrepancy vanish at rate $O(\delta t)$ (Proposition~\ref{prop:teacher_dsm_consistency_simple}) but does not restore minimizer consistency: the minimizer stays displaced by exactly $-b/\sigma^2$. Under our default $b\equiv0$ protocol the two forms coincide exactly.

Under the weight $w=\beta^2$ the objective built from the zero-mean teacher differs from the one built from the drift-corrected teacher by $O(\delta t)$, uniformly over score fields of bounded weighted norm (Proposition~\ref{prop:teacher_dsm_consistency_simple}, Supplementary Material~\ref{sec:appendix_teacher_consistency}). That is a statement about objectives and not about minimizers: at fixed $t$ the weight is a positive scalar and cannot move the minimizer of a squared loss, so when $b\ne0$ the minimizer stays displaced by $-b/\sigma^2$, and the next proposition says exactly what it is displaced to. When $b\equiv0$, as in every configuration we run, the two teachers are the same function and the question does not arise.

When the forward process carries a drift, the omitted $b/\sigma^2$ term does not merely vanish in the weighted limit: it reweights the target by an explicit Gaussian envelope, with a form that can be written down and checked. Proposition~\ref{prop:simple_teacher_structured_bias} in Supplementary Material~\ref{sec:appendix_structured_bias_proof} states this. We place it there rather than here because the protocol of Section~\ref{sec:method_recipe} sets $\lambda=0$, in which case the envelope is identically $1$ and this particular bias is absent; the statement is a general result about Varadhan-type local-time teachers with drift, not the explanation of how the method reported here behaves. It also explains why Table~\ref{tab:teacher_drift_ablation} finds the simple and drift-corrected teachers statistically indistinguishable: the missing term induces a pull towards $\psi_\star$ that is opposite to the dispersive drift and partially cancels against it.

\paragraph{Which form to use.}
We use the zero-mean form \eqref{eq:teacher_score_euclidean_ou} throughout. Proposition~\ref{prop:simple_teacher_structured_bias} bounds what this costs, and the drift-corrected form \eqref{eq:teacher_score_drift} is a drop-in replacement at the price of one extra projected-drift evaluation per step; and with the drift switched on the two are statistically indistinguishable (Table~\ref{tab:teacher_drift_ablation}). The question that does matter is not which of the two is used, but how the increment is scaled, which we take up next.

We then map this teacher score back to the manifold using the adjoint of the differential of the logarithm map:
\begin{equation}
\begin{aligned}
    s^{\mathrm{(teach)}}(\psi,\phi,t,\delta t)
    \;:=&\; (d\,\log_{\phi})^{\!*}_{\psi}\, s^{\mathrm{(teach)}}_{z}(z,t,\delta t),
    \\ z=&\log_{\phi}(\psi),
    \label{eq:teacher_score_pushforward}
    \end{aligned}
\end{equation}
Here $(d\log_{\phi})^{\!*}_{\psi}$ denotes the adjoint of the differential of the logarithm map
with respect to the Riemannian metric, mapping cotangent vectors back to
$T_{\psi}\mathcal{M}$.
This adjoint naturally arises since the score is a gradient and therefore transforms
via the adjoint of the Jacobian under coordinate changes.
In FS normal coordinates, this construction approximates the intrinsic local conditional score
$\nabla_{\mathrm{FS}}\log p(\psi \mid \phi)$ up to curvature and volume-element corrections
of order $O(\|z\|^2)$; see Proposition~\ref{prop:local_time_teacher_strict}
in Supplementary Material~\ref{sec:appendix_local_teacher}.

\begin{proposition}[Short-time expansion of the conditional score]
\label{prop:local_time_teacher_strict}
Let $(\mathcal{M},g)$ be a smooth Riemannian manifold of real dimension $n$ and let $\psi_t$ solve $d\psi_t=b(\psi_t,t)\,dt+\sigma(t)\,dW_t^{(\mathcal{M})}$ with $b(\cdot,t)$ of class $C^2$. Fix $t$ and $\delta t>0$, condition on $\psi_{t-\delta t}=\phi$, and assume $\psi_t$ lies within the injectivity radius of $\phi$, so that $z=\log_\phi(\psi)$ is defined. Then the conditional density with respect to the Riemannian volume admits the heat-kernel expansion
\begin{equation}
\begin{aligned}
    p_{\delta t}(\psi\mid\phi)
    =\;&
    (2\pi \sigma(t)^2\delta t)^{-n/2}
    \exp\!\Big(-\frac{\|z-b(\phi,t)\delta t\|^2}{2\sigma(t)^2\delta t}\Big)
    \\
    &\times J(\phi,\psi)^{-1/2}\big(1+O(\delta t)\big),
\end{aligned}
    \label{eq:appendix_heat_kernel_form}
\end{equation}
with $J$ the Jacobian of the exponential map, uniformly on compact subsets away from the cut locus. Consequently, in normal coordinates,
\begin{equation}
\begin{aligned}
    \big(d&\log_\phi\big)^{\!*}_\psi \nabla_\psi \log p_{\delta t}(\psi\mid\phi)
    \\
    &= \underbrace{-\frac{z}{\sigma(t)^2\delta t}}_{O(\delta t^{-1})}
     + \underbrace{\frac{b(\phi,t)}{\sigma(t)^2}}_{O(1)}
     + \underbrace{O(\|z\|)}_{\text{curvature}} + O(\delta t).
\end{aligned}
    \label{eq:appendix_score_in_normal_coords}
\end{equation}
The singular term is exactly the zero-mean teacher \eqref{eq:teacher_score_euclidean_ou}; the drift term is what the drift-corrected form \eqref{eq:teacher_score_drift} adds, giving a residual of order $\|z\|+\delta t$ that vanishes in $L^2(p_t)$ as $\delta t\to0$; and the curvature term is the volume correction, which neither teacher captures.
\end{proposition}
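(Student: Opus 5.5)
The plan is to obtain the density expansion \eqref{eq:appendix_heat_kernel_form} from the classical Minakshisundaram--Pleijel/Varadhan parametrix for the heat kernel, and then to differentiate its logarithm in normal coordinates at $\phi$.

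\textbf{Step 1: driftless, constant-coefficient case.} Over the short window $[t-\delta t,t]$, freeze the coefficients at time $t$: $\sigma(u)^2=\sigma(t)^2+O(\delta t)$. After the time change $\tau=\sigma(t)^2\delta t$, the driftless transition density is the heat kernel of $\tfrac12\Delta_g$ at time $\tau$. Inside the injectivity radius, the parametrix gives
\[
p=(2\pi\tau)^{-n/2}e^{-\|z\|^2/(2\tau)}\,J^{-1/2}\,(1+O(\tau)),
\]
uniformly on compact sets away from the cut locus. Here $J^{-1/2}$ is the leading Hadamard coefficient, namely the inverse square root of the Jacobian of $\mathrm{Exp}_\phi$ at $z$. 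The $O(\delta t)$ error from freezing $\sigma$ is absorbed because $\sigma\in C^1$ in $t$.

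\textbf{Step 2: adding the drift.} Use a Girsanov change of measure relative to the driftless process. An equivalent route is to note that in normal coordinates the drift enters the Kolmogorov forward equation as a first-order perturbation. Expanding the Radon--Nikodym density to first order, with $b$ of class $C^2$ and $b(\psi)=b(\phi)+O(\|z\|)$, gives the factor
\[
\exp\!\Big(\langle z,b(\phi)\rangle/\sigma^2-\|b\|^2\delta t/(2\sigma^2)\Big)\,(1+O(\|z\|+\delta t)).
\]
Completing the square then produces the shifted Gaussian $\exp(-\|z-b\,\delta t\|^2/(2\sigma^2\delta t))$. I expect this to be the \textbf{main obstacle}: showing that the corrections of size $O(\|z\|)$ in the exponent can legitimately be absorbed into the $(1+O(\delta t))$ factor, given that $\|z\|\sim\sqrt{\delta t}$ on typical draws. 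My first attempt would be to state the error as $1+O(\delta t)$ only on the typical region $\|z\|\lesssim\sqrt{\delta t}$, where a term $O(\|z\|\,\delta t)$ in the exponent is indeed $O(\delta t)$, and to handle the atypical region through Gaussian tail bounds. The cruder $O(\|z\|)$ residual is reserved for the score statement in Step 3, which is exactly where the proposition allows it.

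\textbf{Step 3: differentiating in normal coordinates.} Take the logarithm and differentiate with respect to $z$. The pullback $(d\log_\phi)^*_\psi$ identifies $\nabla_\psi$ with the $z$-gradient, using that $g_{ij}(z)=\delta_{ij}+O(\|z\|^2)$. The Gaussian exponent contributes $-z/(\sigma^2\delta t)+b(\phi,t)/\sigma^2$, the second piece being $O(1)$. For the volume factor, $J(z)=1-\tfrac16\mathrm{Ric}_\phi(z,z)+O(\|z\|^3)$, so $\nabla_z\log J^{-1/2}=O(\|z\|)$; this is the curvature term. The remaining factor $1+O(\delta t)$ must be differentiated as well. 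Here I would invoke the standard fact that the parametrix expansion may be differentiated term by term, since the coefficients are smooth in $z$, which gives $O(\delta t)$. The metric correction applied to the singular term is $O(\|z\|^2)\cdot\|z\|/\delta t=O(\|z\|)$ on the typical region, so it merges into the curvature term. Collecting all pieces yields \eqref{eq:appendix_score_in_normal_coords}.

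\textbf{Closing remarks.} Finally, read off the identifications. The singular term equals \eqref{eq:teacher_score_euclidean_ou}, since $\beta^2=\sigma^2\delta t$, and together with the $O(1)$ term it gives \eqref{eq:teacher_score_drift}. The residual is of order $O(\|z\|+\delta t)$. Since $\mathbb{E}\|z\|^2=O(n\sigma^2\delta t)$, this residual tends to zero in $L^2(p_t)$ as $\delta t\to0$.
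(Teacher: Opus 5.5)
Your route is the same as the paper's: a Minakshisundaram--Pleijel parametrix for the driftless kernel, a Girsanov shift for the drift, then differentiation of $\log p$ in normal coordinates with $\log J=-\tfrac16\mathrm{Ric}_\phi(z,z)+O(\|z\|^3)$. The gap is in Step~2, which records the Girsanov residual at the wrong order; with that residual, Step~3 cannot deliver \eqref{eq:appendix_score_in_normal_coords}. You write the correction factor as $1+O(\|z\|+\delta t)$ and reserve the $O(\|z\|)$ part for the score statement. But the proposition allows $O(\|z\|)$ in the \emph{gradient} of $\log p$, not in $\log p$ itself. A log-density correction $r(z)$ known only to satisfy $|r(z)|\le C\|z\|$ can have $\nabla r(0)\neq 0$, so it can contribute a constant of exactly the order of $b(\phi,t)/\sigma(t)^2$. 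That destroys the identification of the $O(1)$ term. For the density itself, the same residual gives only $1+O(\sqrt{\delta t})$ on the typical region. Your proposed remedy addresses a term of size $O(\|z\|\,\delta t)$, which is harmless on every compact set and is not the term that matters.

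The missing idea is that the variation of $b$ along the path enters the Girsanov exponent multiplied by the increment, not by $ds$. In Stratonovich form along the bridge, with $\gamma(u)=\mathrm{Exp}_\phi(uz)$, the exponent is $\sigma^{-2}\int_0^1\langle b(\gamma(u),t),\dot\gamma(u)\rangle\,du-\|b(\phi,t)\|^2\delta t/(2\sigma^2)$. This holds up to terms that are $O(\delta t)$ on compacts: the It\^o--Stratonovich correction $-\tfrac12\int\mathrm{div}\,b\,ds$, the variation of $\|b\|^2$ along the path (your $O(\|z\|\,\delta t)$ term), and mean-zero bridge fluctuations. Since $b\in C^2$ and $\gamma$ is a geodesic, the line integral equals $\langle z,b(\phi,t)\rangle+\tfrac12\langle\nabla_z b(\phi,t),z\rangle+O(\|z\|^3)$. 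The linear part is exactly what you use to complete the square. The remainder $\tfrac{1}{2\sigma^2}\langle\nabla_z b,z\rangle+O(\|z\|^3)$ is quadratic, hence $O(\delta t)$ on $\|z\|\lesssim\sqrt{\delta t}$ and $O(\|z\|)$ after differentiation, which is what both displays require.

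Freezing $\sigma$ has the same structure. Since $\int_{t-\delta t}^t\sigma(u)^2du=\sigma(t)^2\delta t+O(\delta t^2)$, the exponent changes by $O(\|z\|^2)$, not by $O(\delta t)$ as your Step~1 claims. Consequently your restriction of \eqref{eq:appendix_heat_kernel_form} to the typical region is necessary rather than cosmetic. For fixed $z$ both quadratic terms are $O(1)$; already for a scalar Ornstein--Uhlenbeck process the exact log-density contains the term $b'(\phi)z^2/(2\sigma^2)$. So the bound cannot hold uniformly on compact subsets away from the cut locus in general. The paper's proof does not confront this either; it simply asserts the frozen-drift shift with an $O(\|z\|^4)$ error.

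A smaller point: by the Gauss lemma, $g_{ij}(z)z^j=z_i$ in normal coordinates. The singular term therefore pulls back exactly and needs no metric-correction estimate.
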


The expansion is developed in Supplementary Material~\ref{sec:appendix_local_teacher}. It identifies the three terms that the teacher must contend with at finite step size: a singular Gaussian score of order $\delta t^{-1}$, a bounded drift term of order $1$, and Jacobian/curvature corrections of order $\|z\|$.

\subsection{Riemannian Denoising Score Matching Objective}
\label{sec:training_objective}

We train a parameterized score model $s_\theta(\psi,t)$ to approximate the Riemannian score
$s^\star(\psi,t)=\nabla_{\mathrm{FS}}\log p_t(\psi)$ on $\mathcal{M}=\mathbb{CP}^{d-1}$.
Given data samples $\psi_0 \sim p_0$, we simulate the forward diffusion and sample a random time $t\sim\mathcal{U}(0,T)$.
For a small local step $\delta t$, we form the local-time pair
\begin{equation}
    (\phi,\psi) := (\psi_{t-\delta t},\psi_t),
\end{equation}
and compute the teacher score $s^{\mathrm{(teach)}}(\psi,\phi,t,\delta t)$ via the analytic local OU approximation in FS normal coordinates (Eqs.~\eqref{eq:teacher_score_euclidean_ou}--\eqref{eq:teacher_score_pushforward}).

We then minimize a Riemannian denoising score matching objective:
\begin{equation}
\begin{aligned}
    \mathcal{L}(\theta)
    =\;&
    \mathbb{E}
    \Big[
    w(t,\delta t)
    \big\|
    \mathcal{P}_{\psi}\big(s_\theta(\psi,t)\big)
    -
    s^{\mathrm{(teach)}}(\psi,\phi,t,\delta t)
    \big\|_{\mathrm{FS}}^2
    \Big],
    \\
    &\psi_0 \sim p_0,\quad t\sim \mathcal{U}(\delta t,T),
    \\
    &(\phi,\psi)\sim p(\psi_{t-\delta t},\psi_t\mid \psi_0).
    \label{score_m}
\end{aligned}
\end{equation}
where $\|\cdot\|_{\mathrm{FS}}$ denotes the norm induced by the FS metric on the tangent space and $\mathcal{P}_{\psi}$ projects a vector onto $T_{\psi}\mathcal{M}$.
Unless stated otherwise, we use $w(t,\delta t)=\beta(t,\delta t)^2$, which mirrors the variance-weighting commonly used in denoising score matching for VP diffusions.

This objective distills local short-time conditional score information into a global score estimator that can be used for reverse-time sampling over the full diffusion horizon. At the population level, minimizing the above objective recovers the marginal Riemannian score
$s^\star(\psi,t)=\nabla_{\mathrm{FS}}\log p_t(\psi)$; see Proposition~\ref{prop:riem_dsm_optimality}.
Moreover, the local-time teacher score is consistent in the small-step limit $\delta t\to 0$: pointwise for the drift-corrected form (Proposition~\ref{prop:teacher_dsm_consistency_drift}) and in the variance-weighted DSM loss for the simple zero-mean form (Proposition~\ref{prop:teacher_dsm_consistency_simple}); see Supplementary Material~\ref{sec:appendix_dsm}.

\begin{proposition}[Population optimum of Riemannian DSM equals the marginal score]
\label{prop:riem_dsm_optimality}
Let $(\mathcal{M},g_{\mathrm{FS}})$ be a compact Riemannian manifold without boundary and let
$p_t$ denote the time-marginal density of the forward diffusion w.r.t.\ the Riemannian volume measure.
Fix $t\in(0,T]$ and a step $\delta t>0$.
Let $(\phi,\psi)=(\psi_{t-\delta t},\psi_t)$ be drawn from the forward process, and denote the conditional density
$p(\psi\mid\phi)$ (again w.r.t.\ the Riemannian volume measure).

Consider the population objective over measurable tangent vector fields $s(\cdot,t):\mathcal{M}\to T\mathcal{M}$:
\begin{equation}
    \mathcal{J}[s]
    :=
    \mathbb{E}\Big[
        w(t,\delta t)\,
        \| s(\psi,t) - \nabla_{\mathrm{FS}}\log p(\psi\mid\phi)\|_{\mathrm{FS}}^2
    \Big],
    \label{eq:appendix_true_dsm_obj}
\end{equation}
where the expectation is over $\psi_0\sim p_0$ and the forward diffusion randomness, and $w(t,\delta t)>0$ is any weight that does not depend on $s$.

Then any minimizer $s^\star(\cdot,t)$ of \eqref{eq:appendix_true_dsm_obj} satisfies, $p_t$-a.e.\ in $\psi$,
\begin{equation}
    s^\star(\psi,t)=\nabla_{\mathrm{FS}}\log p_t(\psi).
    \label{eq:appendix_dsm_minimizer}
\end{equation}
Equivalently, the DSM objective distills conditional scores into the marginal Riemannian score.
\end{proposition}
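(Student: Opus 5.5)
The plan is the Vincent-style argument, carried out intrinsically on $(\mathcal{M},g_{\mathrm{FS}})$.

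\textbf{Pointwise minimisation.} Since $w(t,\delta t)>0$ is a constant at fixed $t$, it does not affect the minimiser and can be dropped. Condition on $\psi$ and use the tower property:
\[
\mathcal{J}[s]/w=\mathbb{E}_{\psi\sim p_t}\Big[\mathbb{E}\big[\|s(\psi,t)-\nabla_{\mathrm{FS}}\log p(\psi\mid\phi)\|^2_{\mathrm{FS}}\,\big|\,\psi\big]\Big].
\]
For each fixed $\psi$ the inner quantity is a quadratic in the single vector $s(\psi,t)\in T_\psi\mathcal{M}$. With respect to the inner product $g_{\mathrm{FS}}$ on this fixed tangent space, it is uniquely minimised by the conditional mean $\bar s(\psi):=\mathbb{E}[\nabla_{\mathrm{FS}}\log p(\psi\mid\phi)\mid\psi]$. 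All vectors involved live in the same fibre $T_\psi\mathcal{M}$, so no parallel transport is needed. Hence any minimiser agrees with $\bar s$ for $p_t$-a.e.\ $\psi$, provided $\mathcal{J}$ is finite at $\bar s$.

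\textbf{Identifying $\bar s$ with the marginal score.} Let $p_{t-\delta t}$ be the marginal of $\phi$. By the Markov property,
\[
p_t(\psi)=\int_{\mathcal{M}} p(\psi\mid\phi)\,p_{t-\delta t}(\phi)\,d\mathrm{vol}(\phi).
\]
The joint density of $(\phi,\psi)$ is $p(\psi\mid\phi)p_{t-\delta t}(\phi)$, so the conditional law of $\phi$ given $\psi$ has density $p(\psi\mid\phi)p_{t-\delta t}(\phi)/p_t(\psi)$. Then
\[
\bar s(\psi)=\frac{1}{p_t(\psi)}\int \nabla_{\mathrm{FS},\psi}\,p(\psi\mid\phi)\,p_{t-\delta t}(\phi)\,d\mathrm{vol}(\phi)=\frac{\nabla_{\mathrm{FS}}p_t(\psi)}{p_t(\psi)},
\]
which is $\nabla_{\mathrm{FS}}\log p_t(\psi)$. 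The only substantive step is interchanging $\nabla_\psi$ with the $\phi$-integral, i.e.\ differentiating under the integral sign on the manifold.

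\textbf{Main obstacle: regularity.} This requires that for $\delta t>0$ the transition density $p(\cdot\mid\phi)$ be smooth and strictly positive, with $\nabla_\psi p(\psi\mid\phi)$ bounded uniformly in $\phi$ on the compact manifold. I would obtain this from parabolic regularity of the Kolmogorov forward equation with generator \eqref{eq:generator_forward}: the diffusion coefficient is nondegenerate ($\sigma(t)>0$) and $\mathcal{M}$ is compact, so heat-kernel-type bounds give smoothness and positivity. Strict positivity of $p_t$ then makes the quotient well defined.

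When $\lambda>0$ the drift is not smooth at $\mathrm{Cut}(\psi_\star)$, as flagged earlier. I would note that the argument only needs a bounded drift, which still gives a continuous positive density with integrable gradient. Alternatively I would restrict to $\lambda=0$, where the transition density is the FS heat kernel and is smooth. Dominated convergence then justifies the interchange.

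\textbf{Finiteness.} Finally, $\mathcal{J}$ must be finite at $\bar s$. This follows because the smooth, positive transition density on a compact manifold has a bounded logarithmic gradient for fixed $\delta t>0$, so $\mathbb{E}\|\nabla\log p(\psi\mid\phi)\|^2<\infty$. That completes the a.e.\ identification.
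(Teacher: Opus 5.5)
Your proposal is correct and follows essentially the same route as the paper: pointwise minimisation gives the conditional mean $\mathbb{E}[\nabla_{\mathrm{FS}}\log p(\psi\mid\phi)\mid\psi]$. That mean is then identified with $\nabla_{\mathrm{FS}}\log p_t(\psi)$ by writing the joint density, applying Bayes, and differentiating under the $\phi$-integral. Your added regularity discussion (positivity and smoothness of the transition density, the cut-locus caveat when $\lambda>0$, and finiteness of $\mathcal{J}$ at the optimum) spells out what the paper covers only with the phrase ``under the stated smoothness/compactness assumptions.''
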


The proof is in Supplementary Material~\ref{sec:appendix_dsm}. The statement is what makes the local-time construction meaningful: it says that regressing on \emph{conditional} scores, which we can approximate, recovers the \emph{marginal} Riemannian score, which we cannot evaluate.

\subsection{What Makes the Local-Time Teacher Work}
\label{sec:method_recipe}

The construction above is, so far, the standard one. Three choices in how it is instantiated turn out to determine whether it works; we state them here and measure them in Section~\ref{sec:exp_rsgm}.

\paragraph{(i) Scale the teacher by the diffusion coefficient.}
De Bortoli et al.\ \cite{bortoli2022riemannian} give the local-time teacher as $\exp^{-1}_{X_t}(X_s)/(t-s)$, i.e.\ the increment divided by the elapsed time. That is the correct expression for a unit-diffusion process, where elapsed time and elapsed diffusion clock coincide. Under a schedule $\sigma(t)$ they do not: the conditional law over a step of length $\delta t$ has covariance $\Delta\tau\,I$ with $\Delta\tau=\int_{t-\delta t}^{t}\sigma(u)^2du=\sigma(t)^2\delta t+O(\delta t^2)$, so the increment must be divided by the clock increment rather than by the time increment. That is what \eqref{eq:teacher_score_euclidean_ou} does. We are not proposing a different teacher; we are applying the published one under the time change its derivation assumes, and the point of the comparison is to measure what the naive substitution costs. The two differ by the factor $\sigma(t)^2$, which varies by a factor of $400$ across the diffusion horizon under our schedule ($\sigma$ from $0.05$ to $1$), so the difference is not a constant that the network can absorb: it misweights the regression target as a function of $t$. The two forms also use opposite base points --- $\log_{\phi}(\psi)$ at the earlier state against $\exp^{-1}_{X_t}(X_s)$ at the later one --- which agree up to sign and parallel transport at this order and are therefore not the point at issue.

\paragraph{(ii) Randomise the global phase of the training data.}
A pure state is an equivalence class $[\psi]=\{e^{i\varphi}\psi\}$, and a score model on $\mathbb{CP}^{d-1}$ must satisfy $s_\theta(e^{i\varphi}\psi,t)=e^{i\varphi}s_\theta(\psi,t)$. Projecting the network output onto the horizontal tangent space does not achieve this: it constrains the output but leaves the network free to respond differently to two representatives of the same state, and measured on held-out states the violation is a relative error of $1.04$. One remedy is to fix a canonical gauge on the input side, which makes the identity exact; we find it helps on some targets and hurts badly on others, and report that ablation in Section~\ref{sec:exp_diagnostics}. The remedy we adopt is simpler: multiply each training sample by a uniformly random phase. This is not only a device: with an equivariant forward process, orbit-averaging the data makes every time marginal $U(1)$-invariant, its sphere score the horizontal lift of the projective score, and hence the population optimum of \eqref{score_m} equivariant even over an unconstrained hypothesis class. Proposition~\ref{prop:phase_aug} in Supplementary Material~\ref{sec:appendix_dsm} states this and proves it; the converse is what the measurement above reflects, since data stored with a fixed phase convention lives on a section of the bundle and nothing in the objective penalises dependence on that section. Empirically the gain is real but the least uniform of the three: $1.15$--$1.50\times$, resolved at ten seeds only on the unimodal single-cluster target (Table~\ref{tab:ingredients}). We keep it because it is free, because it is the only one of the three with a population-level justification, and because the alternative --- gauge fixing --- is actively harmful on the physics families.

\paragraph{(iii) Use the closed-form geodesic maps.}
The logarithm and exponential maps admit closed forms on $\mathbb{CP}^{d-1}$. Writing $\braket{\phi|\psi}=re^{i\alpha}$ and $\tilde\psi=e^{-i\alpha}\psi$ for the phase-aligned representative,
\begin{equation}
    \log_\phi(\psi) = \theta\,\frac{\mathcal{P}_\phi(\tilde\psi)}{\|\mathcal{P}_\phi(\tilde\psi)\|},
    \qquad \theta=\arccos r,
    \label{eq:exact_log}
\end{equation}
\begin{equation}
    \mathrm{Exp}_\psi(v) = \cos\|v\|\,\psi + \sin\|v\|\,\frac{v}{\|v\|} .
    \label{eq:exact_exp}
\end{equation}
Replacing $\mathcal{P}_\phi(\psi-\phi)$ and $\psi+v$ followed by renormalisation --- the first-order versions of the same maps --- costs nothing and improves generation by a factor of $1.22$--$1.68$, with paired $t$ between $6.8$ and $11.4$ over ten seeds (Table~\ref{tab:ingredients}).

\paragraph{On the forward drift.}
The drift $b=-\lambda(t)\log_\psi(\psi_\star)$ of \eqref{eq:forward_riem_ou_recall} is dispersive rather than mean-reverting (Section~\ref{sec:reverse}), and it is not load-bearing: setting $\lambda=0$, so the forward process is pure Fubini--Study Brownian motion, improves every benchmark we tested, significantly on two of three (Table~\ref{tab:ingredients}). We keep it in the exposition because the finite-step bias of Proposition~\ref{prop:simple_teacher_structured_bias} is stated in terms of it, and because $\lambda=0$ is the special case $W_t\equiv1$ of that statement.

\subsection{Sampling Algorithm}
\label{sec:sampling}

After training the score model $s_\theta(\psi,t)\approx \nabla_{\mathrm{FS}}\log p_t(\psi)$, we generate samples by drawing $\psi_T \sim p_T$ from the unitarily-invariant FS (Haar) measure and integrating the learned reverse-time dynamics on $\mathcal{M}=\mathbb{CP}^{d-1}$.

\paragraph{Reverse-time sampling SDE.}
Using Eq.~\eqref{eq:reverse_manifold_sde}, we simulate the reverse diffusion
\begin{equation}
    d\psi_t
    =
    \big(b(\psi_t,t) - \sigma(t)^2 s_\theta(\psi_t,t)\big)\,dt
    + \sigma(t)\, d\bar{W}_t^{(\mathcal{M})},
    \label{eq:reverse_sampling_intrinsic}
\end{equation}
where $b(\psi,t)=-\lambda(t)\mathrm{Log}_{\psi}(\psi_\star)$ for the forward process with dispersive drift and $\bar{W}_t^{(\mathcal{M})}$ denotes reverse-time Brownian motion on $(\mathcal{M},g_{\mathrm{FS}})$.

\paragraph{Manifold discretization.}
Let $t_k$ be a discretization of $[0,T]$ with step $\Delta t=t_k-t_{k-1}$ (integrated backward from $T$ to $0$), and define $\tau_k := \int_{t_{k-1}}^{t_k}\sigma(s)^2 ds \approx \sigma(t_k)^2\Delta t$.
We update the state using an Euler--Maruyama step in the tangent space followed by a retraction onto $\mathcal{M}$:
\begin{equation}
\begin{aligned}
    \xi_k &\sim \mathcal{N}(0, I) \quad \text{(in a local orthonormal basis of } T_{\psi_{t_k}}\mathcal{M}\text{)},\\
    v_k &=
    \big(b(\psi_{t_k},t_k) - \sigma(t_k)^2\, s_\theta(\psi_{t_k},t_k)\big)\,\Delta t
    + \sqrt{\tau_k}\, \xi_k,\\
    \psi_{t_{k-1}} &= \mathrm{Exp}_{\psi_{t_k}}(v_k),
\end{aligned}
\label{eq:manifold_em_update}
\end{equation}
where $\mathrm{Exp}_{\psi}(\cdot)$ is the FS exponential map of \eqref{eq:exact_exp}. It is common to replace it by the first-order retraction $\psi+v$ followed by renormalisation, which is cheaper to write but not cheaper to run; we find the closed form worth a factor of $1.22$--$1.68$ in generation quality, the largest single ingredient of the recipe (Section~\ref{sec:method_recipe}).
Algorithm~\ref{alg:ssdm} collects the training objective of Section~\ref{sec:training_objective} and the sampler above into a single procedure.

\begin{algorithm}[t]
\caption{Training PSM}
\label{alg:ssdm}
\begin{algorithmic}[1]
\REQUIRE Data samples $\{\psi_0^{(i)}\}$, diffusion horizon $T$, step sizes $\Delta t,\delta t$, schedules $\sigma(t),\lambda(t)$, prior sampler $p_T$ (Haar)
\ENSURE Trained score model $s_\theta$ and generated sample $\psi_0$

\STATE \textbf{Training}
\FOR{each minibatch}
    \STATE Sample $\psi_0 \sim p_0$, draw $\varphi\sim\mathcal{U}(0,2\pi)$ and set $\psi_0\leftarrow e^{i\varphi}\psi_0$ \COMMENT{$U(1)$ augmentation}
    \STATE Sample $t \sim \mathcal{U}(\delta t,T)$
    \STATE Simulate the forward diffusion in Eq.~\eqref{eq:forward_riem_ou_recall} to obtain $(\psi_{t-\delta t},\psi_t)$
    \STATE Compute normal coordinates $z \leftarrow \log_{\psi_{t-\delta t}}(\psi_t)$ by \eqref{eq:exact_log}
    \STATE Compute teacher score $s^{\mathrm{teach}}(\psi_t,\psi_{t-\delta t},t,\delta t)$ using Eqs.~\eqref{eq:teacher_score_euclidean_ou}--\eqref{eq:teacher_score_pushforward}
    \STATE Update $\theta$ by minimizing $\mathcal{L}(\theta)$ in Eq.~\eqref{score_m}
\ENDFOR

\STATE \textbf{Sampling}
\STATE Sample $\psi_T \sim p_T$
\FOR{$k=K,K-1,\ldots,1$}
    \STATE Set $\tau_k \leftarrow \sigma(t_k)^2\Delta t$
    \STATE Draw $\xi_k \sim \mathcal{N}(0,I)$ in $T_{\psi_{t_k}}\mathcal{M}$
    \STATE Set
    \begin{equation}
        v_k \leftarrow
        \bigl(
        b(\psi_{t_k},t_k)
        - \sigma(t_k)^2 s_\theta(\psi_{t_k},t_k)
        \bigr)\Delta t
        + \sqrt{\tau_k}\xi_k
    \end{equation}
    \STATE Set $\psi_{t_{k-1}} \leftarrow \operatorname{Exp}_{\psi_{t_k}}(v_k)$
\ENDFOR
\RETURN $\psi_0$
\end{algorithmic}
\end{algorithm}

\paragraph{What is standard here and what is not.}
Time reversal on manifolds, heat-kernel asymptotics, Stratonovich calculus and the Riemannian denoising identity are standard \cite{anderson1982reversetime,song2021scorebased,bortoli2022riemannian,huang2022riemannian,hsu2002stochmanifolds}, as is the idea of supervising with a local-time conditional score. What this paper adds is the scaling of that teacher under a non-unit schedule, the finite-step characterization of the bias it leaves, the treatment of the phase quotient during training, and the measurements that separate these from the parts that do not matter.

\section{Related Works}
\label{sec:related_work}

\paragraph{Score-based diffusion models.}
Diffusion and score-based generative models sample by reversing a learned noising process \cite{ho2020denoising, song2021scorebased}.
Riemannian extensions replace Euclidean gradients and noise with manifold counterparts, often using tangent-space score matching in local coordinates \cite{bortoli2022riemannian, huang2022riemannian}.
PSMs follow this geometric line but specialize it to $\mathbb{CP}^{d-1}$, where global phase, strong curvature, and unavailable transition densities require a local-time teacher and a forward diffusion tailored to pure-state geometry.

\paragraph{Quantum generative modeling and diffusion.}
Quantum generative models such as Born machines \cite{liu2018differentiable,benedetti2019generative,coyle2020born}, quantum Boltzmann machines \cite{Kieferov2016TomographyAG,amin2018quantum,Zoufal2020VariationalQB} and quantum GANs \cite{lloyd2018quantum,dallaire2018quantum,Zoufal2019QuantumGA} parameterize circuit families and train them adversarially or by likelihood, while recent quantum diffusion methods use noisy channels or measurement-based denoising for state recovery and preparation \cite{chen2024quantum,zhu2025channel,parigi2025quantum,Zhang2023GenerativeQM,kwun2025mixed}. We differ in modelling the distribution over pure states on $\mathbb{CP}^{d-1}$ itself, through a learned Riemannian score field with local-time supervision. Two works are closest. Liu \emph{et al.} \cite{liu2025measurement} drive the forward process by randomized weak measurements and show that quantum score matching amounts to learning the unitary generator of the reverse process, so the obstruction we address, the absence of a closed-form transition density, does not arise for them. Gabbassov \cite{gabbassov2026stochastic} derives \emph{exact} reverse stochastic Schr\"odinger equations for monitored Pauli channels, so no score need be learned at all. Ours is the complementary case, a forward process with no closed-form reversal.

\begin{table*}[t]
\centering
\caption{Against the published Riemannian local-time baseline at $n=6$, under the protocol of Section~\ref{sec:method_recipe}. Values $\times10^{-2}$, mean $\pm$ sd over ten seeds; lower is better. Bold marks a paired difference in our favour that survives Holm correction at the $5\%$ level within its metric family of eight tests. Only the first two metrics are characteristic on distributions over pure states (Proposition~\ref{prop:kernel_resolution}); the last two are reported because the literature does. Both arms share the representation, network, optimizer, sampler, schedule, budget, augmentation, checkpoint rule and evaluation batch, and differ only in the time change applied to the local-time increment (Supplementary Material~\ref{sec:rsgm_details}).}
\label{tab:rsgm_four_metric}
\scriptsize
\resizebox{\textwidth}{!}{%
\begin{tabular}{l cc cc cc cc}
\toprule
& \multicolumn{2}{c}{HS-Gaussian MMD} & \multicolumn{2}{c}{energy distance} & \multicolumn{2}{c}{two-copy MMD} & \multicolumn{2}{c}{overlap MMD} \\
\cmidrule(lr){2-3}\cmidrule(lr){4-5}\cmidrule(lr){6-7}\cmidrule(lr){8-9}
& \multicolumn{4}{c}{\emph{characteristic}} & \multicolumn{2}{c}{\emph{second moment}} & \multicolumn{2}{c}{\emph{first moment}} \\
\cmidrule(lr){2-5}\cmidrule(lr){6-7}\cmidrule(lr){8-9}
Benchmark & ours & RSGM & ours & RSGM & ours & RSGM & ours & RSGM \\
\midrule
Single-cluster & $\mathbf{1.57}\pm0.13$ & $2.03\pm0.23$ & $\mathbf{3.30}\pm0.25$ & $4.22\pm0.47$ & $\mathbf{3.37}\pm0.31$ & $4.45\pm0.57$ & $\mathbf{2.61}\pm0.23$ & $3.47\pm0.43$ \\
Trimodal & $\mathbf{0.86}\pm0.10$ & $1.14\pm0.09$ & $\mathbf{2.33}\pm0.21$ & $2.92\pm0.21$ & $\mathbf{1.90}\pm0.16$ & $2.50\pm0.22$ & $\mathbf{1.84}\pm0.25$ & $2.47\pm0.22$ \\
Eq.\ bimodal & $0.77\pm0.08$ & $0.86\pm0.16$ & $2.24\pm0.17$ & $2.49\pm0.33$ & $\mathbf{1.95}\pm0.12$ & $2.24\pm0.18$ & $1.65\pm0.21$ & $1.90\pm0.42$ \\
Spin-coherent & $0.80\pm0.13$ & $0.93\pm0.14$ & $2.33\pm0.25$ & $2.66\pm0.28$ & $\mathbf{2.08}\pm0.18$ & $2.44\pm0.25$ & $1.73\pm0.31$ & $2.05\pm0.32$ \\
TFIM & $\mathbf{1.27}\pm0.07$ & $1.45\pm0.06$ & $\mathbf{3.03}\pm0.19$ & $3.43\pm0.14$ & $\mathbf{3.07}\pm0.22$ & $3.56\pm0.17$ & $\mathbf{2.41}\pm0.22$ & $2.79\pm0.14$ \\
XXZ & $0.84\pm0.25$ & $1.07\pm0.30$ & $2.16\pm0.51$ & $2.60\pm0.60$ & $1.95\pm0.46$ & $2.03\pm0.23$ & $1.56\pm0.56$ & $2.15\pm0.80$ \\
W states & $\mathbf{1.34}\pm0.13$ & $1.82\pm0.16$ & $\mathbf{2.86}\pm0.26$ & $3.80\pm0.31$ & $\mathbf{2.88}\pm0.33$ & $3.98\pm0.34$ & $\mathbf{2.19}\pm0.22$ & $3.06\pm0.29$ \\
Graph states & $\mathbf{1.45}\pm0.16$ & $1.87\pm0.18$ & $\mathbf{3.08}\pm0.31$ & $3.90\pm0.37$ & $\mathbf{3.13}\pm0.36$ & $4.05\pm0.44$ & $\mathbf{2.40}\pm0.29$ & $3.18\pm0.34$ \\
\bottomrule
\end{tabular}}
\end{table*}

\paragraph{Stochastic quantum trajectories and unravelings.}
Diffusion-like dynamics also appear in measurement-induced trajectories \cite{Dalibard1992WavefunctionAT,Gisin1992TheQD} and Lindblad unravelings \cite{kleinekathofer2002stochastic,caiaffa2017stochastic,chen2025unraveling}, where stochastic Schr\"odinger equations describe pure-state paths and motivate recovery/control viewpoints \cite{Kiefer2010QuantumMA}. PSMs leverage this connection through an SSE realization, but target a generative modeling objective: learning a score field on $\mathbb{CP}^{d-1}$ and using reverse-time integration to sample from a target ensemble.

\paragraph{Hybrid pipelines with quantum denoisers.}
A separate direction inserts quantum neural components into otherwise classical diffusion models, e.g., quantum neural network (QNN) denoisers \cite{kolle2024quantum} for image/latent diffusion \cite{Falco2024QuantumLD,Falco2025LeveragingQL} and scientific data generation, such as quark and gluon jet synthesis \cite{baidachna2025quantum}. Our setting differs in that the diffusion itself evolves \emph{quantum states} and the score is defined intrinsically on $\mathbb{CP}^{d-1}$.

\section{Experiments}
\label{sec:experiments}

We evaluate PSMs on generative modeling over quantum pure-state ensembles. The experiments answer four questions:
\textbf{(RQ1)} how PSM compares with a correctly implemented Riemannian score-based baseline, under metrics that are characteristic on distributions over pure states (Section~\ref{sec:exp_rsgm});
\textbf{(RQ2)} over what range of $n$ the method actually works, and where it stops (Section~\ref{sec:exp_scope});
\textbf{(RQ3)} whether the local-time analytic teacher is what carries the performance (Section~\ref{sec:exp_teacher});
\textbf{(RQ4)} whether the geometric construction is implemented as claimed (Section~\ref{sec:exp_diagnostics}).
Protocols, benchmark constructions and architectures are in Supplementary Material~\ref{sec:exp_setup}.

\subsection{Baselines}
\label{sec:baselines}

We compare against two controls, both trained on the same target states, with the same budget and checkpoint rule, and evaluated with the same metrics on the same batches.

\emph{Euclidean VP-SDE} is the ambient control: each normalized state $\psi\in\mathbb{C}^d$ is mapped to $\mathbb{R}^{2d}$ by concatenating real and imaginary parts, a standard VP-SDE with denoising score matching is trained on those vectors, and generated vectors are mapped back and normalized before evaluation. It is not trained in the classical input space; it sees exactly the same quantum states as PSM, in an extrinsic representation. The comparison therefore isolates the effect of working intrinsically on $\mathbb{CP}^{d-1}$.

\emph{RSGM} \cite{bortoli2022riemannian} is the Riemannian control. We instantiate it on $\mathbb{CP}^{d-1}$ with the same horizontal statevector representation, score network, optimizer, sampler and checkpoint rule as PSM, and supervise it with the local-time loss $\ell_{t\mid s}$ and the Varadhan teacher $\exp^{-1}_{X_t}(X_s)/(t-s)$ that the original paper recommends when an approximation of the transition family is available. This is the variant that matters for our claims, and it differs from ours only in the scaling of the teacher; Supplementary Material~\ref{sec:rsgm_details} records what our port does and does not reproduce of the original method.

Ablated variants of our own model --- no local teacher, finite-difference teacher, drift-corrected teacher, $\lambda=0$ forward process, gauge-fixed score network --- are described where they are used.

\subsection{Benchmarks}
\label{sec:exp_benchmarks}

Nine target ensembles are used, all at $n=6$ except the single-cluster family, which is additionally swept over $n\in\{2,\dots,14\}$ in the scope study of Section~\ref{sec:exp_scope}. Eight of them draw a reference state --- a computational basis state, a GHZ-like superposition, a product spin-coherent state, an exactly diagonalized TFIM or XXZ ground state, a W state or a linear-chain graph state --- and apply a complex Gaussian perturbation of amplitude $\varepsilon=0.06$ followed by renormalization, so the geometry of the target is known while the model still has to learn a non-trivial distribution on $\mathbb{CP}^{d-1}$. The ninth is different in kind: MNIST digits $0/1$ are PCA-reduced, centred and normalized into feature states, so the induced law on the manifold has no closed form. W states and graph states were added after the main comparison had been run, to check whether the advantage over the baseline was an artifact of the original suite; the graph-state family also has uniform amplitude modulus, which makes a gauge based on the largest-modulus amplitude maximally ill-conditioned. Supplementary Material~\ref{sec:appendix_benchmark_suite} gives the explicit constructions.

\subsection{Comparison with a Correctly Implemented Riemannian Baseline (RQ1)}
\label{sec:exp_rsgm}

\paragraph{What the baseline should be.}
Riemannian score-based generative modeling \cite{bortoli2022riemannian} offers more than one training signal. Besides implicit score matching, it provides a \emph{local-time} denoising loss $\ell_{t\mid s}$ supervised by the Varadhan teacher $\exp^{-1}_{X_t}(X_s)/(t-s)$, recommended whenever an approximation of the transition family is available. Our local-time teacher belongs to that family; the difference is that we divide by $\beta^2=\sigma(t)^2\delta t$ rather than by $\delta t$, which is the correct small-time variance when the diffusion schedule is not unit. The appropriate baseline is therefore the published local-time variant, not a Brownian-perturbation regression, and that is what we report here.

\paragraph{Metrics.}
Write $\rho_\psi=\ket{\psi}\bra{\psi}$ and $F=|\braket{\psi|\phi}|^2$. The overlap kernel $k=F=\mathrm{Tr}(\rho_\psi\rho_\phi)$ used in much of the quantum generative literature is linear in $\rho$, and the two-copy kernel $k_2=F^2=\mathrm{Tr}\!\big[(\rho_\psi^{\otimes2})(\rho_\phi^{\otimes2})\big]$ is linear in $\rho^{\otimes2}$. Both therefore compare a fixed moment of the ensemble rather than the ensemble itself.

Proposition~\ref{prop:kernel_resolution} in Supplementary Material~\ref{sec:appendix_metric_validity} makes this precise: the overlap MMD equals $\|\mathbb{E}_p[\rho]-\mathbb{E}_q[\rho]\|_{\mathrm{HS}}$ and the two-copy MMD equals $\|\mathbb{E}_p[\rho^{\otimes2}]-\mathbb{E}_q[\rho^{\otimes2}]\|_{\mathrm{HS}}$, so the first certifies only the mean density matrix and the second only the second moment; neither is characteristic, and the two-copy kernel cannot separate any two distinct state $2$-designs. The HS-Gaussian kernel and the chordal energy distance are characteristic, because $\psi\mapsto\rho_\psi$ embeds the compact $\mathbb{CP}^{d-1}$ isometrically into a Euclidean space on which the Gaussian kernel is characteristic and the metric is of strong negative type. Neither can certify multimodal generation: Table~\ref{tab:decoy} gives an explicit pair of ensembles the overlap kernel cannot separate. We therefore report four quantities and base the conclusions on the two that are characteristic --- the HS-Gaussian MMD with median-heuristic bandwidth and the chordal energy distance --- with the two-copy and overlap MMDs included because the literature reports them and because a claim that holds under all four is stronger than one that holds under two.

\begin{table}[t]
\centering
\caption{Decoy test at $n=6$, $512$ samples per ensemble. The target is the equatorial bimodal benchmark; the decoy has the same mean density matrix but no superposition structure. The overlap MMD cannot separate them, as Proposition~\ref{prop:kernel_resolution}(i) requires; the other three can.}
\label{tab:decoy}
\resizebox{\columnwidth}{!}{%
\begin{tabular}{lcccc}
\toprule
Pair & overlap MMD & HS-Gauss MMD & 2-copy MMD & energy dist. \\
\midrule
Target vs.\ independent draw   & $0.0$ & $-3.8\times10^{-4}$ & $-4.9\times10^{-4}$ & $4.0\times10^{-3}$ \\
Target vs.\ decoy              & $0.0$ & $\mathbf{1.38\times10^{-2}}$ & $\mathbf{1.07\times10^{-1}}$ & $\mathbf{3.40\times10^{-2}}$ \\
\bottomrule
\end{tabular}}

\end{table}

\paragraph{The same test on hardware.}
All four metrics are functions of the pairwise fidelity $F=|\braket{\psi|\phi}|^2$ alone --- including the two that Proposition~\ref{prop:kernel_resolution} shows are characteristic --- so a single set of compute-uncompute measurements yields all of them and the decoy test can be run on a device rather than in simulation. We prepare the same construction at $n=3$ --- target an equal mixture of $(\ket{000}\pm\ket{111})/\sqrt2$, decoy an equal mixture of $\ket{000}$ and $\ket{111}$, both Gaussian-perturbed, so that the two share a mean density matrix by construction --- draw $20$ states per ensemble, and estimate all $800$ pairwise fidelities on an IBM Heron device at $4096$ shots.

\begin{table}[t]
\centering
\caption{The decoy test executed on IBM \texttt{ibm\_kawasaki} at $n=3$: $20$ states per ensemble, $800$ compute-uncompute circuits, $4096$ shots, median two-qubit depth $53$ after transpilation. ``Corrected'' rescales every fidelity by the measured self-overlap $0.9930$. Hardware reproduces the exact values to within $9$--$17\%$ and, more to the point, reproduces the separation: the overlap MMD is at its estimator-noise level while the two-copy MMD is an order of magnitude above it.}
\label{tab:hw_decoy}
\resizebox{\columnwidth}{!}{%
\begin{tabular}{lcccc}
\toprule
& overlap MMD & HS-Gauss MMD & 2-copy MMD & energy dist. \\
\midrule
Exact (statevector)   & $3.81\times10^{-2}$ & $2.84\times10^{-1}$ & $4.26\times10^{-1}$ & $2.86\times10^{-1}$ \\
Hardware, raw         & $3.80\times10^{-2}$ & $2.46\times10^{-1}$ & $3.87\times10^{-1}$ & $2.38\times10^{-1}$ \\
Hardware, corrected   & $3.83\times10^{-2}$ & $2.52\times10^{-1}$ & $3.93\times10^{-1}$ & $2.45\times10^{-1}$ \\
\midrule
Estimator noise ($\pm$sd) & $5.8\times10^{-2}$ & --- & $5.5\times10^{-2}$ & $6.5\times10^{-2}$ \\
\bottomrule
\end{tabular}}

\end{table}

Table~\ref{tab:hw_decoy} reports the outcome. The population value of the overlap MMD between these two ensembles is exactly zero, and at $20$ samples per ensemble the estimator has a standard deviation of $5.8\times10^{-2}$ across resamplings; the hardware estimate of $3.80\times10^{-2}$ is inside that noise, as is the exact one. The two-copy MMD is $3.87\times10^{-1}$ on hardware, seven standard deviations away, and the energy distance behaves the same way. The failure of the overlap kernel is therefore not an artifact of simulation or of a particular sample size: on a real device, with real readout error, the metric this literature reports is blind to a pair of ensembles that the HS-Gaussian MMD and the energy distance separate by an order of magnitude.

The systematic gap between hardware and exact values is $9$--$17\%$ and shrinks slightly under the self-overlap correction, which is the expected signature of depolarizing noise compressing all measured fidelities towards the uniform outcome. It biases the characteristic metrics downward but does not change what they can and cannot see, which is the property at issue here.

\paragraph{Result.}
Table~\ref{tab:rsgm_four_metric} gives the comparison. Ours is better in all thirty-two cells. Applying Holm correction within each metric family of eight tests, the difference survives at the $5\%$ level on five of eight benchmarks under the HS-Gaussian MMD, five of eight under the energy distance, seven of eight under the two-copy MMD and five of eight under the overlap MMD; where it survives, the margin is a factor of $1.13$ to $1.40$. The three benchmarks that do not clear correction --- equatorial bimodal, spin-coherent and XXZ --- are the same three under every metric, so the picture is consistent: the effect is real on five families and below resolution at ten seeds on three. The two characteristic metrics agree with the two moment-matching ones on direction everywhere, which is the reassurance we can offer that the ranking does not rest on a kernel that cannot separate some ensembles.

Two remarks on how to read this. First, the two arms differ in one line of code --- the local-time increment is divided by the diffusion-clock increment rather than by the elapsed time --- and everything else, including the $U(1)$ augmentation of Section~\ref{sec:method_recipe}, is shared, so the effect is attributable to the time change and to nothing else. We read this as a correction to how the published teacher is applied under a non-unit schedule, not as a new teacher: a careful implementation of the baseline would make the same substitution, and what the table measures is the cost of not making it. Second, the effect is real but modest, and smaller than the effect of the supervision signal itself: removing the analytic local-time teacher costs an order of magnitude more (Section~\ref{sec:exp_teacher}). We take the scaling to be a correction worth making rather than the main source of the method's behaviour.

\subsection{Does Intrinsic Geometry Help? (RQ1, continued)}
\label{sec:exp_euclid}

The comparison above isolates the supervision signal within the Riemannian family. The complementary question is whether working intrinsically on $\mathbb{CP}^{d-1}$ helps at all, relative to treating a normalized statevector as an ordinary vector in $\mathbb{R}^{2d}$. Table~\ref{tab:appendix_vp_sde_suite} answers it under the same protocol: identical data, network width and depth, optimizer, budget, checkpoint rule, sample count and evaluation batch, with the diffusion moved into the ambient space and the samples renormalized before evaluation.

\begin{table*}[t]
\centering
\caption{Against the ambient Euclidean VP-SDE baseline at $n=6$, under the protocol of Section~\ref{sec:method_recipe}. Values $\times10^{-2}$, mean $\pm$ sd over ten seeds; lower is better. Bold marks Holm-corrected significance at the $5\%$ level within each metric family. Only the first two metrics are characteristic (Proposition~\ref{prop:kernel_resolution}).}
\label{tab:appendix_vp_sde_suite}
\scriptsize
\resizebox{\textwidth}{!}{%
\begin{tabular}{l cc cc cc cc}
\toprule
& \multicolumn{2}{c}{HS-Gaussian MMD} & \multicolumn{2}{c}{energy distance} & \multicolumn{2}{c}{two-copy MMD} & \multicolumn{2}{c}{overlap MMD} \\
\cmidrule(lr){2-3}\cmidrule(lr){4-5}\cmidrule(lr){6-7}\cmidrule(lr){8-9}
& \multicolumn{4}{c}{\emph{characteristic}} & \multicolumn{2}{c}{\emph{second moment}} & \multicolumn{2}{c}{\emph{first moment}} \\
\cmidrule(lr){2-5}\cmidrule(lr){6-7}\cmidrule(lr){8-9}
Benchmark & ours & Euclidean & ours & Euclidean & ours & Euclidean & ours & Euclidean \\
\midrule
Single-cluster & $\mathbf{1.57}\pm0.13$ & $13.34\pm0.38$ & $\mathbf{3.30}\pm0.25$ & $30.87\pm1.05$ & $\mathbf{3.37}\pm0.31$ & $20.98\pm0.45$ & $\mathbf{2.61}\pm0.23$ & $35.74\pm1.36$ \\
Trimodal & $\mathbf{0.86}\pm0.10$ & $7.31\pm0.11$ & $\mathbf{2.33}\pm0.21$ & $17.50\pm0.28$ & $\mathbf{1.90}\pm0.16$ & $9.71\pm0.10$ & $\mathbf{1.84}\pm0.25$ & $20.82\pm0.39$ \\
Eq.\ bimodal & $\mathbf{0.77}\pm0.08$ & $6.74\pm0.16$ & $\mathbf{2.24}\pm0.17$ & $16.46\pm0.37$ & $\mathbf{1.95}\pm0.12$ & $10.89\pm0.12$ & $\mathbf{1.65}\pm0.21$ & $18.81\pm0.51$ \\
Spin-coherent & $\mathbf{0.80}\pm0.13$ & $6.83\pm0.10$ & $\mathbf{2.33}\pm0.25$ & $16.64\pm0.25$ & $\mathbf{2.08}\pm0.18$ & $10.98\pm0.08$ & $\mathbf{1.73}\pm0.31$ & $19.04\pm0.34$ \\
TFIM & $\mathbf{1.27}\pm0.07$ & $9.12\pm0.31$ & $\mathbf{3.03}\pm0.19$ & $21.45\pm0.74$ & $\mathbf{3.07}\pm0.22$ & $12.24\pm0.29$ & $\mathbf{2.41}\pm0.22$ & $25.58\pm0.97$ \\
XXZ & $\mathbf{0.84}\pm0.25$ & $8.45\pm0.62$ & $\mathbf{2.16}\pm0.51$ & $20.41\pm1.47$ & $\mathbf{1.95}\pm0.46$ & $13.33\pm0.63$ & $\mathbf{1.56}\pm0.56$ & $23.60\pm1.89$ \\
W states & $\mathbf{1.34}\pm0.13$ & $12.93\pm0.48$ & $\mathbf{2.86}\pm0.26$ & $29.70\pm1.37$ & $\mathbf{2.88}\pm0.33$ & $20.47\pm0.59$ & $\mathbf{2.19}\pm0.22$ & $34.24\pm1.76$ \\
Graph states & $\mathbf{1.45}\pm0.16$ & $12.89\pm0.34$ & $\mathbf{3.08}\pm0.31$ & $29.63\pm1.03$ & $\mathbf{3.13}\pm0.36$ & $20.52\pm0.45$ & $\mathbf{2.40}\pm0.29$ & $34.12\pm1.33$ \\
\bottomrule
\end{tabular}}
\end{table*}

The gap is an order of magnitude --- $7.2$ to $10.0\times$ on the HS-Gaussian MMD, $7.1$ to $10.4\times$ on the energy distance --- on all eight benchmarks and all four metrics, every cell surviving Holm correction. Reading the Euclidean overlap column against the Haar reference makes the failure mode concrete: at $19$--$36\times10^{-2}$ against a prior level of $23$--$46\times10^{-2}$, the ambient model has moved only a fifth to a quarter of the way from the prior towards the target. It is not that it learns a slightly worse distribution; it is that renormalizing an ambient sample discards most of what the model learned, because the density it fits lives in $\mathbb{R}^{2d}$ and the evaluation lives on the quotient of the sphere. This is a much larger effect than any difference within the Riemannian family, and it is the clearest evidence in the paper that the manifold structure is doing work.

\paragraph{A target whose law is not known in closed form.}
\label{sec:exp_mnist}
All benchmarks so far are constructed in Hilbert space. A different regime arises when the target ensemble is induced by an encoding of classical data, so that its law on $\mathbb{CP}^{d-1}$ has no closed form and a learned model is genuinely needed. We PCA-reduce MNIST digits $0/1$ to $d=2^6$ real components, centre and normalize them, and model the resulting distribution on $\mathbb{CP}^{d-1}$.

\begin{table}[ht]
\centering
\caption{MNIST feature states at $n=6$, under the protocol of Section~\ref{sec:method_recipe}. Values $\times10^{-2}$, mean $\pm$ sd over ten seeds; lower is better. The first two metrics are characteristic (Proposition~\ref{prop:kernel_resolution}).}
\label{tab:appendix_mnist_generation}
\resizebox{\columnwidth}{!}{%
\begin{tabular}{lcccc}
\toprule
Method & HS-Gaussian $\downarrow$ & energy dist.\ $\downarrow$ & two-copy $\downarrow$ & overlap $\downarrow$ \\
\midrule
Euclidean VP-SDE & $4.42\pm0.25$ & $10.95\pm0.58$ & $5.16\pm0.20$ & $12.86\pm0.77$ \\
PSM (ours)       & $\mathbf{1.14\pm0.32}$ & $\mathbf{3.12\pm0.68}$ & $\mathbf{2.27\pm0.65}$ & $\mathbf{2.69\pm0.75}$ \\
\midrule
Haar reference   & --- & $12.22$ & $5.52$ & $14.54$ \\
\bottomrule
\end{tabular}}

\end{table}

The comparison is in Table~\ref{tab:appendix_mnist_generation}: PSM is better by $3.9\times$ on the HS-Gaussian MMD, $3.5\times$ on the energy distance, $2.3\times$ on two-copy and $4.8\times$ on overlap, paired over ten seeds with $t$ between $12.9$ and $26.4$. The Haar column makes the ambient failure legible in a way the $n=6$ synthetic families did not: at $12.86$ against a prior level of $14.54$, the Euclidean model has moved about a tenth of the way from the prior to the target, so what it learns in $\mathbb{R}^{2d}$ is almost entirely destroyed by the renormalization that puts its samples back on the manifold. This is the one family whose concentration is set by the data rather than by a perturbation scale, so the degeneration of Section~\ref{sec:exp_scope} does not apply to it.

\subsection{Scope: Where the Method Works and Where It Stops (RQ2)}
\label{sec:exp_scope}

The single-cluster family perturbs $\ket{0\cdots0}$ with a fixed amplitude noise $\varepsilon=0.06$, and since the perturbation energy grows as $d\varepsilon^2$ the target itself drifts towards the Haar measure as $n$ increases. The left block of Table~\ref{tab:scope} shows the consequence: the mean fidelity of the target with $\ket{0\cdots0}$ falls from $0.979$ at $n=2$ to $0.0084$ at $n=14$, and the dynamic range of the overlap MMD collapses with it. Such a benchmark cannot distinguish a model that has learned the target from one that returns the prior. To measure where the method actually works we instead rescale $\varepsilon\propto d^{-1/2}$, holding the target's concentration at its $n=6$ value, and report the selected-checkpoint MMD as a ratio to a Haar reference on the same evaluation batch, where $1.0$ means indistinguishable from the prior.

\begin{table}[t]
\centering
\caption{Left: degeneration of the single-cluster benchmark at fixed $\varepsilon=0.06$ ($1024$ samples), as the target approaches Haar. Right: scope of validity once $\varepsilon\propto d^{-1/2}$ holds concentration fixed ($10{,}000$ steps, best checkpoint by validation MMD), as the ratio of the selected-checkpoint MMD to the Haar reference on the same batch, under the original implementation and under the protocol of Section~\ref{sec:method_recipe} (three seeds).}
\label{tab:scope}
\resizebox{\columnwidth}{!}{%
\begin{tabular}{lc cc cc}
\toprule
& & \multicolumn{2}{c}{fixed $\varepsilon=0.06$} & \multicolumn{2}{c}{matched $\varepsilon$: ratio $\downarrow$} \\
\cmidrule(lr){3-4}\cmidrule(lr){5-6}
$n$ & $d$ & fid.\ with $\ket{0\cdots0}$ & MMD to Haar & original & protocol \\
\midrule
$2$  & $4$     & $0.979$  & $7.1\times10^{-1}$ & --- & --- \\
$4$  & $16$    & $0.902$  & $7.5\times10^{-1}$ & --- & --- \\
$6$  & $64$    & $0.689$  & $4.6\times10^{-1}$ & $\mathbf{0.184}$ & --- \\
$8$  & $256$   & $0.354$  & $1.2\times10^{-1}$ & $0.959$ & $\mathbf{0.721}$ \\
$10$ & $1024$  & $0.120$  & $1.4\times10^{-2}$ & $1.000$ & $1.002$ \\
$14$ & $16384$ & $0.0084$ & $7.0\times10^{-5}$ & $1.000$ & --- \\
\bottomrule
\end{tabular}}

\end{table}

The right block of Table~\ref{tab:scope} locates the boundary, and the protocol of Section~\ref{sec:method_recipe} moves it out by one step: at $n=8$ the ratio improves from $0.959$ to $0.721$, so the model is degraded but no longer vacuous, while at $n=10$ it stays at $1.002$ even with the full $10{,}000$-step budget.

The geometry does break down over this range. One forward step displaces a state by $\sigma\sqrt{2(d-1)\delta t}$ in FS distance, which is $0.50$ at $n=6$, $1.01$ at $n=8$ and $2.02$ at $n=10$ against an injectivity radius of $\pi/2\approx1.57$: from $n\approx8$ a single step traverses the manifold and normal coordinates cease to mean anything. The repair this suggests fails. Rescaling $\delta t\propto1/d$, with the reverse-step count matched, gives ratios of $0.948$ at $n=8$ and $1.000$ at $n=10$, and the normalised parameterisation does not help either, alone ($45.7\times10^{-2}$ at $n=8$) or combined with the rescaled step ($45.0\times10^{-2}$), against a Haar reference of $46.6\times10^{-2}$. What moved the boundary was the training protocol, not any step-size correction, which we read as evidence that the binding constraint at these dimensions is the quality of the supervision rather than the validity of the chart alone. Beyond $n=10$ we cannot identify it.

We therefore restrict the empirical claims of this paper to $n\le8$, with $n=8$ already substantially degraded, and report $n=10$ and beyond as a negative result. This includes the $n=14$ run we previously reported at fixed $\varepsilon$: its MMD of $1.2961\times10^{-4}$ matches the same-batch Haar reference to three digits, so it records the largest dimension we simulated rather than a scalability result (Supplementary Material~\ref{sec:appendix_metric_validity}). Since the model consumes the full $2^n$-dimensional statevector the cost is exponential in $n$ in any case, and a scalable version needs a structured score parameterization that we do not demonstrate.

\paragraph{Outside the local-cluster regime.}
The local-time teacher is built from a short-time expansion in normal coordinates, so it should be least useful when the target has broad support. Two $n=6$ families test this: a mixture of four Haar-random caps, and depth-$12$ random two-qubit brickwork circuit outputs. The method remains stable and still improves on the ambient Euclidean baseline --- overlap MMD $3.19$ versus $3.42\times10^{-2}$ on the Haar mixture and $4.91$ versus $5.86\times10^{-2}$ on random circuits, with $\Delta_{\mathrm{obs}}$ and entanglement W$_1$ agreeing --- but the margin is far smaller than on local or multimodal targets. This is consistent with the role of the teacher: intrinsic geometry keeps samples on the manifold, while locality stops being an informative inductive bias.

\subsection{What Carries the Performance (RQ3)}
\label{sec:exp_teacher}

\begin{table}[t]
\centering
\caption{Supervision ablation at $n=6$ under the protocol of Section~\ref{sec:method_recipe}: only the teacher changes. Values $\times10^{-2}$, mean $\pm$ sd over ten seeds. Haar is the same-batch reference under each metric, so a row that reaches it has learned nothing.}
\label{tab:teacher_ablation_246q}
\resizebox{\columnwidth}{!}{%
\begin{tabular}{lcccc}
\toprule
& \multicolumn{2}{c}{HS-Gaussian MMD} & \multicolumn{2}{c}{overlap MMD} \\
\cmidrule(lr){2-3}\cmidrule(lr){4-5}
Supervision & Single-cluster & TFIM & Single-cluster & TFIM \\
\midrule
No local teacher            & $15.94\pm0.13$ & $10.76\pm0.16$ & $45.61\pm0.43$ & $31.43\pm0.56$ \\
Finite-difference teacher   & $7.39\pm0.38$ & $4.90\pm0.21$ & $16.62\pm1.11$ & $12.17\pm0.62$ \\
Analytic local-time (ours)  & $\mathbf{1.57\pm0.13}$ & $\mathbf{1.27\pm0.07}$ & $\mathbf{2.61\pm0.23}$ & $\mathbf{2.41\pm0.22}$ \\
\midrule
Haar reference              & $16.07$ & $10.88$ & $45.99$ & $31.82$ \\
\bottomrule
\end{tabular}}

\end{table}

Table~\ref{tab:teacher_ablation_246q} varies only the supervision signal. Without a local teacher the model does not learn at all: it lands on the same-batch Haar reference to within $1\%$ on both benchmarks and under both metrics, which is what returning the prior would give. A finite-difference estimate of the same score recovers most of the way but still costs a factor of $3.9$ to $4.7$ on the characteristic metric and $5.0$ to $6.4$ on the overlap metric, with the gap set by estimator variance rather than by bias. The analytic teacher is therefore not a convenience; it is the component that makes the construction work, and its effect is an order of magnitude larger than the gap to the published Riemannian baseline in Table~\ref{tab:rsgm_four_metric}. Consistently with that reading, the forward drift is not load-bearing: setting $\lambda=0$, so that the forward process is pure FS Brownian motion, matches or improves every benchmark we tested, and is what the protocol of Section~\ref{sec:method_recipe} does.

\begin{table}[t]
\centering
\caption{Recipe ablation at $n=6$: one ingredient removed at a time, everything else held fixed. HS-Gaussian MMD $\times10^{-2}$, mean $\pm$ sd over ten seeds, with the ratio to the full recipe in parentheses; lower is better. The third row is the published teacher normalization, i.e.\ the RSGM arm of Table~\ref{tab:rsgm_four_metric}, repeated for comparison.}
\label{tab:ingredients}
\resizebox{\columnwidth}{!}{%
\begin{tabular}{lccc}
\toprule
Configuration & Single-cluster & Trimodal & TFIM \\
\midrule
Full recipe & $\mathbf{1.57\pm0.13}$ & $\mathbf{0.86\pm0.10}$ & $\mathbf{1.27\pm0.07}$ \\
$-$ closed-form geodesic maps & $2.22\pm0.24$ ($\times1.42$) & $1.39\pm0.19$ ($\times1.61$) & $1.51\pm0.15$ ($\times1.19$) \\
$-$ diffusion-clock scaling & $2.03\pm0.23$ ($\times1.29$) & $1.14\pm0.09$ ($\times1.32$) & $1.45\pm0.06$ ($\times1.15$) \\
$-$ $U(1)$ phase augmentation & $2.23\pm0.37$ ($\times1.42$) & $0.98\pm0.16$ ($\times1.14$) & $1.35\pm0.20$ ($\times1.07$) \\
$+$ dispersive drift $\lambda=0.2$ & $1.66\pm0.20$ ($\times1.06$) & $1.10\pm0.20$ ($\times1.28$) & $1.34\pm0.10$ ($\times1.06$) \\
\bottomrule
\end{tabular}}

\end{table}

Table~\ref{tab:ingredients} separates the choices, each removed from the full recipe with everything else held fixed. Under the HS-Gaussian metric the closed-form geodesic maps are the largest single ingredient, worth $1.19$--$1.61\times$ and the only one significant on all three families; the diffusion-clock scaling is next at $1.15$--$1.32\times$. The $U(1)$ augmentation is worth $1.07$--$1.42\times$, resolved at ten seeds only on the unimodal single-cluster target, so the honest statement is that it helps most where the target is a single mode with a definite phase. Turning the dispersive drift back on costs $1.06$--$1.28\times$. The ordering is the same under the overlap metric, which is reported in Supplementary Material~\ref{sec:appendix_stress_component}.

\paragraph{The predicted finite-step bias is the one we observe.}
Proposition~\ref{prop:simple_teacher_structured_bias} states that the zero-mean teacher is biased, but in a specific way: its finite-step optimum is the Riemannian score of $p_t$ reweighted by a Gaussian envelope $W_t$ centred at $\psi_\star$. Two experiments test that statement rather than assuming it.

\begin{table}[ht]
\centering
\caption{Zero-mean teacher \eqref{eq:teacher_score_euclidean_ou} against the drift-corrected form \eqref{eq:teacher_score_drift}, at $n=6$ with the dispersive drift switched on ($\lambda=0.2$) so that the two differ at all. Everything else follows the protocol of Section~\ref{sec:method_recipe}: $\sigma_{\min}=0.05$, $\sigma_{\max}=1$, $10{,}000$ steps, best checkpoint by validation MMD, ten seeds. Values $\times10^{-2}$, mean $\pm$ sd. The numbers are not comparable with the main tables, which use $\lambda=0$.}
\label{tab:teacher_drift_ablation}
\resizebox{\columnwidth}{!}{%
\begin{tabular}{lcccc}
\toprule
& \multicolumn{2}{c}{HS-Gaussian MMD} & \multicolumn{2}{c}{overlap MMD} \\
\cmidrule(lr){2-3}\cmidrule(lr){4-5}
Benchmark & zero-mean & drift-corr. & zero-mean & drift-corr. \\
\midrule
Single-cluster & $1.66\pm0.20$ & $1.67\pm0.18$ & $2.80\pm0.35$ & $2.81\pm0.33$ \\
Trimodal & $1.10\pm0.20$ & $1.01\pm0.14$ & $2.40\pm0.53$ & $2.20\pm0.35$ \\
Spin-coherent & $0.83\pm0.12$ & $0.84\pm0.16$ & $1.78\pm0.30$ & $1.86\pm0.38$ \\
TFIM & $1.34\pm0.10$ & $1.35\pm0.13$ & $2.59\pm0.25$ & $2.57\pm0.25$ \\
\bottomrule
\end{tabular}}

\end{table}

First, the question of which teacher to use does not arise for the configuration we run: with $\lambda=0$ the drift $b$ vanishes and \eqref{eq:teacher_score_euclidean_ou} and \eqref{eq:teacher_score_drift} are the same function. To test whether the omission matters where it can, we switch the drift back on at $\lambda=0.2$ and compare the two teachers under the same protocol (Table~\ref{tab:teacher_drift_ablation}). They are statistically indistinguishable on all four benchmarks, with paired $|t|\le1.1$ and differences between $-7.7\%$ and $+2.3\%$ over ten seeds. Under our schedule the bias is therefore not the dominant error, exactly as the proposition's $O(\sigma\sqrt{\delta t})$ remainder predicts.

\begin{figure}[ht]
    \centering
    \includegraphics[width=\columnwidth]{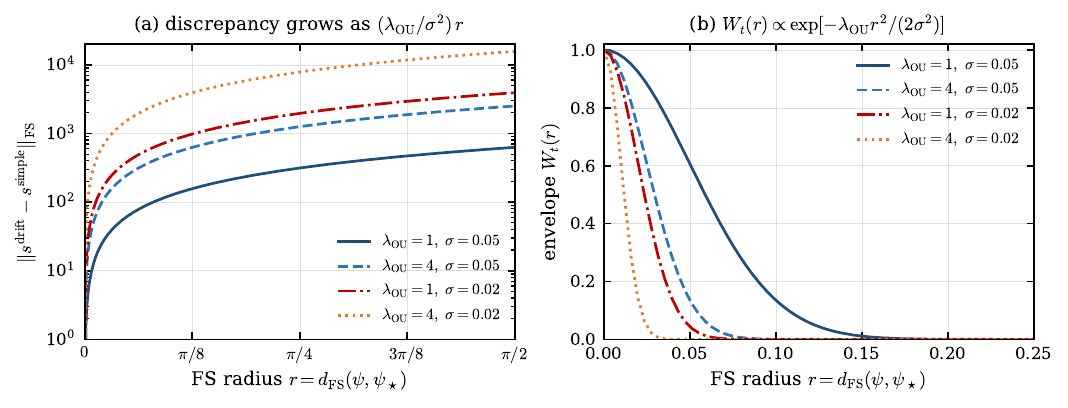}
    \caption{Structured bias of the zero-mean teacher, computed in closed form. The two teachers differ by $-(\lambda/\sigma^2)\log_\psi(\psi_\star)$, of norm $(\lambda/\sigma^2)r$ (panel a, log ordinate), which is exactly $\nabla_{FS}\log W_t$ for the envelope of Proposition~\ref{prop:simple_teacher_structured_bias} (panel b); the two are parallel by construction. At the default schedule ($\sigma\ge0.05$, $\lambda=0.2$) the envelope is wide and the bias small.}
\label{fig:teacher_bias_diagnostic}
\end{figure}

Second, the bias can be made visible by moving into the regime where the proposition says it should grow. Reducing $\sigma_{\min}$ or increasing $\lambda$ amplifies the discrepancy between the two teachers by the predicted factor $\lambda/\sigma^2$, and the direction of the measured difference field agrees with $\nabla_{FS}\log W_t$ to a cosine of $1.000$ in all four settings (Fig.~\ref{fig:teacher_bias_diagnostic}). The bias is thus not merely small; it has the structure the theory assigns it.

\begin{figure}[ht]
    \centering
    \includegraphics[width=\columnwidth]{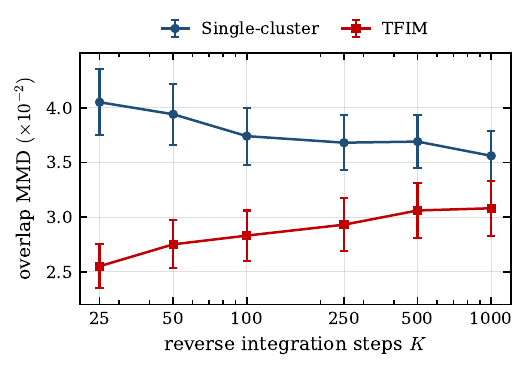}
    \caption{Sensitivity of reverse sampling to the number of integration steps, under the protocol of Section~\ref{sec:method_recipe}, at $n=6$ over three seeds. Across a factor of $40$ in $K$ the metric moves by less than $15\%$, and the two families move in opposite directions, so the step count is not a lever worth tuning here; $K=25$ already suffices.}
\label{fig:finite_step_sensitivity}
\end{figure}

Finally, Fig.~\ref{fig:finite_step_sensitivity} varies the number of reverse-integration steps at evaluation time. Under the protocol of Section~\ref{sec:method_recipe} the dependence is weak: over a factor of $40$ in $K$ the metric changes by less than $15\%$, and the direction differs by family, improving with $K$ on single-cluster and worsening on TFIM. The $500$-step setting used elsewhere is therefore not load-bearing, and a much cheaper sampler would report the same conclusions.

\subsection{What the Exact Heat Kernel Buys}
\label{sec:exp_heatkernel}

$\mathbb{CP}^{d-1}$ is a compact rank-one symmetric space, so its heat kernel is a zonal function of the FS distance alone and is available in closed form as a Jacobi series: with $\lambda_k=4k(k+m)$, $m=d-1$, multiplicities $N_k=\frac{2k+m}{m}\binom{k+m-1}{k}^2$ and zonal functions $P_k^{(m-1,0)}(\cos2\theta)/P_k^{(m-1,0)}(1)$. At small $d$ this makes the oracle available: the local-time teacher can be replaced by the \emph{exact} conditional score $\nabla_{\mathrm{FS}}\log p_{\tau(t)}(\psi_t\mid\psi_0)$, where $\tau(t)=\int_0^t\sigma(u)^2du$ is the diffusion clock. This is the comparison that the symmetric-space structure invites, and it bounds what the local-time approximation costs.

Two practicalities decide how far it can be taken. The series needs $k\sim6/\sqrt{2\tau}$ terms, and below $\tau\approx0.05$ the partial sums cancel catastrophically against a density that is exponentially small in $\theta^2/\tau$, so double precision fails long before truncation does. We therefore use the spectral sum for $\tau\ge0.05$ and the Van Vleck--Morette form $\nabla\log p=-\theta/\tau-\tfrac12\nabla\log\Theta(\theta)$ below it, with $\Theta$ the volume density in normal coordinates; the two agree to $10^{-4}$ relative where both are valid, and the radial score was checked against a $4\times10^5$-sample simulation of the forward process. Because the exact teacher conditions on the full elapsed clock, all three arms in this experiment draw a single diffusion time per minibatch, so that the time a state is noised to is exactly the time it is labelled with. That protocol difference makes the numbers here incomparable with the rest of the paper, but internally consistent.

\begin{table}[t]
\centering
\caption{Exact heat-kernel supervision against the local-time teacher, at the manifolds small enough for the exact kernel to be computable. Values $\times10^{-2}$, mean $\pm$ sd over ten seeds; lower is better. Bold marks a paired difference in favour of the exact teacher with $|t|>2.26$. Both metrics shown are characteristic (Proposition~\ref{prop:kernel_resolution}); all three arms share everything but the teacher.}
\label{tab:heatkernel}
\scriptsize
\resizebox{\columnwidth}{!}{%
\begin{tabular}{ll cc cc}
\toprule
& & \multicolumn{2}{c}{HS-Gaussian MMD} & \multicolumn{2}{c}{energy distance} \\
\cmidrule(lr){3-4}\cmidrule(lr){5-6}
Manifold & Target & exact & ours & exact & ours \\
\midrule
$\mathbb{CP}^1$ & single-cluster & $\mathbf{19.16\pm2.90}$ & $23.78\pm1.92$ & $\mathbf{7.61\pm1.69}$ & $10.77\pm1.59$ \\
$\mathbb{CP}^1$ & bimodal & $4.81\pm0.25$ & $4.82\pm0.25$ & $17.08\pm0.58$ & $17.08\pm0.58$ \\
$\mathbb{CP}^3$ & single-cluster & $\mathbf{12.90\pm2.65}$ & $18.27\pm0.70$ & $\mathbf{10.08\pm3.29}$ & $18.51\pm1.50$ \\
$\mathbb{CP}^3$ & bimodal & $\mathbf{4.38\pm0.87}$ & $6.36\pm0.60$ & $\mathbf{16.03\pm2.31}$ & $21.53\pm1.53$ \\
$\mathbb{CP}^7$ & single-cluster & $\mathbf{10.81\pm2.76}$ & $16.59\pm1.68$ & $\mathbf{13.55\pm5.24}$ & $25.30\pm4.38$ \\
$\mathbb{CP}^7$ & bimodal & $\mathbf{4.23\pm0.80}$ & $5.88\pm0.51$ & $\mathbf{13.60\pm2.13}$ & $18.67\pm1.28$ \\
\bottomrule
\end{tabular}}

\end{table}

Table~\ref{tab:heatkernel} gives the answer, and it is not the one we expected. The exact teacher is better on five of the six settings under both characteristic metrics, by a factor of $1.2$ to $1.9$ on the HS-Gaussian MMD and $1.3$ to $1.9$ on the energy distance, with paired $t$ between $3.9$ and $7.6$ over ten seeds; on the overlap metric the same comparison gives $1.9$ to $2.7$. The exception is $\mathbb{CP}^1$ bimodal, where the target is two antipodal points on a sphere of radius $1/2$ and every arm sits at the Haar reference, so the row carries no information. The local-time approximation therefore costs between $1.2$ and $2.7\times$ against the oracle it approximates depending on the metric, and that cost does not shrink with dimension over the range where we can measure it.

Two things follow. First, the claim we can support is narrower than we had assumed: the local-time teacher is not a free stand-in for the exact kernel at these step sizes, and the residual is not dominated by other sources of variance. Second, the trade is still worth making where the oracle is unavailable, which is almost everywhere. The number of terms the series needs grows as $\tau^{-1/2}$ while the multiplicities grow as $k^{2m}$, so at $n=6$ the sum at the small-$\tau$ end of our schedule is not merely expensive but numerically unrepresentable, and the Van Vleck--Morette fallback we use below $\tau=0.05$ \emph{is} a local-time teacher, differing from ours only by the volume term. What the exact route offers is a factor of up to $2.7$ on manifolds of complex dimension at most seven, at the price of a per-manifold spectral analysis; what the local-time route offers is a teacher that is available at every $d$ and requires none. Closing that gap without the spectral machinery --- for instance by adding the volume correction alone --- is the most concrete improvement this paper points to.

\subsection{Is the Geometry Implemented as Claimed? (RQ4)}
\label{sec:exp_diagnostics}

The claims of Section~\ref{sec:method} are geometric, so they should be checked directly rather than inferred from generation quality. We report three checks; details and two further diagnostics are in Supplementary Material~\ref{sec:appendix_generator_test} and Supplementary Material~\ref{sec:appendix_isotropy_diagnostic}.

\paragraph{The forward process reaches the prior, checked with metrics that can see it.}
The reverse sampler is initialized from exact Haar samples, so it is only correct if the forward process actually transports $p_0$ to $\mu_{\mathrm{FS}}$. The overlap kernel cannot check this: it is not characteristic, so $\mathrm{MMD}_{\mathrm{overlap}}(p_T,\mu_{\mathrm{FS}})=0$ is consistent with $p_T\ne\mu_{\mathrm{FS}}$. We therefore propagate the data ensemble through the full horizon and compare against Haar under the two characteristic metrics of Section~\ref{sec:exp_rsgm}, reporting the other two alongside (Table~\ref{tab:prior_check}). At $n=4$ and $n=6$ every metric sits at the Haar--Haar sampling floor, so the terminal marginal is Haar as far as any of them can resolve. At $n=2$ it is not: all four metrics separate $p_T$ from $\mu_{\mathrm{FS}}$ by one to two orders of magnitude above the floor. This is mixing time rather than an implementation error. The relaxation rate of the first eigenspace is $\sigma(t)^2\lambda_1/2=2\sigma(t)^2 d$, so over the diffusion clock $\tau(T)=\int_0^T\sigma(u)^2du=0.166$ of our schedule the first mode is damped by $e^{-2d\tau(T)}$: that is $e^{-1.3}=0.26$ at $n=2$, $e^{-5.3}=5\times10^{-3}$ at $n=4$ and $e^{-21}$ at $n=6$. The horizon is simply too short to mix $\mathbb{CP}^3$ --- but it means the $n=2$ results carry a prior mismatch that the larger ones do not, and it is one more reason to read $n=2$ as a sanity check rather than as evidence.

\begin{table}[t]
\centering
\caption{Terminal marginal of the forward process against Haar, under the metrics of Section~\ref{sec:exp_rsgm}, with $1024$ samples per ensemble. ``Floor'' is the same statistic between two independent Haar batches; negative values are unbiased-estimator noise. At $n\ge4$ every metric is at the floor; at $n=2$ none of them is.}
\label{tab:prior_check}
\resizebox{\columnwidth}{!}{%
\begin{tabular}{lcccc}
\toprule
$n$ & overlap & HS-Gaussian & two-copy & energy \\
\midrule
$2$          & $4.5\times10^{-2}$  & $1.9\times10^{-2}$  & $3.1\times10^{-2}$  & $4.3\times10^{-2}$ \\
\quad floor  & $0.0$               & $-2.5\times10^{-4}$ & $-3.6\times10^{-4}$ & $1.8\times10^{-3}$ \\
\midrule
$4$          & $0.0$               & $-7.7\times10^{-5}$ & $-6.4\times10^{-5}$ & $2.50\times10^{-3}$ \\
\quad floor  & $0.0$               & $-6.3\times10^{-5}$ & $-4.9\times10^{-5}$ & $2.53\times10^{-3}$ \\
\midrule
$6$          & $0.0$               & $-1.1\times10^{-5}$ & $-3.3\times10^{-6}$ & $2.715\times10^{-3}$ \\
\quad floor  & $0.0$               & $-1.2\times10^{-6}$ & $-1.9\times10^{-6}$ & $2.737\times10^{-3}$ \\
\bottomrule
\end{tabular}}

\end{table}

\paragraph{The induced process agrees with FS Brownian motion on the first non-trivial eigenspace.}
Verifying that the tangent-noise covariance is isotropic constrains only the second-order symbol of the generator; it says nothing about the connection and drift terms. We therefore test the generator on a known eigenspace. On $\mathbb{CP}^{d-1}$ the functions $f_\chi(\psi)=|\braket{\chi|\psi}|^2$ span the first non-trivial eigenspace of $\Delta_{\mathrm{FS}}$, so a process generated by $(\sigma^2/2)\Delta_{\mathrm{FS}}$ must make $\mathbb{E}[f_\chi(\psi_t)]-1/d$ decay as a \emph{single} exponential, at a rate that is independent of $\chi$ and proportional to $\sigma^2$, with the constant fixed by the eigenvalue.

\begin{table}[t]
\centering
\caption{Generator check via eigenfunction decay at $n=6$. Rates are fitted over the window where the signal exceeds $3/d$. A correct generator predicts a $\chi$-independent rate proportional to $\sigma^2$; the predicted value of $\text{rate}/\sigma^2$ is $\lambda_1/2=2d=128$, where $\lambda_1=4d$ is the first non-zero eigenvalue of $\Delta_{\mathrm{FS}}$ on $\mathbb{CP}^{d-1}$ in the normalization of Section~\ref{sec:background} ($d_{\mathrm{FS}}=\arccos|\braket{\psi|\phi}|$, diameter $\pi/2$). The deviation grows with the step displacement, and a linear extrapolation of $\text{rate}/\sigma^2$ in $\sigma^2$ to zero step size gives $127.9$.}
\label{tab:generator_test}
\resizebox{\columnwidth}{!}{%
\begin{tabular}{ccccc}
\toprule
$\sigma$ & fitted rate & sd across $\chi$ & min $R^2$ & rate$/\sigma^2$ \\
\midrule
0.15 & 2.8704  & 0.0056 & 0.99998 & 127.57 \\
0.25 & 7.9209  & 0.0258 & 0.99997 & 126.74 \\
0.35 & 15.4206 & 0.0756 & 0.99996 & 125.88 \\
\bottomrule
\end{tabular}}

\end{table}

Table~\ref{tab:generator_test} confirms all three predictions at $n=6$: single-exponential fits with $R^2\ge0.9999$, a rate that varies by less than $0.5\%$ across test functions, $\mathrm{rate}/\sigma^2$ constant to $1.3\%$ across $\sigma$, and a value that falls short of the predicted $\lambda_1/2=2d=128$ by $0.34\%$, $0.98\%$ and $1.66\%$ at $\sigma=0.15,0.25,0.35$. The deviation grows with $\sigma$ at fixed $\delta t$, which is the signature of the $O(\delta t)$ weak error of the geodesic random walk rather than of a wrong generator, and extrapolating $\text{rate}/\sigma^2$ linearly in $\sigma^2$ to zero step size gives $127.9$ against the continuum value $128$. We checked the identity directly at $d=2,4,8$, where a $2\times10^5$-sample simulation gives $3.99$, $8.02$ and $16.02$ against the predicted $4$, $8$ and $16$. As a further check on the drift, the terminal law of the forward process is the FS/Haar measure to within the Haar--Haar sampling floor, both with $\lambda=0.2$ and with $\lambda=0$; an incorrect connection term would generically destroy the unitarily-invariant invariant measure.

\paragraph{The SSE realization and the tangent-projected implementation agree.}
Building all $d^2-1=4095$ generalized Gell--Mann generators at $n=6$ and applying the forward noise as a strictly unitary step $U=\exp(-i\sum_a c_a\lambda_a)$, the induced horizontal increment matches the analytic prediction $\mathbb{E}\|\Delta\|^2=2(d-1)\sigma^2\delta t$ to $0.1\%$ at $\sigma=0.068$ and $0.6\%$ at $\sigma=0.224$, and the covariance spectrum matches the Marchenko--Pastur law for an exactly isotropic Gaussian: with $p=2(d-1)=126$ coordinates the measured ratio of eigenvalue standard deviation to mean is $0.1241$ at $N=8192$ against the predicted $0.1240$, and $0.0621$ at $N=32768$ against $0.0620$, i.e.\ the residual anisotropy is entirely finite-sample and shrinks as $\sqrt{p/N}$. The MMD between the forward marginals of the two implementations is $0.0$ at $t=0.25,0.5,1.0$ against a data--data floor of $5.8\times10^{-5}$. The one measurable difference appears at large steps, where the unitary realization retains $90\%$ of the intended variance against $80\%$ for tangent projection with normalization retraction. The SSE is therefore an equivalent realization at our schedule, and a slightly more faithful one at large $\sigma$ --- not a computational requirement, which we state plainly because an earlier version of this work presented it as one.

\paragraph{The quantities the objective consumes are measurable on hardware.}
The training objective touches the data only through overlaps. On \texttt{ibm\_berlin} we estimated all pairwise overlaps among $8$ target and $8$ generated states at $n=2$ with compute--uncompute circuits, $128$ circuits at $4096$ shots, and rebuilt the kernel from measurements alone.

\begin{table}[t]
\centering
\caption{Overlap estimation on IBM hardware ($n=2$, $128$ circuits, $4096$ shots).}
\label{tab:hardware}
\resizebox{\columnwidth}{!}{%
\begin{tabular}{lc}
\toprule
Quantity & Value \\
\midrule
Self-overlap circuits (exact value $1$)        & $0.977$ \\
Shot-noise floor per overlap                   & $7.8\times10^{-3}$ \\
Overlap MAE, raw                               & $2.4\times10^{-2}$ \\
Overlap MAE, after self-overlap correction     & $1.5\times10^{-2}$ \\
MMD from exact statevectors                    & below estimator resolution \\
MMD rebuilt from hardware overlaps             & $1.2\times10^{-2}$ \\
\bottomrule
\end{tabular}}

\end{table}

The overlaps are measurable at small $n$ (Table~\ref{tab:hardware}); the metric built on them is not yet, since overlap-estimation error sets a ${\sim}10^{-2}$ floor on any measurement-only evaluation at this shot budget --- the same order as the method differences in Table~\ref{tab:rsgm_four_metric}. The remaining obstacle is the score output itself, which lives in $\mathbb{C}^d$ and would have to be restricted to a polynomially sized operator basis.

\paragraph{The learned score agrees with an analytically known one.}
On $\mathbb{CP}^1$ a von Mises--Fisher-like target admits a closed-form Riemannian score, which lets us compare the learned field against the truth rather than only comparing samples.

\begin{table}[ht]
\centering
\caption{Exact-score diagnostic on $\mathbb{CP}^{1}$. Score errors are evaluated on held-out states; sampler metrics compare generated and target samples. Lower is better for relative error and MMD; higher is better for cosine similarity.}
\label{tab:exact_score_diagnostic}
\resizebox{\columnwidth}{!}{%
\begin{tabular}{p{4.4cm}ccc}
\toprule
Score used & Relative score error $\downarrow$ & Score cosine $\uparrow$ & MMD $\downarrow$ \\
\midrule
Exact Riemannian score & -- & -- & $1.1\times 10^{-3}$ \\
PSM learned score & $8.3\times 10^{-2}$ & $0.987$ & $2.4\times 10^{-3}$ \\
Zero vector control & $1.0$ & $0.000$ & $7.6\times 10^{-2}$ \\
\bottomrule
\end{tabular}}

\end{table}

The learned score attains cosine similarity $0.987$ with the exact score on held-out states, and sampling with it costs about a factor of two in MMD relative to sampling with the exact score --- against a zero-field control that is two orders of magnitude worse. The local-time objective therefore recovers a known Riemannian score where one is available.

\paragraph{What imposing exact phase equivariance costs.}
The score model reads $(\mathrm{Re}\,\psi,\mathrm{Im}\,\psi)$ and projects its output onto the horizontal tangent space, which constrains the output but does not make the model equivariant: measured at $n=6$, the violation of $s_\theta(e^{i\varphi}\psi,t)=e^{i\varphi}s_\theta(\psi,t)$ is a relative error of $1.04$. Two canonical gauges remove it exactly, by evaluating the network on a canonical representative and rotating the output back: the phase of the largest-modulus amplitude, and the phase of the overlap with a fixed uniform reference. Both reduce the violation to $1.3\times10^{-7}$, and neither is uniformly beneficial.

\begin{table}[t]
\centering
\caption{Effect of imposing exact $U(1)$-equivariance by gauge fixing, at $n=6$. Best-checkpoint MMD ($\times10^{-2}$), mean $\pm$ sd over $3$ seeds, $2000$ steps, all else identical. Both gauges make the model exactly equivariant; neither dominates.}
\label{tab:gauge_ablation}
\resizebox{\columnwidth}{!}{%
\begin{tabular}{lccc}
\toprule
Benchmark & no gauge fixing & argmax gauge & reference gauge \\
\midrule
Single-cluster      & $9.91\pm0.49$ & $\mathbf{8.01\pm0.70}$ & $16.91\pm0.76$ \\
Trimodal            & $7.75\pm0.75$ & $\mathbf{4.90\pm1.29}$ & $9.19\pm1.34$ \\
Equatorial bimodal  & $4.76\pm0.27$ & $\mathbf{2.92\pm0.36}$ & $9.17\pm0.14$ \\
Spin-coherent       & $\mathbf{3.74\pm0.27}$ & $8.41\pm0.46$ & $8.18\pm1.96$ \\
TFIM                & $\mathbf{3.61\pm0.14}$ & $5.69\pm0.83$ & $6.82\pm1.03$ \\
XXZ                 & $3.52\pm0.38$ & $3.78\pm0.35$ & $\mathbf{3.28\pm0.21}$ \\
\bottomrule
\end{tabular}}

\end{table}

Table~\ref{tab:gauge_ablation} shows the pattern: gauge fixing improves the synthetic pole and equator families by $19$--$39\%$ and degrades the physics-derived ground states by $50$--$125\%$. We first attributed this to the discontinuity of the $\arg\max$ gauge on delocalized states; the reference gauge was built to remove that discontinuity and is best conditioned precisely on the spin-coherent family, yet it does not recover the loss. Randomising the phase of the data (Section~\ref{sec:method_recipe}) achieves the same invariance in distribution without this cost, and is what we use.

\section{Conclusion}

Defining score-based diffusion intrinsically on the pure-state manifold works, and the part that carries it is the supervision rather than the geometry of the forward process: the local-time teacher, scaled by the diffusion clock rather than by the elapsed time, is the difference between learning the target and returning the prior, and is worth a further $1.15$--$1.32$ against the published normalization, while the forward drift and the stochastic Schr\"odinger realization turn out to be inessential. Proposition~\ref{prop:simple_teacher_structured_bias} identifies the leading term of that teacher's bias as a Gaussian-envelope reweighting of the target, and the induced process matches Fubini--Study Brownian motion on the first non-trivial Laplace--Beltrami eigenspace. The limits are equally definite: the model is already degraded at $8$ qubits and stops learning by $10$ on a concentration-matched benchmark, and the overlap-kernel MMD standard in this literature compares only mean density matrices, on hardware as well as in simulation. Together with the factor of up to $2.7$ that the exact heat kernel is worth wherever it can be computed, they point the same way --- towards exact heat kernels on $\mathbb{CP}^{d-1}$ as a symmetric space, evaluated with metrics that separate ensembles, and a score model that does not consume the full statevector.

\bibliographystyle{IEEEtran}
\bibliography{references}

\appendices
\section{Theory Provenance and Paper-Specific Contributions}
\label{sec:appendix_theory_provenance}

For clarity, we separate the standard theoretical ingredients from the components that are specific to PSMs on $\mathbb{CP}^{d-1}$.
The following components are direct applications or mild adaptations of established theory: the Euclidean reverse-time score formula \cite{anderson1982reversetime,song2021scorebased} and its Riemannian counterpart for Brownian-driven diffusions \cite{huang2022riemannian,bortoli2022riemannian}; the Minakshisundaram--Pleijel heat-kernel parametrix on Riemannian manifolds \cite{hsu2002stochmanifolds}; Stratonovich-to-It\^o conversion; and the Riemannian-volume-measure denoising score-matching identity \cite{bortoli2022riemannian}.

The paper-specific theoretical components are:
\begin{itemize}
    \item the identification of the SSE Stratonovich dynamics in Eq.~\eqref{eq:sse_strat} as a quotient-space diffusion inducing a dispersive-drift flow on $\mathbb{CP}^{d-1}$, with explicit curvature/connection remainders and finite-step bounds;
    \item the practical isotropy diagnostic for generalized Gell--Mann directions on $\mathbb{CP}^{d-1}$, together with numerical verification under the implementation used in the experiments;
    \item the drift-aware short-time score expansion in FS normal coordinates, which separates the singular Gaussian score, the bounded OU-drift term, and the Jacobian/curvature correction;
    \item the distinction between pointwise consistency of the drift-corrected teacher and variance-weighted consistency of the simple zero-mean teacher, including the structural finite-step characterization of the latter;
    \item the resulting local-time teacher construction as a tractable substitute for unavailable closed-form transition densities on $\mathbb{CP}^{d-1}$.
\end{itemize}

\section{Induced Manifold Diffusion from Tangent-Projected Stratonovich Dynamics}
\label{sec:prop_induced_diffusion_strict}

We formalize the statement that a tangent-projected Stratonovich dynamics on the Hilbert sphere induces a diffusion on $\mathbb{CP}^{d-1}$ whose generator matches the intrinsic FS diffusion up to explicit connection terms.

We restate Proposition~\ref{prop:induced_diffusion_strict} from the main text and prove it.

\begin{proof}
\textbf{Step 1 (Well-defined induced process and induced SDE).}
Since $b(\psi,t)$ and $V_k(\psi)$ are horizontal and $U(1)$-equivariant, their pushforwards $a(x,t):=\pi_\ast b$ and $e_k(x):=\pi_\ast V_k$ are well-defined on $\mathbb{CP}^{d-1}$.
Let $f\in C^\infty(\mathbb{CP}^{d-1})$ and define its lift $\bar f:=f\circ \pi$ on $\mathbb{S}^{2d-1}$.
By the Stratonovich chain rule,
\begin{equation}
\begin{aligned}
    d(f(x_t))
    &=
    d(\bar f(\psi_t))
    =
    \langle \nabla \bar f(\psi_t),\, d\psi_t\rangle
    \\
    &=
    (b\,\bar f)(\psi_t,t)\,dt
    + \sigma(t)\sum_k (V_k\bar f)(\psi_t)\circ dW_t^{(k)}.
\end{aligned}
    \label{eq:strict_chain_rule}
\end{equation}
Using $\bar f=f\circ \pi$ and the definition of pushforward, for any horizontal vector field $V$ we have
\begin{equation}
    (V \bar f)(\psi)= ( \pi_\ast V \, f)([\psi]).
\end{equation}
Applying this to $b$ and $V_k$ turns \eqref{eq:strict_chain_rule} into
\begin{equation}
    d(f(x_t))
    =
    (a f)(x_t,t)\,dt
    + \sigma(t)\sum_k (e_k f)(x_t)\circ dW_t^{(k)}.
\end{equation}
Since this holds for all smooth test functions $f$, it identifies the induced Stratonovich SDE on $\mathbb{CP}^{d-1}$.

\textbf{Step 2 (Generator in a chosen frame).}
For a Stratonovich SDE on a manifold
\(
dx_t = a\,dt + \sum_k \sigma e_k\circ dW_t^{(k)},
\)
the generator acting on $f$ is (standard)
\begin{equation}
    \mathcal{L}_t f
    =
    a f
    + \frac{\sigma(t)^2}{2}\sum_{k=1}^K e_k(e_k f).
\end{equation}
Using the Levi--Civita connection, $e_k(e_k f)=\nabla_{e_k}\nabla_{e_k} f$ for scalar $f$, which gives \eqref{eq:strict_generator_frame}.

\textbf{Step 3 (Relation to the Laplace--Beltrami operator and the explicit remainder).}
If $\{e_k\}$ is an orthonormal frame, the FS Laplace--Beltrami operator satisfies the local identity
\begin{equation}
    \Delta_{\mathrm{FS}} f
    =
    \sum_{k=1}^K \Big(\nabla_{e_k}\nabla_{e_k} f - \nabla_{\nabla_{e_k}e_k} f\Big).
    \label{eq:strict_laplacian_identity}
\end{equation}
Rearranging \eqref{eq:strict_laplacian_identity} yields
\begin{equation}
    \sum_{k=1}^K \nabla_{e_k}\nabla_{e_k} f
    =
    \Delta_{\mathrm{FS}} f
    +
    \sum_{k=1}^K \nabla_{\nabla_{e_k}e_k} f.
\end{equation}
Since $\nabla_{v} f=\langle \nabla_{\mathrm{FS}} f, v\rangle_{\mathrm{FS}}$ for any vector field $v$, we obtain
\begin{equation}
    \sum_{k=1}^K \nabla_{\nabla_{e_k}e_k} f
    =
    \sum_{k=1}^K \langle \nabla_{\mathrm{FS}} f,\ \nabla_{e_k}e_k\rangle_{\mathrm{FS}}.
\end{equation}
Substituting into \eqref{eq:strict_generator_frame} gives \eqref{eq:strict_generator_laplacian} and the explicit remainder \eqref{eq:strict_generator_laplacian}.
Finally, if the orthonormal frame is geodesic at $x$ (so $\nabla_{e_k}e_k(x)=0$), then $\mathcal{R}_t f(x)=0$.
\end{proof}

\section{SSE Realization and the Induced Diffusion on $\mathbb{CP}^{d-1}$}
\label{sec:appendix_ou_sse}

This part of the supplementary material explains how the Stratonovich stochastic Schr\"odinger dynamics on the unit Hilbert sphere induces an (approximately) isotropic diffusion on the projective manifold $\mathbb{CP}^{d-1}$ after quotienting out the global phase.
We also clarify in what sense the induced generator matches the intrinsic manifold diffusion in Eq.~\eqref{eq:riem_diff} up to curvature/connection terms.

\subsection{\quad The Role of the SSE Realization}
\label{sec:appendix_sse_clarification}

PSM is a classical model: the score network, the training data (simulated statevectors) and the reverse-time sampler are all classical objects, and no quantum hardware is required to run any of it. The stochastic Schr\"odinger equation \eqref{eq:sse_strat} enters as a construction for the forward noising process, and it is worth being precise about what it does and does not contribute.

It contributes two things. First, a canonical noise basis: the $\{-iG_k\}$ directions of an $\mathfrak{su}(d)$ frame are basis-independent and unitarily covariant, and after the $U(1)$ quotient they induce an isotropic diffusion on $\mathbb{CP}^{d-1}$ (Propositions~\ref{prop:induced_diffusion_strict}--\ref{prop:induced_diffusion_strict}, with the remainder in Corollary~\ref{cor:small_step_remainder}); noise directions chosen without this structure would generically break either covariance or isotropy. Second, the Stratonovich form is exactly norm- and phase-preserving, which is what makes the large-step behaviour better than tangent projection with a normalization retraction: at $\sigma=1.0$ the unitary step retains $90\%$ of the intended variance against $80\%$ (Section~\ref{sec:exp_diagnostics}).

It does not contribute necessity. The two implementations induce statistically identical processes at our schedule, so nothing in the method requires the SSE, and an earlier version of this work overstated its role. Nor is the SSE a model of a physical experiment: the drift coefficients $H(t),\eta(t)$ are chosen so that the induced manifold flow is isotropic, not fitted to any Lindbladian; the sampler is a numerical integrator, not a measurement protocol; and the diffusion time $t\in[0,T]$ is generative-modeling time, not physical time. A reader who prefers a purely classical reading may take ``SSE realization'' to mean unitarily covariant Stratonovich noise on the Hilbert sphere, quotiented to $\mathbb{CP}^{d-1}$; no statement or result changes.

\subsection{\quad From the Hilbert Sphere to the Projective Manifold}
\label{sec:appendix_a1}

Let $\mathbb{S}^{2d-1}=\{\psi\in\mathbb{C}^d:\langle \psi,\psi\rangle=1\}$ be the unit sphere in $\mathbb{C}^d$ equipped with the standard (real) Riemannian structure.
The complex projective space $\mathbb{CP}^{d-1}$ is obtained as the quotient $\mathbb{S}^{2d-1}/U(1)$ under the global phase action $\psi\sim e^{i\theta}\psi$, with the canonical projection
\begin{equation}
    \pi:\mathbb{S}^{2d-1}\to\mathbb{CP}^{d-1},\qquad \pi(\psi)=[\psi].
\end{equation}
The vertical space at $\psi$ is spanned by the infinitesimal phase direction $v(\psi)=i\psi$.
We use the horizontal distribution
\begin{equation}
    \mathcal{H}_\psi := \{u\in T_\psi\mathbb{S}^{2d-1}:\ \langle \psi,u\rangle=0\},
    \label{eq:horizontal_space}
\end{equation}
which removes the phase component and is compatible with the Fubini--Study geometry.

\begin{lemma}[Quotient structure and horizontal lift]
\label{lem:quotient_horizontal}
With the horizontal distribution \eqref{eq:horizontal_space}, $\pi$ is a Riemannian submersion onto $(\mathbb{CP}^{d-1},g_{\mathrm{FS}})$.
Moreover, for each $\psi\in\mathbb{S}^{2d-1}$, the differential $\pi_\ast$ restricts to an isomorphism $\mathcal{H}_\psi \cong T_{[\psi]}\mathbb{CP}^{d-1}$.
\end{lemma}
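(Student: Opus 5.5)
The plan is to obtain the Fubini--Study metric as the quotient metric: first check that $U(1)$ acts by isometries, then identify the horizontal space with the orthogonal complement of the orbit direction. The action $\psi\mapsto e^{i\theta}\psi$ preserves the real inner product $\mathrm{Re}\langle u,w\rangle$ on $\mathbb{C}^d$. It is therefore a free, proper, isometric action of a compact group on $\mathbb{S}^{2d-1}$. By the standard quotient-manifold theorem, $\mathbb{S}^{2d-1}/U(1)$ is then a smooth manifold, $\pi$ is a smooth submersion, and there is a unique metric on the quotient making $\pi$ a Riemannian submersion. The task is to show that this metric is $g_{\mathrm{FS}}$, and that the horizontal distribution induced by the quotient construction is exactly \eqref{eq:horizontal_space}.

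\textbf{Step 1: tangent, vertical and horizontal spaces.} The tangent space is $T_\psi\mathbb{S}^{2d-1}=\{u:\mathrm{Re}\langle\psi,u\rangle=0\}$. The vertical space is $\ker\pi_\ast=\mathbb{R}\,i\psi$, obtained by differentiating the orbit. The real orthogonal complement of $i\psi$ inside $T_\psi\mathbb{S}^{2d-1}$ adds the condition $\mathrm{Re}\langle i\psi,u\rangle=\mathrm{Im}\langle\psi,u\rangle=0$ (up to sign convention). Combined with the tangency condition, this gives $\langle\psi,u\rangle=0$. So \eqref{eq:horizontal_space} is precisely the metric orthogonal complement of the vertical space. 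A dimension count, $(2d-1)-1=2(d-1)=\dim_{\mathbb{R}}\mathbb{CP}^{d-1}$, together with surjectivity of $\pi_\ast$ (since $\pi$ is a submersion) and $\ker\pi_\ast\cap\mathcal{H}_\psi=0$, yields the isomorphism $\mathcal{H}_\psi\cong T_{[\psi]}\mathbb{CP}^{d-1}$.

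\textbf{Step 2: the induced metric is well defined.} Define $g([\psi])(\pi_\ast u,\pi_\ast w):=\mathrm{Re}\langle u,w\rangle$ for $u,w\in\mathcal{H}_\psi$. Independence of the representative follows because $d(e^{i\theta}\cdot)$ maps $\mathcal{H}_\psi$ isometrically onto $\mathcal{H}_{e^{i\theta}\psi}$ and commutes with $\pi_\ast$. Smoothness follows from local sections of $\pi$. By construction, $\pi$ is then a Riemannian submersion.

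\textbf{Step 3: identification with $g_{\mathrm{FS}}$.} This is the step I expect to be the main obstacle, because it requires matching the normalization used in the paper, namely $d_{\mathrm{FS}}=\arccos|\langle\psi,\phi\rangle|$. I would proceed through geodesics. Horizontal great circles $\gamma(s)=\cos s\,\psi+\sin s\,u$, with $u\in\mathcal{H}_\psi$ a unit vector, remain horizontal for all $s$: the velocity $-\sin s\,\psi+\cos s\,u$ is Hermitian-orthogonal to $\gamma(s)$. They are geodesics of the sphere, so by O'Neill's theorem they project to geodesics of the quotient. This also matches the closed form \eqref{eq:exact_exp}. Along such a curve the induced length satisfies $s=\arccos|\langle\psi,\gamma(s)\rangle|$ for $s\le\pi/2$. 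It remains to show that these projected curves are minimizing for $s\le\pi/2$. For that I would use the fact that $\pi$ does not increase lengths, since $\|\pi_\ast v\|\le\|v\|$, and take the minimum over the fibre: the distance from $\psi$ to the circle $e^{i\theta}\phi$ is $\min_\theta\arccos\mathrm{Re}\langle\psi,e^{i\theta}\phi\rangle=\arccos|\langle\psi,\phi\rangle|$. Hence the quotient distance equals \eqref{eq:fs_distance}, and the quotient metric is the Fubini--Study metric in the paper's normalization, with diameter $\pi/2$. As a cross-check, I would compare with the coordinate formula $ds^2=\frac{\langle d\psi,d\psi\rangle}{\langle\psi,\psi\rangle}-\frac{|\langle\psi,d\psi\rangle|^2}{\langle\psi,\psi\rangle^2}$ restricted to the sphere. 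On $\mathcal{H}_\psi$ this reduces to $\|u\|^2$.
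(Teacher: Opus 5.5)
Steps 1 and 2 are correct. The gap is the minimality claim in Step 3, where the inequality you invoke points the wrong way. The fact $\|\pi_\ast v\|\le\|v\|$ says only that projecting a curve from the sphere can shorten it. Apply it to a great-circle arc from $\psi$ to $e^{i\theta}\phi$ and minimize over $\theta$: you get $d_{\mathrm{quot}}([\psi],[\phi])\le\arccos|\langle\psi,\phi\rangle|$. That is the upper bound, which your projected horizontal great circle already gave you.

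Minimality needs the reverse inequality: no curve in the quotient from $[\psi]$ to $[\phi]$ is shorter than $\arccos|\langle\psi,\phi\rangle|$. Length non-increase says nothing about curves that live downstairs. The missing ingredient is horizontal path lifting. The fibres are compact circles, so every piecewise smooth curve $c$ from $[\psi]$ to $[\phi]$ has a horizontal lift $\tilde c$ starting at $\psi$. Because $\pi_\ast$ is isometric on $\mathcal{H}$ (your Step 2), $L(\tilde c)=L(c)$. The lift ends at some $e^{i\theta}\phi$, so $L(c)\ge d_{\mathbb{S}}(\psi,e^{i\theta}\phi)\ge\min_\theta\arccos\mathrm{Re}\langle\psi,e^{i\theta}\phi\rangle=\arccos|\langle\psi,\phi\rangle|$. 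Alternatively, show that $[\chi]\mapsto\arccos|\langle\psi,\chi\rangle|$ is $1$-Lipschitz for the quotient metric.

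You also pass silently from ``same distance function'' to ``same metric tensor.'' That step is true, for instance by Myers--Steenrod or by recovering $g(v,v)$ as $\lim_{t\to0}d(x,\mathrm{Exp}_x(tv))^2/t^2$, but it should be stated.

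For comparison, the paper gives no proof of this lemma; it treats it as the standard Hopf-fibration fact. In its RSGM implementation details it takes $g_{\mathrm{FS},\psi}(\xi,\eta)=\mathrm{Re}\langle\xi,\eta\rangle$ on horizontal vectors as the working definition. Under that convention your Steps 1--2 already form the whole proof, and Step 3 is unnecessary.

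If $g_{\mathrm{FS}}$ is meant in its standard coordinate form, the computation you list as a cross-check is itself a complete and shorter identification. Pulled back to the sphere, $ds^2$ evaluates on $u\in T_\psi\mathbb{S}^{2d-1}$ to $\|u\|^2-|\langle\psi,u\rangle|^2=\|u-\langle\psi,u\rangle\psi\|^2$. This vanishes on $i\psi$ and equals $\|u\|^2$ on $\mathcal{H}_\psi$, which is exactly the Riemannian-submersion condition, with no geodesic or distance computation needed.

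The distance route is needed only if you take \eqref{eq:fs_distance} as the definition of $g_{\mathrm{FS}}$. In that case it requires the lifting argument above.
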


\begin{lemma}[Frame independence under phase]
\label{lem:phase_independence}
Let $V(\psi)\in\mathcal{H}_\psi$ be a horizontal vector field on $\mathbb{S}^{2d-1}$ satisfying $U(1)$-equivariance:
$V(e^{i\theta}\psi)=e^{i\theta}V(\psi)$.
Then the pushforward $e([\psi]) := \pi_\ast V(\psi)$ is well-defined on $\mathbb{CP}^{d-1}$ (independent of the representative of $[\psi]$).
\end{lemma}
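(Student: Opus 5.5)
The claim is that $e([\psi])$ does not depend on which unit representative of $[\psi]$ is used. Every representative has the form $e^{i\theta}\psi$. So it suffices to show $\pi_\ast|_{e^{i\theta}\psi}V(e^{i\theta}\psi)=\pi_\ast|_\psi V(\psi)$ as elements of $T_{[\psi]}\mathbb{CP}^{d-1}$. The plan is to compare the two differentials of $\pi$ through the phase map $R_\theta:\mathbb{S}^{2d-1}\to\mathbb{S}^{2d-1}$, $R_\theta(\phi)=e^{i\theta}\phi$.

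First, $\pi\circ R_\theta=\pi$, because $\pi$ is constant on $U(1)$-orbits. The chain rule then gives
\[
  \pi_\ast|_{e^{i\theta}\psi}\circ (dR_\theta)_\psi=\pi_\ast|_\psi .
\]
Second, $R_\theta$ is the restriction of a $\mathbb{C}$-linear map of $\mathbb{C}^d$, so its differential is the same map: $(dR_\theta)_\psi u=e^{i\theta}u$ for $u\in T_\psi\mathbb{S}^{2d-1}$. This map sends $\mathcal{H}_\psi$ onto $\mathcal{H}_{e^{i\theta}\psi}$, since $\langle e^{i\theta}\psi,e^{i\theta}u\rangle=\langle\psi,u\rangle=0$. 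In particular $e^{i\theta}V(\psi)$ is horizontal at the new base point, as is needed for Lemma~\ref{lem:quotient_horizontal} to apply there.

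Combining the two facts with the equivariance hypothesis gives
\[
  \pi_\ast|_{e^{i\theta}\psi}V(e^{i\theta}\psi)
  =\pi_\ast|_{e^{i\theta}\psi}\bigl(e^{i\theta}V(\psi)\bigr)
  =\pi_\ast|_{e^{i\theta}\psi}(dR_\theta)_\psi V(\psi)
  =\pi_\ast|_\psi V(\psi),
\]
which is the required independence. By Lemma~\ref{lem:quotient_horizontal}, $\pi_\ast$ restricted to each horizontal space is an isomorphism. Hence the well-defined vector $e([\psi])$ corresponds to exactly one horizontal vector at each representative, and these are related by multiplication by $e^{i\theta}$. This is the usual horizontal-lift picture, and it is what later gives smoothness of $e$ when $V$ is smooth: take a local smooth section $s$ of $\pi$ and write $e=\pi_\ast(V\circ s)$.

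No step is a real obstacle. The one point needing care is the identification $(dR_\theta)_\psi=e^{i\theta}\cdot$ together with the matching of base points. The equivariance $V(e^{i\theta}\psi)=e^{i\theta}V(\psi)$ is an identity between vectors at different points. It says exactly what is needed only once $e^{i\theta}\cdot$ is read as the pushforward of tangent vectors under the phase action. If that is left implicit, the argument looks circular.
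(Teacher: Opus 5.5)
Your proof is correct and complete. The key ingredients are $\pi\circ R_\theta=\pi$ with the chain rule, and reading the equivariance hypothesis as $V(e^{i\theta}\psi)=(dR_\theta)_\psi V(\psi)$; together they give the standard argument.

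The paper gives no proof of this lemma. It states it alongside Lemma~\ref{lem:quotient_horizontal} and simply invokes it in Step~1 of the proof of Proposition~\ref{prop:induced_diffusion_strict} and in part (ii) of Proposition~\ref{prop:phase_aug}. Your write-up therefore supplies the justification the paper leaves implicit.

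One small observation: well-definedness uses only equivariance. Horizontality, and your check that $e^{i\theta}\mathcal{H}_\psi=\mathcal{H}_{e^{i\theta}\psi}$, are not needed for it. They matter only for identifying $V$ as the unique horizontal lift of $e$, which is how the paper later uses the lemma.
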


\subsection{ \quad Horizontal Projection of SSE Vector Fields}
\label{sec:appendix_a2}

Consider the Stratonovich SSE on $\mathbb{S}^{2d-1}$ (Eq.~\eqref{eq:sse_strat} in the main text):
\begin{equation}
\begin{aligned}
    d\ket{\psi_t}
    =&
    -i H(t)\ket{\psi_t}\,dt
    \;-\; i\sqrt{\eta(t)}\sum_k G_k \ket{\psi_t}\circ dW_t^{(k)},
\end{aligned}
\label{eq:sse_strat}
\end{equation}
Both the drift and the diffusion vector fields in \eqref{eq:sse_strat} are anti-Hermitian generators acting on $\ket{\psi_t}$, so \eqref{eq:sse_strat} is norm preserving: Stratonovich calculus obeys the ordinary chain rule, hence $d\|\psi_t\|^2 = 2\,\mathrm{Re}\braket{\psi_t | d\psi_t} = 0$.
It is worth stating the It\^o form explicitly, because the two differ by a term that is easy to misplace:
\begin{equation}
\begin{aligned}
    d\ket{\psi_t}
    =&
    -i H(t)\ket{\psi_t}\,dt
    \;-\;
    \frac{1}{2}\eta(t)\sum_{k} G_k^2 \ket{\psi_t}\,dt
    \\
    &\;-\; i\sqrt{\eta(t)}\sum_k G_k \ket{\psi_t}\, dW_t^{(k)}.
\end{aligned}
\label{eq:sse_ito}
\end{equation}
The $-\tfrac12\eta\sum_k G_k^2$ term in \eqref{eq:sse_ito} is the It\^o correction produced by the Stratonovich-to-It\^o conversion; it must \emph{not} appear alongside $\circ\,dW$, since in that case one would obtain $d\|\psi_t\|^2=-\eta\sum_k\braket{\psi_t|G_k^2|\psi_t}\,dt\neq 0$ and the dynamics would leave the Hilbert sphere.
All generator computations below use the It\^o form \eqref{eq:sse_ito}.
The stochastic term is generated by vector fields $X_k(\psi):= -i\,G_k\psi$ on $\mathbb{S}^{2d-1}$.

In general, $X_k(\psi)$ contains a vertical (phase) component.
We define its horizontal projection by removing the component along $\psi$:
\begin{equation}
    \widetilde X_k(\psi)
    := X_k(\psi) - \langle \psi, X_k(\psi)\rangle\,\psi.
    \label{eq:horizontal_projection}
\end{equation}

\begin{lemma}[Horizontal projection removes the global phase component]
\label{lem:horizontal_projection}
For any $X(\psi)\in T_\psi\mathbb{S}^{2d-1}$, define $\widetilde X(\psi):=X(\psi)-\langle\psi,X(\psi)\rangle\,\psi$.
Then $\widetilde X(\psi)\in\mathcal{H}_\psi$.
In particular, for $X_k(\psi)=-iG_k\psi$, the induced fields on $\mathbb{CP}^{d-1}$ defined by
\begin{equation}
    e_k([\psi]) := \pi_\ast \widetilde X_k(\psi)\ \in\ T_{[\psi]}\mathbb{CP}^{d-1}
    \label{eq:induced_vector_fields}
\end{equation}
are well-defined.
\end{lemma}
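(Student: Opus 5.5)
The plan is a direct computation in two parts: first show that $\widetilde X(\psi)$ is horizontal, then show that $\widetilde X_k$ is $U(1)$-equivariant, so that Lemma~\ref{lem:phase_independence} gives well-definedness of $e_k$. Throughout, $\langle\cdot,\cdot\rangle$ is the Hermitian inner product on $\mathbb{C}^d$, conjugate-linear in the first slot and linear in the second, and we use $\|\psi\|=1$ on $\mathbb{S}^{2d-1}$.

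\textbf{Step 1: horizontality.} Linearity in the second slot gives
\[
\langle\psi,\widetilde X(\psi)\rangle=\langle\psi,X(\psi)\rangle-\langle\psi,X(\psi)\rangle\langle\psi,\psi\rangle=0 .
\]
Hence the Hermitian orthogonality condition defining $\mathcal{H}_\psi$ in \eqref{eq:horizontal_space} holds.

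We must also check that $\widetilde X(\psi)$ is tangent to the sphere. Tangency of $X$ means $\mathrm{Re}\langle\psi,X\rangle=0$, so $\langle\psi,X\rangle=ic$ with $c\in\mathbb{R}$. The subtracted term is therefore $c\,(i\psi)$, a real multiple of the vertical generator $i\psi$. So $\widetilde X$ is $X$ with its vertical component removed, and it stays in $T_\psi\mathbb{S}^{2d-1}$; in particular $\mathrm{Re}\langle\psi,\widetilde X\rangle=0$ trivially.

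For $X_k=-iG_k\psi$ with $G_k$ Hermitian, this is explicit: $\langle\psi,X_k\rangle=-i\langle\psi|G_k|\psi\rangle$ is purely imaginary, as required.

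\textbf{Step 2: equivariance.} By complex linearity of $G_k$,
\[
X_k(e^{i\theta}\psi)=-iG_k e^{i\theta}\psi=e^{i\theta}X_k(\psi).
\]
Sesquilinearity gives $\langle e^{i\theta}\psi,e^{i\theta}X_k(\psi)\rangle=\langle\psi,X_k(\psi)\rangle$, so the scalar coefficient in \eqref{eq:horizontal_projection} is phase-invariant. Therefore
\[
\widetilde X_k(e^{i\theta}\psi)=e^{i\theta}X_k(\psi)-\langle\psi,X_k(\psi)\rangle e^{i\theta}\psi=e^{i\theta}\widetilde X_k(\psi).
\]

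\textbf{Step 3: well-definedness.} $\widetilde X_k$ is horizontal (Step 1) and $U(1)$-equivariant (Step 2), so Lemma~\ref{lem:phase_independence} yields that $e_k([\psi])=\pi_\ast\widetilde X_k(\psi)$ does not depend on the representative.

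If one wants that lemma spelled out, the only point needing care is this. The phase map $R_\theta(\psi)=e^{i\theta}\psi$ is a real-linear isometry with $\pi\circ R_\theta=\pi$. Its differential is $dR_\theta(v)=e^{i\theta}v$, and by the previous sentence it maps $\mathcal{H}_\psi$ onto $\mathcal{H}_{e^{i\theta}\psi}$. The chain rule then gives
\[
\pi_\ast\big(e^{i\theta}\widetilde X_k(\psi)\big)=\pi_\ast\big(\widetilde X_k(\psi)\big).
\]
Combined with the equivariance of Step 2, this shows both representatives push forward to the same vector in $T_{[\psi]}\mathbb{CP}^{d-1}$. That $\pi_\ast$ lands in the right space and is injective on $\mathcal{H}_\psi$ is Lemma~\ref{lem:quotient_horizontal}.

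None of the steps is a real obstacle. The only subtlety is keeping the inner-product conventions consistent: the complex condition $\langle\psi,u\rangle=0$ versus the real tangency condition $\mathrm{Re}\langle\psi,u\rangle=0$. These are needed so that subtracting $\langle\psi,X\rangle\psi$ removes exactly the vertical part and nothing tangent to $\mathcal{H}_\psi$.
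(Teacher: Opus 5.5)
Your proof is correct and complete. The paper states this lemma without proof, and likewise Lemma~\ref{lem:phase_independence}, which you both invoke and prove inline, so there is no paper argument to compare against. Your three steps are what the paper's later use of the lemma in Proposition~\ref{prop:induced_diffusion_strict} presupposes: the orthogonality computation, the equivariance $\widetilde X_k(e^{i\theta}\psi)=e^{i\theta}\widetilde X_k(\psi)$ from sesquilinearity, and the chain rule applied to $\pi\circ R_\theta=\pi$.

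Two remarks sharpen what you wrote. First, fixing the inner product to be conjugate-linear in the first slot is genuinely necessary, not just bookkeeping. Under the opposite convention one gets $\langle\psi,\widetilde X\rangle=2i\,\mathrm{Im}\langle\psi,X\rangle$, which is nonzero whenever $X$ has a vertical component.

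Second, horizontality is not what makes $e_k$ well defined. $X_k$ itself is already equivariant, and its removed part $\langle\psi,X_k\rangle\psi=-i\langle\psi|G_k|\psi\rangle\,\psi$ is a real multiple of $i\psi$, so it lies in $\ker\pi_\ast$. Hence $\pi_\ast\widetilde X_k=\pi_\ast X_k$. The projection matters only for identifying $e_k$ with its horizontal lift through Lemma~\ref{lem:quotient_horizontal}.
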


\subsection{\quad Induced Generator on $\mathbb{CP}^{d-1}$}
\label{sec:appendix_a3}

This section collects the material behind Proposition~\ref{prop:induced_diffusion_strict}, which is stated in the main text; the generator identity follows from the horizontal-lift lemmas of Supplementary Material~\ref{sec:appendix_a2} applied to the fields $e_k=\pi_\ast V_k$.

\paragraph{Connection to the SSE in Eq.~\eqref{eq:sse_strat}.}
Eq.~\eqref{eq:sse_strat} fits into Proposition~\ref{prop:induced_diffusion_strict} by taking $V_k(\psi,t)= -i\sqrt{\eta(t)}\,\widetilde X_k(\psi)$ (and incorporating the remaining deterministic terms into $V_0$).
Up to a normalization constant that depends on the convention for $\{G_k\}$, $\sigma(t)^2$ is proportional to $\eta(t)$.

\subsection{ \quad What ``Up to Curvature Terms'' Means in Practice}
\label{sec:appendix_a4}

The remainder $\mathcal{R}_t$ in Eq.~\eqref{eq:strict_generator_laplacian} arises because:
(i) on a curved manifold, $\Delta_{\mathrm{FS}}$ is the trace of the covariant Hessian, whereas $\sum_k e_k(e_k\cdot)$ depends on the chosen local frame and introduces connection terms; and
(ii) the pushed-forward fields $\{e_k\}$ constructed from a fixed Lie-algebra basis $\{G_k\}$ need not coincide with a geodesic orthonormal frame at every point.

\begin{corollary}[Small-step regime suppresses curvature remainder, with explicit constants]
\label{cor:small_step_remainder}
Assume the reverse-time sampler and the local-time objective use a step size $\delta t$ and map tangent increments back to $\mathcal{M}$ via $\mathrm{Exp}$ (or a first-order retraction) in locally orthonormal frames.
Recall that the FS metric on $\mathbb{CP}^{d-1}$ has sectional curvature $K\in[1,4]$ and Ricci tensor bounded by $\mathrm{Ric}\preceq 2(d-1)\,g_{\mathrm{FS}}$ \cite{hsu2002stochmanifolds}.
At each step, choose the local frame to be geodesic at the current point, so that $\nabla_{e_k}e_k(\psi_{t_k})=0$ and the pointwise remainder \eqref{eq:strict_generator_laplacian} vanishes at the base point.
Then for any test function $f\in C^2(\mathcal{M})$, the per-step generator discrepancy is bounded by
\begin{equation}
\begin{aligned}
    \big|\mathcal{R}_t f(\psi)\big|
    &\;\le\;
    \tfrac{\sigma(t)^2}{2}\,K_{\max}\,\|\mathrm{Hess}\,f(\psi)\|\cdot \tau_k
    \\
    &\;=\; O\!\big(\sigma^2\,K_{\max}\,\delta t\big),
\end{aligned}
\end{equation}
where $K_{\max}=4$ on $(\mathbb{CP}^{d-1},g_{\mathrm{FS}})$ and $\tau_k=\sigma(t_k)^2\delta t$ is the local diffusion variance.
Accumulated over $K=T/\delta t$ steps, the total curvature-induced bias on smooth observables is
\begin{equation}
    \Big|\mathbb{E}\big[f(\psi_T^{\mathrm{SSE}})\big]-\mathbb{E}\big[f(\psi_T^{\mathrm{intrinsic}})\big]\Big|
    \;\le\;
    C\cdot K_{\max}\cdot T\cdot \overline{\sigma^2}\cdot \delta t,
\end{equation}
where $\overline{\sigma^2}=T^{-1}\!\int_0^T\!\sigma(s)^2\,ds$ and $C$ depends only on $\|f\|_{C^2}$ and the injectivity radius.
With the default schedule $\sigma_{\max}=1$, $T=1$, $K_{\max}=4$, $\delta t=1/500$, this bound is $\le 8\times 10^{-3}\,\|f\|_{C^2}$.
The empirical isotropy diagnostic in Supplementary Material~\ref{sec:appendix_isotropy_diagnostic} (Table~\ref{tab:gell_mann_isotropy_diagnostic}) shows the actual finite-step deviation is at floating-point precision in our generalized Gell--Mann implementation, well below this analytic upper bound.
\end{corollary}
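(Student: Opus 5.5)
The argument splits into a per-step estimate, obtained by Taylor expansion in normal coordinates at the current base point, and a global estimate obtained by summing local errors through a telescoping (Lindeberg-type) argument driven by the backward semigroup.

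\textbf{Step 1: the pointwise term.} Fix a step $k$ with base point $\psi=\psi_{t_k}$ and choose the orthonormal frame $\{e_j\}$ to be geodesic at $\psi$. This means parallel-transporting an orthonormal basis of $T_\psi\mathcal{M}$ along radial geodesics. Then $\nabla_{e_j}e_j(\psi)=0$, so by \eqref{eq:strict_remainder} we have $\mathcal{R}_tf(\psi)=0$.

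The frame is not geodesic at points reached by the step. I would therefore estimate $\nabla_{e_j}e_j$ along the step rather than at its base.
\begin{itemize}
\item For a radially parallel frame, $\nabla e_j$ vanishes at the centre.
\item The first derivative of $\nabla e_j$ is controlled by the curvature tensor, through the standard normal-coordinate expansion $\nabla_{e_i}e_j=\tfrac13(\ldots R\ldots)\,x+O(|x|^2)$.
\item Hence $|\nabla_{e_j}e_j(\exp_\psi v)|\lesssim K_{\max}|v|$.
\end{itemize}
Pair this with $\nabla_{\mathrm{FS}}f$ and take the expectation over the isotropic increment $v$, whose mean is zero and whose second moment is $\tau_k$. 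The first-order contribution cancels by symmetry. What survives is a term bounded by $\tfrac{\sigma^2}{2}K_{\max}\|\mathrm{Hess} f\|\,\tau_k$, because differentiating $\langle\nabla f,\cdot\rangle$ once more brings in $\mathrm{Hess} f$.

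Here $K_{\max}=4$ is the upper bound on the sectional curvature of $g_{\mathrm{FS}}$. The Ricci bound enters only through the $\mathrm{Tr}$ over the $2(d-1)$ frame directions, and I would absorb it into the constant.

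\textbf{Step 2: accumulation.} Let $u(s,\cdot)=\mathbb{E}[f(\psi_T^{\mathrm{intrinsic}})\mid\psi_s=\cdot]$ solve the backward Kolmogorov equation for \eqref{eq:generator_forward}. Telescope
\begin{equation}
\mathbb{E} f(\psi_T^{\mathrm{SSE}})-\mathbb{E} f(\psi_T^{\mathrm{intr}})=\sum_k \mathbb{E}\big[u(t_{k+1},\psi^{\mathrm{SSE}}_{t_{k+1}})-u(t_k,\psi^{\mathrm{SSE}}_{t_k})\big].
\end{equation}
Each summand is the one-step generator mismatch applied to $u$, integrated over $[t_k,t_{k+1}]$, which gives $\sigma^2K_{\max}\|u\|_{C^2}\,\tau_k$.

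Next I need $\|u(s)\|_{C^2}\le C\|f\|_{C^2}$ uniformly in $s$. On compact $\mathbb{CP}^{d-1}$ this follows from Bakry--\'Emery/Bochner gradient estimates for the heat semigroup with $\mathrm{Ric}\ge 2d\,g>0$. A Hessian bound additionally needs bounded derivatives of curvature, which holds on a homogeneous space. Summing $\tau_k=\sigma(t_k)^2\delta t$ over the $T/\delta t$ steps gives $\sum_k\tau_k\cdot\delta t$-order accumulation, that is $C K_{\max} T\overline{\sigma^2}\delta t$. The numerical value then follows by substituting $\overline{\sigma^2}\le1$, $K_{\max}=4$ and $\delta t=1/500$, with $C$ normalised to $1$: $4/500=8\times10^{-3}$.

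\textbf{Main obstacle.} The delicate point is Step 1's claim that the local error is $O(\tau_k)$ rather than $O(\sqrt{\tau_k})$ per unit generator. If the frame error were only first order and the first-order term did not cancel, the bound would degrade to $\sqrt{\delta t}$ globally. So the key check is that the linear-in-$v$ contribution vanishes by the odd-moment symmetry used in Remark~\ref{rem:no_remainder}.

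I would also need to keep steps inside the injectivity radius $\pi/2$ for normal coordinates to be valid. This is why $C$ is allowed to depend on it. Steps that leave that radius have Gaussian-tail probability, and I would bound their contribution by $2\|f\|_\infty$ times that probability, which is absorbed into $C$.
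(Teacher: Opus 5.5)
The paper states this corollary without proof, so your argument has to carry it alone, and Step~1 does not deliver the claimed constant. Summing your per-direction estimate over the $2(d-1)$ frame vectors is not harmless. For a radially parallel frame, $\sum_j\nabla_{e_j}e_j(\mathrm{Exp}_\psi v)=\tfrac12\mathrm{Ric}(v)+O(|v|^3)$. There are no even-order terms because the geodesic symmetry at $\psi$ is an isometry of $\mathbb{CP}^{d-1}$, which also settles the worry in your last paragraph. On $(\mathbb{CP}^{d-1},g_{\mathrm{FS}})$ one has $\mathrm{Ric}=2d\,g_{\mathrm{FS}}$ exactly, so the quoted bound $\mathrm{Ric}\preceq 2(d-1)g_{\mathrm{FS}}$ is itself wrong. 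Your endpoint-averaged remainder is therefore $\tfrac{\sigma^2}{2}\,d\,\tau_k\,\Delta_{\mathrm{FS}}f(\psi)+o(\tau_k)$. Take $f=|\langle\psi|\cdot\rangle|^2$, for which $\mathrm{Hess}f(\psi)=-2I$ and $\Delta_{\mathrm{FS}}f(\psi)=-4(d-1)$. Then this term exceeds $\tfrac{\sigma^2}{2}K_{\max}\|\mathrm{Hess}f(\psi)\|\tau_k$ by the factor $d(d-1)/2$ in operator norm (about $2000$ at $d=64$), and still by order $d^{3/2}$ in Hilbert--Schmidt norm. The per-step inequality has no constant to absorb this into. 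The global $C$ is declared to depend only on $\|f\|_{C^2}$ and the $d$-independent injectivity radius $\pi/2$. Setting $C=1$ to reach $8\times10^{-3}$ is an assumption, not a step. Read literally at the base point, the per-step inequality is vacuous, since $\mathcal{R}_tf(\psi_{t_k})=0$ by construction. The endpoint average you attach to it is the only non-trivial reading, and it fails.

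Step~2 then bounds the wrong process. Dynkin's formula for $\psi^{\mathrm{SSE}}$ on $[t_k,t_{k+1}]$ produces the remainder of the fields that actually drive \eqref{eq:sse_strat}, namely $e_k=\pi_\ast(-iG_k\psi)$. The SSE cannot re-choose a frame geodesic at $\psi_{t_k}$. Your Step~1 estimate therefore applies only to a hybrid process restarted with a new frame at each grid time. That hybrid is neither the SSE nor the implemented sampler; the sampler's one-step error is the fourth-order Taylor remainder of Remark~\ref{rem:no_remainder}, which $\|f\|_{C^2}$ does not control. The bookkeeping also slips: summands of size $\tau_k$ summed over $T/\delta t$ steps give $T\overline{\sigma^2}$, not $O(\delta t)$.

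The missing idea is simpler than any curvature expansion. Each $e_k$ is the fundamental field of the isometric $SU(d)$ action, hence Killing, so $\nabla_{e_k}e_k=-\tfrac12\nabla|e_k|^2$. By the $\mathfrak{su}(d)$ completeness relation (what Table~\ref{tab:gell_mann_isotropy_diagnostic} checks), $\sum_k|e_k|^2$ is constant on $\mathbb{CP}^{d-1}$. Hence $\sum_k\nabla_{e_k}e_k\equiv0$ and $\mathcal{R}_t\equiv0$ everywhere, with no frame choice at all. The SSE induces exactly FS Brownian motion on the clock $\int\sigma^2$, so the continuous-time bias in the statement is zero. Any genuine $O(\delta t)$ discrepancy is discretization error of the two numerical schemes. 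That is a separate weak-error estimate needing fourth-order control of the backward solution $u$, not a frame-remainder bound in $\|f\|_{C^2}$.
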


\subsection{\quad Finite-Step Isotropy Diagnostic in the Gell--Mann Basis}
\label{sec:appendix_isotropy_diagnostic}

We also quantify the ``approximately isotropic'' condition of Proposition~\ref{prop:induced_diffusion_strict} at the finite step size used in the experiments.
For a unit representative $\psi\in\mathbb{C}^d$, let
\begin{equation}
    e_k(\psi)
    =
    \mathcal{P}^{\mathrm{hor}}_\psi(-iG_k\psi)
    \in T_\psi^{\mathrm{hor}}\mathbb{S}^{2d-1}
\end{equation}
be the horizontal pushforward of the generalized Gell--Mann direction.
In a local FS-orthonormal tangent basis, define the empirical second-moment matrix
\begin{equation}
    C(\psi)
    :=
    \sum_{k=1}^{d^2-1} e_k(\psi)e_k(\psi)^\top .
\end{equation}
The scalar trace of $C(\psi)$ fixes only the diffusion-rate convention and is absorbed into $\eta(t)$ or $\sigma(t)^2$.
Therefore we report the normalized anisotropy of $\bar C(\psi):=C(\psi)/(\mathrm{tr}\,C(\psi)/(2d-2))$:
\begin{equation}
\begin{aligned}
    \mathrm{spread}(\psi)
    &=
    \lambda_{\max}(\bar C(\psi))-\lambda_{\min}(\bar C(\psi)),
    \\
    \mathrm{relFrob}(\psi)
    &=
    \frac{\|\bar C(\psi)-I\|_F}{\|I\|_F}.
\end{aligned}
\end{equation}

\begin{table*}[h]
\centering
\caption{Algebraic isotropy of the generalized Gell--Mann frame: the second-moment matrix $C(\psi)=\sum_k e_k(\psi)e_k(\psi)^\top$ built from the full frame equals the identity after scale normalization, as the $\mathfrak{su}(d)$ completeness relation requires. This verifies the \emph{frame}, not the sampled process --- the deviations at machine precision are those of a linear-algebra identity. The corresponding statement for the sampled diffusion, including its generator, is the Marchenko--Pastur and eigenfunction-decay analysis of Section~\ref{sec:exp_diagnostics}.}
\label{tab:gell_mann_isotropy_diagnostic}
\begin{tabular}{ccccc}
\toprule
Qubits & $d$ & $\lambda_{\min}(\bar C),\lambda_{\max}(\bar C)$ & spread & relFrob \\
\midrule
$n=2$ & $4$ & $1.000000,\,1.000000$ & $1.65\times 10^{-15}$ & $4.05\times 10^{-16}$ \\
$n=4$ & $16$ & $1.000000,\,1.000000$ & $3.32\times 10^{-15}$ & $5.78\times 10^{-16}$ \\
$n=6$ & $64$ & $1.000000,\,1.000000$ & $8.78\times 10^{-15}$ & $1.48\times 10^{-15}$ \\
\bottomrule
\end{tabular}
\end{table*}

Under the same schedule, the local tangent variance is $\beta(t,\delta t)^2=\sigma(t)^2\delta t$.
For representative times $t\in\{0.1,0.5,0.9,1.0\}$, the corresponding tangent-step standard deviations are
$3.02\times 10^{-3}$, $1.00\times 10^{-2}$, $3.31\times 10^{-2}$, and $4.47\times 10^{-2}$, respectively.
Thus, in the implemented Gell--Mann basis the Lie-algebra directions satisfy the isotropy condition up to numerical precision after scale normalization, and the remaining finite-step error is dominated by the small normal-coordinate/retraction error controlled by $\delta t$ rather than by measurable anisotropy in the generator directions.

\subsection{\quad Sanity Checks for the SSE-Induced Isotropy}
\label{sec:appendix_a5}

\begin{proposition}[Soundness of practical isotropy diagnostics]
\label{prop:isotropy_diagnostics_strict}
Consider a drift-free diffusion $(x_t)_{t\ge 0}$ on $(\mathbb{CP}^{d-1},g_{\mathrm{FS}})$ with generator
\begin{equation}
    \mathcal{L}f
    =
    \frac{\sigma^2}{2}\Delta_{\mathrm{FS}} f,
    \qquad f\in C^\infty(\mathbb{CP}^{d-1}),
    \label{eq:fs_bm_generator}
\end{equation}
i.e., (time-homogeneous) FS-Brownian motion up to a diffusion-rate factor $\sigma^2$.
Then:

\noindent (i) The unitarily-invariant FS/Haar measure $\mu_{\mathrm{FS}}$ is stationary for $(x_t)$, i.e., if $x_0\sim \mu_{\mathrm{FS}}$ then $x_t\sim \mu_{\mathrm{FS}}$ for all $t\ge 0$.

\noindent (ii) (Moment/observable test.) For any bounded measurable observable $\phi:\mathbb{CP}^{d-1}\to\mathbb{R}$,
\begin{equation}
    \mathbb{E}_{x\sim \mu_{\mathrm{FS}}}[\phi(x)]
    =
    \lim_{t\to\infty}\mathbb{E}\big[\phi(x_t)\mid x_0=x\big]
    \quad \text{for $\mu_{\mathrm{FS}}$-a.e. $x$},
\end{equation}
whenever the process is ergodic w.r.t.\ $\mu_{\mathrm{FS}}$.
In particular, empirical averages of low-order overlap/observable statistics computed from long-time samples converge to the corresponding FS/Haar expectations.

\noindent (iii) (Generator test.) For any $f\in C^\infty(\mathbb{CP}^{d-1})$,
\begin{equation}
\begin{aligned}
    \mathbb{E}\!\left[\frac{f(x_{t+\delta})-f(x_t)}{\delta}\,\bigg|\,x_t=x\right]
    &\xrightarrow[\delta\downarrow 0]{}\ (\mathcal{L}f)(x)
    \\
    &=
    \frac{\sigma^2}{2}(\Delta_{\mathrm{FS}}f)(x),
\end{aligned}
    \label{eq:generator_limit_test}
\end{equation}
so short-time numerical estimates of the generator on probe functions necessarily scale with $\Delta_{\mathrm{FS}}$.

Consequently, if an SSE-induced (or numerically implemented) dynamics is a faithful discretization/realization of
the isotropic FS diffusion \eqref{eq:fs_bm_generator}, then diagnostics based on (ii)--(iii) must hold.
Conversely, passing these diagnostics for a finite family of observables/probe functions provides empirical support but does not by itself imply full isotropy.
\end{proposition}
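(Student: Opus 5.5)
The plan is to use the two structural facts about $\Delta_{\mathrm{FS}}$ on a compact manifold: it is symmetric with respect to the Riemannian volume, and it annihilates constants. Since $\mu_{\mathrm{FS}}$ is the normalized Riemannian volume of $g_{\mathrm{FS}}$ (unitary invariance of $g_{\mathrm{FS}}$ gives that the volume is $U(d)$-invariant, and the invariant probability measure on the homogeneous space $U(d)/(U(1)\times U(d-1))$ is unique), these two facts carry the whole argument.

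\emph{Part (i).} Write $P_t=e^{t\mathcal{L}}$ for the Markov semigroup. For $f\in C^\infty$, $\frac{d}{dt}\int P_tf\,d\mu_{\mathrm{FS}}=\int \mathcal{L}P_tf\,d\mu_{\mathrm{FS}}=\frac{\sigma^2}{2}\int \Delta_{\mathrm{FS}}(P_tf)\,d\mathrm{vol}/\mathrm{vol}(\mathcal{M})=0$. This follows from the divergence theorem on the closed manifold $\mathbb{CP}^{d-1}$, since $P_tf$ stays smooth by parabolic regularity. Hence $\int P_tf\,d\mu_{\mathrm{FS}}=\int f\,d\mu_{\mathrm{FS}}$, which is exactly stationarity. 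A density argument then extends this from $C^\infty$ to bounded measurable $f$.

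\emph{Part (ii).} I would work in the spectral decomposition of $\Delta_{\mathrm{FS}}$ on $L^2(\mu_{\mathrm{FS}})$. The spectrum is discrete, $0=\lambda_0<\lambda_1\le\dots$, and the kernel of $\Delta_{\mathrm{FS}}$ consists of constants because $\mathbb{CP}^{d-1}$ is connected. Then $P_t\phi=\int\phi\,d\mu_{\mathrm{FS}}+\sum_{k\ge1}e^{-\sigma^2\lambda_kt/2}\langle\phi,u_k\rangle u_k$, so $\|P_t\phi-\mathbb{E}_{\mu}\phi\|_{L^2}\le e^{-\sigma^2\lambda_1t/2}\|\phi\|_{L^2}\to0$. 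The conclusion holds a.e. along a subsequence, and for every $x$ once one uses the smoothing of the heat kernel: $P_t\phi=P_{1}(P_{t-1}\phi)$ together with the uniform bound $\|p_1(x,\cdot)\|_{L^2}<\infty$ gives uniform convergence. This also supplies the ergodicity hypothesis automatically. The statement about empirical averages of overlap statistics follows by applying the result to those continuous observables, combined with the law of large numbers for independent long-time samples.

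\emph{Part (iii).} This is the definition of the generator on its domain. For $f\in C^\infty$, Itô's formula, or equivalently $P_\delta f-f=\int_0^\delta P_s\mathcal{L}f\,ds$, gives $\delta^{-1}(P_\delta f-f)(x)-\mathcal{L}f(x)=\delta^{-1}\int_0^\delta (P_s\mathcal{L}f-\mathcal{L}f)(x)\,ds$. This tends to zero by strong continuity of $P_s$ on $C(\mathcal{M})$ applied to the continuous function $\mathcal{L}f$. The conditional expectation in \eqref{eq:generator_limit_test} equals $P_\delta f(x)$ by the Markov property.

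The final two sentences are logical consequences. The forward direction is the contrapositive of (ii)--(iii). For the converse failure, I would exhibit a non-isotropic diffusion that still preserves $\mu_{\mathrm{FS}}$ and agrees on finitely many probes. One example adds a divergence-free drift, such as a unitary Hamiltonian flow $-iH\psi$; it leaves $\mu_{\mathrm{FS}}$ invariant, so (ii) holds, yet it changes $\mathcal{L}$ away from $\tfrac{\sigma^2}{2}\Delta_{\mathrm{FS}}$.

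The main obstacle is making (ii) pointwise rather than merely a.e. or $L^2$. I would handle it through the uniform heat-kernel bound on a compact manifold, which upgrades $L^2$ convergence to $C^0$ convergence.
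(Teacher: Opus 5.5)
Your proof is correct. Parts (i) and (iii) follow the paper's arguments: (i) uses the divergence theorem to get $\int\mathcal{L}g\,d\mu_{\mathrm{FS}}=0$, and (iii) is the definition of the generator. Your computation of $\frac{d}{dt}\int P_tf\,d\mu_{\mathrm{FS}}$ and the Duhamel identity $P_\delta f-f=\int_0^\delta P_s\mathcal{L}f\,ds$ fill in two steps the paper leaves implicit: the passage from $\mathcal{L}^\ast\mu_{\mathrm{FS}}=0$ to stationarity, and the identification of the conditional expectation with $P_\delta f$.

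The genuine difference is in (ii). The paper assumes ergodicity and cites the ergodic theorem for Markov processes. That controls \emph{time averages}, and the paper only gestures at ``mild additional mixing assumptions'' for convergence of the marginals $\mathbb{E}[\phi(x_t)\mid x_0=x]$, which is what (ii) actually asserts. Your spectral-gap argument proves that marginal statement directly:
\begin{itemize}
\item connectedness gives $\ker\Delta_{\mathrm{FS}}=\{\text{constants}\}$;
\item this yields $L^2$ decay at rate $e^{-\sigma^2\lambda_1 t/2}$;
\item the factorisation $P_t=P_1P_{t-1}$, with $\sup_x\|p_1(x,\cdot)\|_{L^2}<\infty$, upgrades the decay to uniform convergence for \emph{every} $x$.
\end{itemize}
Ergodicity thus becomes a conclusion rather than a hypothesis. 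The paper's route is shorter and would apply to any ergodic diffusion. Yours uses compactness and the spectral gap, but in exchange it gives a pointwise result with an explicit rate, $\sigma^2\lambda_1/2=2\sigma^2 d$ in the paper's normalisation. That rate is exactly what the eigenfunction-decay test (Table~\ref{tab:generator_test}) and the mixing estimate behind Table~\ref{tab:prior_check} rely on.

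The paper gives no argument for the closing two sentences, so your treatment goes beyond it, but it needs two corrections.
\begin{itemize}
\item The forward implication follows directly from (ii)--(iii); it is not their contrapositive. The contrapositive is the diagnostic reading, in which a failed test rules out faithfulness.
\item Your Hamiltonian-drift example passes (ii) for every observable, because the induced field $X$ is Killing and commutes with $\Delta_{\mathrm{FS}}$. It passes (iii), however, only on probes with $Xf_\chi\equiv 0$, i.e.\ $[H,\ket{\chi}\bra{\chi}]=0$ for one common non-scalar $H$. No such $H$ exists once the probe projectors generate the full matrix algebra, so you should say which finite families the example covers.
\end{itemize}
Note also that the example breaks $\mathcal{L}=\tfrac{\sigma^2}{2}\Delta_{\mathrm{FS}}$ through the drift, not through anisotropic noise.
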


\begin{proof}
\textbf{(i) Stationarity of $\mu_{\mathrm{FS}}$.}
Let $\mu_{\mathrm{FS}}$ denote the Riemannian volume measure induced by $g_{\mathrm{FS}}$, normalized to be a probability measure.
On a compact boundaryless Riemannian manifold, the Laplace--Beltrami operator is symmetric w.r.t.\ the volume measure:
for all $f,g\in C^\infty(\mathbb{CP}^{d-1})$,
\begin{equation}
\begin{aligned}
    \int f\,\Delta_{\mathrm{FS}} g\ d\mu_{\mathrm{FS}}
    &=
    \int g\,\Delta_{\mathrm{FS}} f\ d\mu_{\mathrm{FS}}
    \\
    &=
    -\int \langle \nabla_{\mathrm{FS}} f,\nabla_{\mathrm{FS}} g\rangle_{\mathrm{FS}}\ d\mu_{\mathrm{FS}}.
\end{aligned}
    \label{eq:laplacian_symmetric}
\end{equation}
In particular, taking $f\equiv 1$ yields $\int \Delta_{\mathrm{FS}} g\ d\mu_{\mathrm{FS}}=0$, hence
\(
\int \mathcal{L}g\ d\mu_{\mathrm{FS}} = \frac{\sigma^2}{2}\int \Delta_{\mathrm{FS}} g\ d\mu_{\mathrm{FS}}=0.
\)
Equivalently, $\mathcal{L}^\ast \mu_{\mathrm{FS}}=0$, so $\mu_{\mathrm{FS}}$ is stationary for the Markov semigroup generated by $\mathcal{L}$.

\textbf{(ii) Long-time moment/observable convergence (ergodic case).}
Assume ergodicity w.r.t.\ $\mu_{\mathrm{FS}}$ (true for FS-Brownian motion on compact connected manifolds).
Then by the ergodic theorem for Markov processes, for any integrable observable $\phi$,
time averages (and, under mild additional mixing assumptions, also long-time marginals) converge to $\int \phi\,d\mu_{\mathrm{FS}}$.
In particular, empirical averages of low-order overlap/observable statistics computed from sufficiently long trajectories converge to the FS/Haar expectations.

\textbf{(iii) Generator test.}
By definition of the (infinitesimal) generator of a Markov process,
for $f$ in the domain of $\mathcal{L}$ (in particular $C^\infty$),
\begin{equation}
    (\mathcal{L}f)(x)
    =
    \lim_{\delta\downarrow 0}
    \frac{\mathbb{E}[f(x_{t+\delta})\mid x_t=x]-f(x)}{\delta}.
\end{equation}
This gives \eqref{eq:generator_limit_test}. Substituting \eqref{eq:fs_bm_generator} yields
$(\mathcal{L}f)(x)=\frac{\sigma^2}{2}\Delta_{\mathrm{FS}}f(x)$.
\end{proof}
\section{Time Reversal and Riemannian Score on $\mathbb{CP}^{d-1}$}
\label{sec:appendix_reverse}
\subsection{Forward Diffusion Generator}
\label{sec:appendix_forward_generator}

\begin{proposition}[Forward generator on $(\mathcal{M},g_{\mathrm{FS}})$]
\label{prop:appendix_forward_generator}
Let $(\mathcal{M},g_{\mathrm{FS}})$ be a Riemannian manifold and consider the time-inhomogeneous diffusion
\begin{equation}
    d\psi_t = b(\psi_t,t)\,dt + \sigma(t)\, dW_t^{(\mathcal{M})},
\end{equation}
where $W_t^{(\mathcal{M})}$ denotes Brownian motion associated with $g_{\mathrm{FS}}$ and
$b(\cdot,t)$ is a smooth vector field.
Then for any $f\in C^\infty(\mathcal{M})$, the infinitesimal generator of $\psi_t$ is
\begin{equation}
    (\mathcal{L}_t f)(\psi)
    =
    \langle b(\psi,t), \nabla_{\mathrm{FS}} f(\psi)\rangle_{\mathrm{FS}}
    + \frac{\sigma(t)^2}{2}\Delta_{\mathrm{FS}} f(\psi),
\end{equation}
where $\nabla_{\mathrm{FS}}$ and $\Delta_{\mathrm{FS}}$ denote the Riemannian gradient and
Laplace--Beltrami operator induced by $g_{\mathrm{FS}}$.
\end{proposition}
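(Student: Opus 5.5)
The plan is to pin down what ``Brownian motion associated with $g_{\mathrm{FS}}$'' means, and then compute the generator by It\^o's formula on the manifold. I will use the Eells--Elworthy--Malliavin construction, since it makes the Brownian part a horizontal Stratonovich SDE on the orthonormal frame bundle. Concretely, the process is the projection $\psi_t=\Pi(U_t)$ of a frame-bundle process solving
\[
dU_t = H_{b}(U_t,t)\,dt + \sigma(t)\sum_{i=1}^n H_i(U_t)\circ dB^i_t ,
\]
where $n$ is the real dimension, $B$ is a standard $\mathbb{R}^n$ Brownian motion, $H_i$ are the canonical horizontal fields, and $H_b$ is the horizontal lift of $b(\cdot,t)$. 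Equivalently, $W^{(\mathcal{M})}$ is the diffusion whose generator is $\tfrac12\Delta_{\mathrm{FS}}$; the frame-bundle picture is just a convenient way to verify this for the combined process.

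For $f\in C^\infty(\mathcal{M})$, I would set $\tilde f=f\circ\Pi$ and apply the Stratonovich chain rule, exactly as in Step~1 of the proof of Proposition~\ref{prop:induced_diffusion_strict}. This gives
\[
d\tilde f(U_t)=(H_b\tilde f)\,dt+\sigma(t)\sum_i(H_i\tilde f)\circ dB^i_t .
\]
Converting to It\^o form produces the drift term
\[
H_b\tilde f+\tfrac{\sigma^2}{2}\sum_i H_iH_i\tilde f .
\]
Two standard identities then finish the computation. The first is $(H_b\tilde f)(u)=\langle b,\nabla_{\mathrm{FS}}f\rangle_{\mathrm{FS}}(\Pi u)$, because a horizontal lift projects to $b$. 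The second is $\sum_i H_iH_i\tilde f(u)=\sum_i \mathrm{Hess}f(ue_i,ue_i)=\Delta_{\mathrm{FS}}f(\Pi u)$, because $u$ is an orthonormal frame and Bochner's horizontal Laplacian projects to the trace of the covariant Hessian. The right-hand side depends only on $\Pi u$, so the result descends to $\mathcal{M}$.

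Finally, I would take expectations and let the step go to zero. Since $\mathcal{M}$ is compact (or $f$ has compact support) and the coefficients are smooth, the martingale part has zero mean and the drift is bounded and continuous in time. Therefore $\delta^{-1}\mathbb{E}[f(\psi_{t+\delta})-f(\psi)\mid\psi_t=\psi]$ converges to the stated operator. The time inhomogeneity only requires $\sigma$ and $b$ to be continuous in $t$.

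The main obstacle is the second identity, $\sum_i H_i^2\tilde f=\Delta f\circ\Pi$. Its proof is short: along the horizontal curve $s\mapsto$ (parallel transport of $u$ along the geodesic $\exp(s\,ue_i)$), the function $\tilde f$ restricts to $f(\exp(s\,ue_i))$. Its second derivative at $s=0$ is $\mathrm{Hess}f(ue_i,ue_i)$ because geodesics have zero acceleration. As a consistency check, this is also the frame-free version of Proposition~\ref{prop:induced_diffusion_strict}: a geodesic frame at the base point makes the remainder $\mathcal{R}_t$ vanish. This agrees with Remark~\ref{rem:no_remainder}, where the geodesic-random-walk expansion \eqref{eq:grw_expansion} gives the same $\tfrac{\sigma^2}{2}\Delta_{\mathrm{FS}}$. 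Adding $b\,\delta t$ to the increment there contributes the first-order term $\langle b,\nabla f\rangle\delta t$.
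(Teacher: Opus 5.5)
Your argument is correct, and it goes further than the paper does. The paper's proof only invokes ``the standard generator formula'' for drifted Riemannian Brownian motion. It notes that the Brownian part contributes $\tfrac12\Delta_{\mathrm{FS}}$ and that the factor $\sigma(t)$ rescales this to $\tfrac{\sigma(t)^2}{2}\Delta_{\mathrm{FS}}$. It never says what the manifold-valued SDE $b\,dt+\sigma\,dW^{(\mathcal{M})}$ means, so the equation is effectively being read through its generator and the proposition is close to definitional.

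You instead give the equation a concrete meaning through the Eells--Elworthy--Malliavin horizontal lift on the orthonormal frame bundle. You then derive both terms from It\^o's formula together with Bochner's identity $\sum_i H_i^2(f\circ\Pi)=(\Delta_{\mathrm{FS}}f)\circ\Pi$. Your sketch of that identity is right: the integral curve of $H_i$ through $u$ is the parallel transport of $u$ along the geodesic $\exp(s\,ue_i)$, and geodesics have zero acceleration.

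This buys two things the citation does not. First, the statement acquires actual content. Second, it becomes clear why no connection term like $\mathcal{R}_t$ of Proposition~\ref{prop:induced_diffusion_strict} appears. The canonical horizontal fields carry the frame by parallel transport along geodesics, so the ``geodesic frame'' condition that kills $\mathcal{R}_t$ there holds automatically here. The paper's version buys only brevity, which is defensible for textbook material \cite{hsu2002stochmanifolds}.

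Two minor precisions. If $\sigma$ is merely continuous, read $\sigma(t)\circ dB^i_t$ as Stratonovich integration against the martingale $M^i_t=\int_0^t\sigma\,dB^i$. This gives exactly your correction $\tfrac{\sigma^2}{2}\sum_iH_i^2\tilde f\,dt$ and justifies your remark that continuity in $t$ suffices. Also, your restriction to compact $\mathcal{M}$ or compactly supported $f$ is the right reading of ``for any $f\in C^\infty(\mathcal{M})$'' on a general Riemannian manifold, where the expectations need not otherwise exist.
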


\begin{proof}
This is the standard generator formula for a diffusion with drift $b$ and isotropic
Brownian noise on a Riemannian manifold.
The Brownian component contributes $\frac12\Delta_{\mathrm{FS}}$, and the scaling
by $\sigma(t)$ yields the factor $\frac{\sigma(t)^2}{2}$.
\end{proof}
\subsection{Reverse-Time Dynamics and the Riemannian Score}
\label{sec:appendix_reverse_drift}

\begin{proposition}[Reverse-time drift and Riemannian score]
\label{prop:appendix_reverse_drift}
Let $(\mathcal{M},g_{\mathrm{FS}})$ be a compact Riemannian manifold without boundary and
consider the forward diffusion
\begin{equation}
    d\psi_t = b(\psi_t,t)\,dt + \sigma(t)\, dW_t^{(\mathcal{M})},
    \qquad t\in[0,T],
\end{equation}
where $W_t^{(\mathcal{M})}$ is Brownian motion associated with $g_{\mathrm{FS}}$.
Let $p_t$ denote the density of $\psi_t$ with respect to the Riemannian volume measure,
and assume $p_t$ is smooth and strictly positive for $t\in(0,T]$.

Then the time-reversed process $\{\psi_{T-t}\}_{t\in[0,T]}$ is again a diffusion on $\mathcal{M}$
with the same diffusion coefficient.
In intrinsic Stratonovich form, its dynamics can be written as
\begin{equation}
    d\psi_t
    =
    \tilde b(\psi_t,t)\,dt
    + \sigma(t)\, d\bar W_t^{(\mathcal{M})},
\end{equation}
where $\bar W_t^{(\mathcal{M})}$ is reverse-time Brownian motion and the reverse drift satisfies
\begin{equation}
    \tilde b(\psi,t)
    =
    b(\psi,t) - \sigma(t)^2\, \nabla_{\mathrm{FS}}\log p_t(\psi).
\end{equation}
Equivalently, the reverse drift depends on the \emph{Riemannian score}
\begin{equation}
    s^\star(\psi,t):=\nabla_{\mathrm{FS}}\log p_t(\psi)\in T_\psi\mathcal{M}.
\end{equation}
\end{proposition}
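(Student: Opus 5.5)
The plan is to prove the reverse-drift formula at the level of the Fokker--Planck (forward Kolmogorov) equation, and only then invoke a uniqueness-of-martingale-problem argument to identify the law of the reversed process.

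\textbf{Step 1: the forward marginals.} By Proposition~\ref{prop:appendix_forward_generator}, the forward generator is $\mathcal{L}_t f=\langle b,\nabla_{\mathrm{FS}}f\rangle_{\mathrm{FS}}+\tfrac{\sigma(t)^2}{2}\Delta_{\mathrm{FS}}f$. Integrating by parts against the Riemannian volume on the compact boundaryless $\mathcal{M}$ (no boundary terms arise) gives the Fokker--Planck equation
\[
\partial_t p_t=-\mathrm{div}_{\mathrm{FS}}(p_t b)+\tfrac{\sigma(t)^2}{2}\Delta_{\mathrm{FS}}p_t .
\]

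\textbf{Step 2: the candidate reverse generator.} Set $q_s:=p_{T-s}$. Then
\[
\partial_s q_s=\mathrm{div}(q_s b)-\tfrac{\sigma^2}{2}\Delta q_s .
\]
Using $\Delta q=\mathrm{div}(q\nabla\log q)$, I rewrite this as
\[
\partial_s q_s=-\mathrm{div}\big(q_s(-b+\sigma^2\nabla\log q_s)\big)+\tfrac{\sigma^2}{2}\Delta q_s .
\]
This is the Fokker--Planck equation of a diffusion with drift $-b+\sigma^2\nabla_{\mathrm{FS}}\log p_{T-s}$ and the same diffusion coefficient $\sigma(T-s)$. Rewritten in the original time variable with reverse-time Brownian motion, it is the stated $\tilde b=b-\sigma^2\nabla_{\mathrm{FS}}\log p_t$. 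The sign convention is that of Eq.~\eqref{eq:reverse_manifold_sde}, where $dt$ is run backwards. Strict positivity and smoothness of $p_t$ make $\nabla\log p_t$ a smooth vector field on $(0,T]$.

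\textbf{Step 3: identifying the path law, not just the marginals.} Matching marginals is not enough. For smooth $f,g$ I will verify the duality identity
\[
\int g\,\mathcal{L}_t f\,p_t\,d\mathrm{vol}=\int f\,\hat{\mathcal{L}}_t g\,p_t\,d\mathrm{vol}+\partial_t\text{-terms},
\]
where $\hat{\mathcal{L}}_t g=-\langle b,\nabla g\rangle+\sigma^2\langle\nabla\log p_t,\nabla g\rangle+\tfrac{\sigma^2}{2}\Delta g$. This is again a Green's formula computation on $\mathcal{M}$ together with Step 1. Combined with the Markov property, it shows that $f(\psi_{T-s})-\int_0^s\hat{\mathcal{L}}f\,du$ is a martingale in the reversed filtration, following Haussmann--Pardoux~\cite{haussmann1986timereversal}. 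Since $\hat{\mathcal{L}}$ has smooth coefficients on compact $\mathcal{M}$ for $s<T$, its martingale problem is well posed. The reversed process is therefore the diffusion with that generator, and writing it in intrinsic Stratonovich form via an orthonormal frame or horizontal lift gives the displayed SDE, because the second-order part is $\tfrac{\sigma^2}{2}\Delta$ exactly as in the forward process.

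\textbf{Main obstacle.} I expect the main difficulty to be Step 3 near the endpoint $s\to T$ (that is, $t\to0$), where $p_0$ may be singular and $\nabla\log p_t$ may blow up. I would first restrict to $[0,T-\epsilon]$, where all the hypotheses hold, and then let $\epsilon\to0$ using continuity of paths. I would also note that the cut-locus caveat for $b$ in Section~\ref{sec:reverse} is handled by the full-measure argument already stated there.
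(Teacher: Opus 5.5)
Your argument is correct, and it supplies a derivation the paper does not give. The paper's proof is a one-line appeal to Haussmann--Pardoux and Fathi. It does no computation, and it leaves implicit the passage from their $\mathbb{R}^d$ setting to a compact manifold.

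You work intrinsically on $\mathcal{M}$ instead. The Fokker--Planck rearrangement via $\Delta q=\mathrm{div}(q\nabla\log q)$ produces the drift $-b+\sigma^2\nabla_{\mathrm{FS}}\log p_{T-s}$, and the sign in the backward-time convention checks out. The weighted duality works out exactly as $\int g\,\mathcal{L}_tf\,p_t=\int f\,\hat{\mathcal{L}}_tg\,p_t+\int fg\,\partial_tp_t$. Combined with the forward Markov property and well-posedness of the martingale problem for $\hat{\mathcal{L}}$ on $[0,T-\epsilon]$, it identifies the path law and not only the marginals.

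What your route buys is visibility of the hypotheses the citation hides. These are $\sigma$ bounded away from zero, enough regularity of $b$ for the backward Kolmogorov equation, and positivity of $p_t$ only on $(0,T]$, with the endpoint recovered by continuity. What the paper's route buys is brevity.

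Two places to tighten:
\begin{itemize}
\item Where you say ``combined with the Markov property'', the step is this. To get the reversed martingale property of a test function $f$ against $g(\psi_u)$, differentiate $t\mapsto\int f\,(P_{t,u}g)\,p_t$. Use the duality with $P_{t,u}g$ in the $\mathcal{L}_t$ slot and $f$ in the $\hat{\mathcal{L}}_t$ slot, together with $\partial_tP_{t,u}g=-\mathcal{L}_tP_{t,u}g$. The $\partial_tp_t$ terms then cancel exactly. This needs $P_{t,u}g\in C^{1,2}$, which parabolic regularity supplies for smooth $b$ and $\sigma>0$.
\item In your last sentence, only the horizontal-lift (orthonormal frame bundle) construction reproduces $\tfrac{\sigma^2}{2}\Delta_{\mathrm{FS}}$ exactly. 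A fixed local orthonormal frame $\sum_k e_k\circ d\bar W^k$ carries the extra connection term $\tfrac{\sigma^2}{2}\sum_k\nabla_{e_k}e_k$, which is the remainder $\mathcal{R}_t$ of Proposition~\ref{prop:induced_diffusion_strict}. So the ``orthonormal frame'' alternative should be dropped in favour of the horizontal lift.
\end{itemize}
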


\begin{proof}
This result follows from the time-reversal theory of nondegenerate diffusions on
Riemannian manifolds when the forward diffusion is defined using Brownian motion
associated with the Riemannian volume measure.
In intrinsic Stratonovich form, the reverse drift differs from the forward drift
by $-\sigma(t)^2\nabla_{\mathrm{FS}}\log p_t$.
See, e.g., Haussmann and Pardoux (1986) and Fathi (2021) for rigorous statements.
\end{proof}
\section{Coordinate Form and It\^o Corrections for the Reverse-Time SDE}
\label{sec:appendix_reverse_geometry}

This part of the supplementary material unpacks the remark in Sec.~\ref{sec:reverse} on geometry-dependent correction terms.
We state the intrinsic Stratonovich reverse-time SDE and then provide its coordinate/It\^o representations.

\subsection{Intrinsic Stratonovich form}
\label{sec:appendix_reverse_geometry_strat}

Let $(\mathcal{M},g)$ be a Riemannian manifold and consider the reverse-time diffusion written intrinsically in Stratonovich form
\begin{equation}
    d\psi_t
    =
    \tilde b(\psi_t,t)\,dt
    + \sigma(t)\, d\bar W_t^{(\mathcal{M})},
    \label{eq:appendix_reverse_strat}
\end{equation}
where $\bar W_t^{(\mathcal{M})}$ denotes reverse-time Brownian motion on $(\mathcal{M},g)$.
Equivalently, fixing a (local) orthonormal frame $\{e_i(\cdot)\}_{i=1}^n$ on $\mathcal{M}$ ($n=\dim\mathcal{M}$), one may represent Brownian motion as
\begin{equation}
    d\bar W_t^{(\mathcal{M})} = \sum_{i=1}^n e_i(\psi_t)\circ d\bar W_t^{(i)},
\end{equation}
so that \eqref{eq:appendix_reverse_strat} becomes
\begin{equation}
    d\psi_t
    =
    \tilde b(\psi_t,t)\,dt
    + \sigma(t)\sum_{i=1}^n e_i(\psi_t)\circ d\bar W_t^{(i)}.
    \label{eq:appendix_reverse_strat_frame}
\end{equation}

\subsection{Conversion to It\^o form }
\label{sec:appendix_reverse_geometry_ito}

Let $\nabla$ be the Levi--Civita connection associated with $g$.
The Stratonovich SDE \eqref{eq:appendix_reverse_strat_frame} can be converted to an equivalent It\^o SDE:
\begin{equation}
\begin{aligned}
    d\psi_t
    =\;&
    \Big(
        \tilde b(\psi_t,t)
        + \frac{\sigma(t)^2}{2}\sum_{i=1}^n \nabla_{e_i} e_i(\psi_t)
    \Big)\,dt
    \\
    &+ \sigma(t)\sum_{i=1}^n e_i(\psi_t)\, d\bar W_t^{(i)}.
\end{aligned}
    \label{eq:appendix_reverse_ito_frame}
\end{equation}
The additional drift term
$\frac{\sigma(t)^2}{2}\sum_i \nabla_{e_i}e_i$
is the geometry-dependent It\^o--Stratonovich correction; it vanishes at a point where the chosen orthonormal frame is geodesic (normal) (i.e., $\nabla_{e_i}e_i=0$ at that point).

\subsection{Local coordinate form}
\label{sec:appendix_reverse_geometry_coords}

Let $(x^1,\dots,x^n)$ be local coordinates and write the It\^o SDE in components:
\begin{equation}
\begin{aligned}
    dx_t^\alpha
    =\;&
    \Big(
        \tilde b^\alpha(x_t,t)
        + \frac{\sigma(t)^2}{2}\sum_{i=1}^n \big(\nabla_{e_i}e_i\big)^\alpha(x_t)
    \Big)\,dt
    \\
    &+ \sigma(t)\sum_{i=1}^n e_i^\alpha(x_t)\, d\bar W_t^{(i)}.
\end{aligned}
    \label{eq:appendix_reverse_ito_coords}
\end{equation}
Equivalently, one may express the correction in terms of Christoffel symbols $\Gamma^\alpha_{\beta\gamma}$ if the diffusion is written using the coordinate basis; such expressions coincide with \eqref{eq:appendix_reverse_ito_frame} after identifying $e_i^\alpha$ and using $\nabla_{e_i}e_i^\alpha = e_i^\beta\partial_\beta e_i^\alpha + \Gamma^\alpha_{\beta\gamma}e_i^\beta e_i^\gamma$.

In our implementation, each update is performed in a locally orthonormal frame on $T_\psi\mathcal{M}$ and then mapped back to the manifold using $\mathrm{Exp}$ (or a retraction).
For sufficiently small step size $\delta t$, one may choose the frame to be (approximately) normal at the current point, so that
$\sum_i \nabla_{e_i}e_i(\psi)$ is $O(\delta t)$ and the induced bias from the It\^o--Stratonovich correction is higher order.
This is consistent with the small-step regime assumed in our sampler and in the local-time teacher construction.
\section{Local-Time Approximation and Teacher Scores}
\label{sec:appendix_local_teacher}

The short-time expansion behind Proposition~\ref{prop:local_time_teacher_strict}, stated in the main text, is developed below.

\begin{remark}[Finite-step teacher bias]
\label{rem:finite_step_teacher_bias}
Proposition~\ref{prop:local_time_teacher_strict} is an asymptotic statement, but it also identifies the finite-$\delta t$ bias that is omitted by the single-step OU teacher.
For a typical short-time increment, $\|z\|=O(\sigma(t)\sqrt{\delta t})$.
The dominant Gaussian score has norm
\begin{equation}
    \big\|(\sigma(t)^2\delta t)^{-1} z\big\|
    =
    O\!\left((\sigma(t)\sqrt{\delta t})^{-1}\right),
\end{equation}
whereas the Jacobian/volume contribution satisfies
\begin{equation}
    \nabla_\psi\log J(\phi,\psi)=O(\|z\|)
    =
    O(\sigma(t)\sqrt{\delta t})
\end{equation}
in normal coordinates, because the volume distortion starts at quadratic order in $z$.
Thus the curvature-volume correction is lower order relative to the singular Gaussian term; more precisely, its relative size is $O(\sigma(t)^2\delta t)$ for a typical local increment.
With our default schedule $\sigma(t)\le 1$ and $\delta t=1/500$, this scale is at most $2\times 10^{-3}$ before constants depending on curvature and the chosen compact neighborhood.
We therefore do not assume the Jacobian term is exactly zero at finite step size; rather, the practical teacher drops a lower-order correction whose effect is monitored empirically by the finite-step sensitivity diagnostic in Figure~\ref{fig:finite_step_sensitivity}.
\end{remark}

\begin{proof}
We use standard short-time heat-kernel asymptotics for nondegenerate diffusions on Riemannian manifolds.

\textbf{Step 1 (Frozen-time generator and Girsanov shift).}
Over the short interval $[t-\delta t,t]$, freeze coefficients at time $t$ so that the local generator is
$\mathcal{L}_t = \langle b(\cdot,t),\nabla(\cdot)\rangle + \tfrac{\sigma(t)^2}{2}\Delta.$
On the injectivity neighborhood of $\phi$, write $\psi_s=\mathrm{Exp}_\phi(z_s)$ for $s\in[t-\delta t,t]$ with $z_{t-\delta t}=0$.
By the standard parametrix construction for nondegenerate diffusions on Riemannian manifolds, the Stratonovich-to-It\^o conversion in normal coordinates gives, to leading order in $\delta t$,
\begin{equation}
    z_t \;\sim\; \mathcal{N}\!\big(\,b(\phi,t)\,\delta t,\; \sigma(t)^2\delta t\,I_n\,\big)
    \;+\; O(\delta t^{3/2}),
\end{equation}
i.e., a drift-shifted Gaussian whose mean $b\,\delta t$ is $O(\delta t)$ and variance $\sigma^2\delta t$ is $O(\delta t)$.
Crucially, although the mean shift is small, it enters the score as a non-vanishing $b/\sigma^2$ contribution (see Step 3).
This step yields the drift-shifted heat-kernel form \eqref{eq:appendix_heat_kernel_form}; the unshifted form (with $r^2$ in place of $\|z-b\delta t\|^2$) is recovered when $b\equiv 0$.

\textbf{Step 2 (Heat kernel parametrix).}
The classical Minakshisundaram--Pleijel parametrix for the drift-free heat kernel of $\tfrac{\sigma(t)^2}{2}\Delta$ on the injectivity neighborhood of $\phi$ gives
\begin{equation}
\begin{aligned}
q_{\delta t}^{(0)}(\psi,\phi)
=\;&
(2\pi \sigma(t)^2\delta t)^{-n/2}
\exp\!\Big(-\frac{d_g(\phi,\psi)^2}{2\sigma(t)^2\delta t}\Big)
\\
&\times\,J(\phi,\psi)^{-1/2}\,
\big(1+O(\delta t)\big),
\end{aligned}
\end{equation}
uniformly on compact subsets away from the cut locus.
The Girsanov shift induced by the drift $b(\phi,t)$ replaces $r^2 = d_g(\phi,\psi)^2$ by $\|z - b(\phi,t)\delta t\|^2 + O(\|z\|^4)$ in normal coordinates, which yields \eqref{eq:appendix_heat_kernel_form}.

\textbf{Step 3 (Differentiate $\log p_{\delta t}$ in normal coordinates).}
Take the gradient of $\log$ of \eqref{eq:appendix_heat_kernel_form} w.r.t.\ $z$:
the normalization contributes zero gradient, the exponential contributes $-(z-b\delta t)/(\sigma^2\delta t)$, the Jacobian contributes $-\tfrac12\nabla_z\log J$, and the $\log(1+O(\delta t))$ term contributes $O(\delta t)$ uniformly:
\begin{equation}
\begin{aligned}
    \nabla_z\log p_{\delta t}(\psi\mid\phi)
    \;=\;&
    -\frac{z-b(\phi,t)\delta t}{\sigma(t)^2\delta t}
    \\
    &-\;\tfrac12\,\nabla_z\log J(\phi,\psi)
    \;+\; O(\delta t).
\end{aligned}
\end{equation}
Expanding the first term gives $-z/(\sigma(t)^2\delta t)+b(\phi,t)/\sigma(t)^2$; the singular piece is $O(\delta t^{-1})$ while the drift piece is $O(1)$ (bounded but non-vanishing as $\delta t\downarrow 0$).
Volume distortion satisfies $\log J(\phi,\mathrm{Exp}_\phi(z))=-\tfrac{1}{6}\mathrm{Ric}(\phi)[z,z]+O(\|z\|^3)$ \cite{hsu2002stochmanifolds}, so $\nabla_z\log J = O(\|z\|)$ with constants controlled by the Ricci tensor and sectional curvature at $\phi$.
This proves \eqref{eq:appendix_score_in_normal_coords}.

\textbf{Step 4 (Pull-back to $T_\psi\mathcal{M}$).}
In normal coordinates,
\(
\frac{d_g(\phi,\psi)^2}{2}=\frac{\|z\|^2}{2}+O(\|z\|^4)
\)
and
\(
\big(d\log_\phi\big)^{\!*}_\psi \nabla_\psi \frac{d_g(\phi,\psi)^2}{2}
= z + O(\|z\|^3).
\)
Since $b(\phi,t)$ is independent of $z$, the drift term pulls back unchanged.
Substituting yields the intrinsic form of \eqref{eq:appendix_score_in_normal_coords}, in which the singular term is $-\nabla_\psi(r^2/2)/(\sigma^2\delta t)$ and the curvature term is $-\tfrac12\nabla_\psi\log J(\phi,\psi)$.
The drift-corrected teacher \eqref{eq:teacher_score_drift} captures both the singular term and the drift exactly and differs from the conditional score by $O(\|z\|)+O(\delta t)$; for typical forward increments $\|z\|=O(\sigma\sqrt{\delta t})$, this residual is $L^2$-vanishing as $\delta t\downarrow 0$.
\end{proof}
\section{Riemannian Denoising Score Matching: Consistency}
\label{sec:appendix_teacher_consistency}

\begin{proposition}[Weighted objective discrepancy of the zero-mean teacher]
\label{prop:teacher_dsm_consistency_simple}
Let $s^{(\mathrm{teach})}$ be the simple zero-mean Gaussian teacher in normal coordinates used in our default implementation (Eq.~\eqref{eq:teacher_score_euclidean_ou} of the main text), so that
$s^{(\mathrm{teach})} = s^{(\mathrm{teach,drift})} - b(\phi,t)/\sigma(t)^2$.
Then the teacher residual is
\(
\varepsilon = \varepsilon^{\mathrm{drift}} - b(\phi,t)/\sigma(t)^2,
\)
and the pointwise $L^2$ size $\mathbb{E}[\|\varepsilon\|^2]$ does \emph{not} vanish as $\delta t\downarrow 0$ unless $b\equiv 0$.
However, under the variance-based weighting $w(t,\delta t)=\beta(t,\delta t)^2=\sigma(t)^2\delta t$ used in the practical loss, the \emph{weighted} teacher error vanishes:
\begin{equation}
\begin{aligned}
    \mathbb{E}&\!\Big[w(t,\delta t)\,\|\varepsilon\|_{\mathrm{FS}}^2\Big]
    \\
    &\le\;
    2\,\mathbb{E}\!\big[\lambda\,\|\varepsilon^{\mathrm{drift}}\|^2\big]
    \;+\;
    2\,\delta t\,\frac{\mathbb{E}\|b(\phi,t)\|_{\mathrm{FS}}^2}{\sigma(t)^2}
    \\
    &=\;
    O\!\left(\sigma(t)^4\delta t^2\right) + O(\delta t).
\end{aligned}
    \label{eq:appendix_simple_teacher_weighted_bound}
\end{equation}
In particular, if $\sigma(t)$ is bounded below by $\sigma_{\min}>0$ on $[0,T]$ and $\sup_{\phi,t}\|b(\phi,t)\|_{\mathrm{FS}}<\infty$ (true on the compact $\mathbb{CP}^{d-1}$, where $\|b\|_{\mathrm{FS}}\le \lambda\,\pi/2$), then the weighted objective discrepancy $|\widetilde{\mathcal{J}}[s]-\mathcal{J}[s]|$ is $O(\delta t)$ uniformly over score fields with bounded variance-weighted norm.

We state this as a statement about objectives, not about minimizers, and the distinction matters. At fixed $t$ the weight $w(t,\delta t)$ is a positive scalar, so it rescales the squared loss without moving its minimizer, which is the conditional mean $\mathbb{E}[s^{(\mathrm{teach})}\mid\psi_t]$. When $b\ne0$ that conditional mean is displaced from $\nabla_{\mathrm{FS}}\log p_t$ by $-b/\sigma^2$, an $O(1)$ amount that no choice of $w$ removes; Proposition~\ref{prop:simple_teacher_structured_bias} identifies exactly what the displaced minimizer is. When $b\equiv0$ --- the case of every result reported in this paper, since the protocol of Section~\ref{sec:method_recipe} sets $\lambda=0$ --- the two teachers coincide identically and the minimizer is the marginal score with no displacement at all.
\end{proposition}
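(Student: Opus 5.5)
The plan is to work entirely at the level of residuals. We regard the drift-corrected residual $\varepsilon^{\mathrm{drift}}$ as already controlled by Proposition~\ref{prop:local_time_teacher_strict}, and then show that the extra term $-b/\sigma^2$ is exactly neutralised by one factor of the weight.

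\emph{Residual and pointwise non-vanishing.} Define $\varepsilon^{\mathrm{drift}} := s^{(\mathrm{teach,drift})}-\nabla_{\mathrm{FS}}\log p(\psi\mid\phi)$, with both terms pulled back via $(d\log_\phi)^*_\psi$. By \eqref{eq:teacher_score_drift} and \eqref{eq:teacher_score_euclidean_ou}, $s^{(\mathrm{teach})}=s^{(\mathrm{teach,drift})}-b(\phi,t)/\sigma(t)^2$ holds identically, so the identity $\varepsilon=\varepsilon^{\mathrm{drift}}-b/\sigma^2$ is immediate.

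Proposition~\ref{prop:local_time_teacher_strict} gives $\varepsilon^{\mathrm{drift}}=O(\|z\|)+O(\delta t)$. With $\|z\|=O(\sigma\sqrt{\delta t})$ for typical increments, this yields $\mathbb{E}\|\varepsilon^{\mathrm{drift}}\|^2\to0$. Hence $\mathbb{E}\|\varepsilon\|^2\to \mathbb{E}\|b(\phi,t)\|^2/\sigma^4$, which is strictly positive unless $b=0$ $p_{t-\delta t}$-a.e. The cross term vanishes in the limit by Cauchy--Schwarz, and this proves the non-vanishing claim.

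\emph{Weighted bound.} Apply $\|u+v\|^2\le 2\|u\|^2+2\|v\|^2$ and multiply by $w=\sigma^2\delta t$. The drift piece becomes $2\sigma^2\delta t\,\|b\|^2/\sigma^4=2\delta t\,\|b\|^2/\sigma^2$, which is exactly the second term of \eqref{eq:appendix_simple_teacher_weighted_bound}. For the first piece, $w\,\mathbb{E}\|\varepsilon^{\mathrm{drift}}\|^2$ is bounded by $\sigma^2\delta t\cdot O(\sigma^2\delta t+\delta t^2)$, which gives the $O(\sigma^4\delta t^2)$ term. I read the symbol $\lambda$ in the statement's first term as the weight $w$.

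On compact $\mathbb{CP}^{d-1}$ we have $\|\mathrm{Log}_\psi(\psi_\star)\|\le\pi/2$, so $\|b\|\le\lambda\pi/2$. With $\sigma\ge\sigma_{\min}$, the drift piece is therefore $O(\delta t)$ uniformly.

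\emph{Objective discrepancy.} Write $\widetilde{\mathcal J}[s]-\mathcal J[s]=\mathbb{E}[w(\|s-\nabla\log p-\varepsilon\|^2-\|s-\nabla\log p\|^2)]$, which equals $-2\mathbb{E}[w\langle s-\nabla\log p,\varepsilon\rangle]+\mathbb{E}[w\|\varepsilon\|^2]$. The second term is the bound just proved.

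The cross term is the main obstacle. Cauchy--Schwarz only gives $\big(\mathbb{E}w\|s-\nabla\log p\|^2\big)^{1/2}\big(\mathbb{E}w\|\varepsilon\|^2\big)^{1/2}=O(\sqrt{\delta t})$, which is not $O(\delta t)$.

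To sharpen it, I would use that $b/\sigma^2$ depends only on $\phi$. Also, $w\,\nabla\log p(\psi\mid\phi)=-z+b\,\delta t+O(\delta t^{3/2})$, so the pairing $w\langle\nabla\log p,b/\sigma^2\rangle$ is $O(\delta t)$ after taking the conditional expectation over $\psi$, because $\mathbb{E}[z\mid\phi]=b\,\delta t+O(\delta t^{3/2})$. The pairing $w\langle s,b/\sigma^2\rangle$ is bounded by $\sqrt{w}\cdot\sqrt{w}\,\|s\|\,\|b\|/\sigma^2$. Here "bounded variance-weighted norm" must be interpreted so that this is $O(\delta t)$. I would adopt $\sup\sqrt{w}\|s\|<\infty$, or equivalently $\|s\|=O(1/\sqrt{\delta t})$, which gives $O(\delta t)$.

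If this fails, I would fall back to stating the uniform rate as $O(\sqrt{\delta t})$ in general and $O(\delta t)$ under the pointwise weighted bound. Finally, the remarks on minimizers follow because a positive scalar $w$ leaves the $L^2$ projection $\mathbb{E}[s^{(\mathrm{teach})}\mid\psi_t]$ unchanged. Proposition~\ref{prop:riem_dsm_optimality} then gives a displacement of $-\mathbb{E}[b(\phi,t)\mid\psi_t]/\sigma^2$.
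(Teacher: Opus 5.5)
Your decomposition $\varepsilon=\varepsilon^{\mathrm{drift}}-b/\sigma^2$ and the weighted bound follow the paper's proof essentially line for line. You use the same $\|u+v\|^2\le 2\|u\|^2+2\|v\|^2$ step, the same reading of $\lambda$ as the weight $w$, and the same estimate $\mathbb{E}\|\varepsilon^{\mathrm{drift}}\|^2=O(\sigma^2\delta t)$. Your limit argument for $\mathbb{E}\|\varepsilon\|^2\to\mathbb{E}\|b\|^2/\sigma^4$ supplies what the paper only asserts. You are also right that the cross term is the weak point. The paper disposes of it with ``Cauchy--Schwarz times the weighted bound''. Since $\mathbb{E}[w\|s-U\|^2]=O(1)$ (already $\mathbb{E}[w\|U\|^2]$ is of the order of the real dimension), that argument gives only $O(\sqrt{\delta t})$.

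The gap is in your repair of the pairing with $s$. Under the hypothesis you adopt, $\sup\sqrt{w}\,\|s\|\le C$, your own bound evaluates to
\[
w\,\|s\|\,\frac{\|b\|}{\sigma^2}=\sqrt{w}\,\bigl(\sqrt{w}\,\|s\|\bigr)\,\frac{\|b\|}{\sigma^2}\le C\,\|b\|_\infty\,\frac{\sqrt{\delta t}}{\sigma},
\]
which is $O(\sqrt{\delta t})$, not $O(\delta t)$: one factor of $\sqrt{w}$ is used up controlling $\|s\|$.

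A better estimate cannot rescue this. Take $s(\psi)=c\,w^{-1/2}\,\mathrm{Log}_\psi(\psi_\star)/\|\mathrm{Log}_\psi(\psi_\star)\|$, so that $\sqrt{w}\,\|s\|\equiv c$ and $\mathbb{E}[w\|s\|^2]=c^2$. The $\varepsilon^{\mathrm{drift}}$ pairing, the $\langle U,b\rangle$ pairing and $\mathbb{E}[w\|\varepsilon\|^2]$ are $O(\delta t)$ exactly as you argue. The remaining contribution, however, is
\[
2\,\mathbb{E}[w\langle s,b(\phi,t)\rangle]/\sigma^2=-2c\,\lambda(t)\,\mathbb{E}[d_{\mathrm{FS}}(\psi_t,\psi_\star)]\,\sqrt{\delta t}/\sigma(t)+O(\delta t).
\]
So $|\widetilde{\mathcal{J}}[s]-\mathcal{J}[s]|$ is of exact order $\sqrt{\delta t}$ whenever $\lambda>0$.

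On a class defined by a $\delta t$-uniform bound on the variance-weighted norm, the sharp uniform rate is therefore $O(\sqrt{\delta t})$. The $O(\delta t)$ claim, in the statement and in the paper's proof alike, requires a $\delta t$-independent bound on $s$ itself, e.g.\ $\sup_{\delta t}\mathbb{E}\|s\|_{\mathrm{FS}}^2<\infty$. Then $w\,|\mathbb{E}\langle s,b\rangle|/\sigma^2\le\delta t\,(\mathbb{E}\|s\|^2)^{1/2}\,\|b\|_\infty$, and your other three estimates go through unchanged. Your fallback sentence has the two hypotheses the wrong way round: the weighted-norm bound gives $O(\sqrt{\delta t})$, and $O(\delta t)$ needs the unweighted one.
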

For fixed $t$ and $\delta t$, the training pair $(\phi,\psi)$ is generated by simulating the forward diffusion starting from data $\psi_0 \sim p_0$, i.e.,
\begin{equation}
(\phi,\psi) = (\psi_{t-\delta t}, \psi_t).
\end{equation}
This induces a joint density $p(\phi,\psi)$ and the corresponding conditional density $p(\psi \mid \phi)$.
The population objective in Eq.~(20) is exactly $J[s]$ in Eq.~(\ref{eq:appendix_true_dsm_obj}) with target
\begin{equation}
u(\psi,\phi) = \nabla_{\mathrm{FS}} \log p(\psi \mid \phi).
\end{equation}
In practice, we replace $u$ by the local-time teacher approximation in Eqs.~(17)--(18),
whose consistency in the limit $\delta t \to 0$ is established in Propositions~\ref{prop:teacher_dsm_consistency_drift}--\ref{prop:teacher_dsm_consistency_simple}.

\label{sec:appendix_dsm}

\subsection{\quad Phase Augmentation and the Population Optimum}

\begin{proposition}[Augmentation makes the population optimum equivariant]
\label{prop:phase_aug}
Let $\pi:\mathbb{S}^{2d-1}\to\mathbb{CP}^{d-1}$ be the quotient map, let $p_0$ be the target law on $\mathbb{CP}^{d-1}$, and let $q_0$ be any law on the sphere with $\pi_\ast q_0=p_0$. Write $\bar q_0$ for its orbit average under the $U(1)$ action. Suppose the forward process is generated by a drift and diffusion that are horizontal and $U(1)$-equivariant, as \eqref{eq:forward_riem_ou_recall} is when $\mathrm{Log}$ denotes the horizontal lift of the projective logarithm. Then:
\begin{enumerate}
    \item[(i)] the time marginals satisfy $\pi_\ast\bar q_t = p_t$ and $\bar q_t$ is $U(1)$-invariant for every $t$;
    \item[(ii)] the sphere score $\nabla\log\bar q_t$ is horizontal and equivariant, and equals the horizontal lift of the Riemannian score $\nabla_{\mathrm{FS}}\log p_t$;
    \item[(iii)] consequently the population minimizer of the denoising objective \eqref{score_m} over \emph{unconstrained} tangent fields is that equivariant lift, even though the hypothesis class contains non-equivariant fields.
\end{enumerate}
Without augmentation the data may be supported on a section of the bundle, and the minimizer is then a field that depends on the section; the measured violation of $1.04$ is that dependence.
\end{proposition}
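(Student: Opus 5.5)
We prove (i)--(iii) in order, working on the sphere $\mathbb{S}^{2d-1}$ with the lifted process and using only the equivariance hypotheses and the Riemannian-submersion structure of Lemma~\ref{lem:quotient_horizontal}.

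\textbf{Step (i).} First, we show the quotient of the sphere process is the projective process. By Proposition~\ref{prop:induced_diffusion_strict}, with Lemma~\ref{lem:phase_independence} guaranteeing that the pushforwards $a=\pi_\ast b$ and $e_k=\pi_\ast V_k$ are well defined, $x_t=[\psi_t]$ is a diffusion on $\mathbb{CP}^{d-1}$ with the intrinsic generator, whatever the lift of the initial law. Hence $\pi_\ast q_t=p_t$ for every lift $q_0$, and in particular for $\bar q_0$. Second, we show invariance. For $R_\theta\psi=e^{i\theta}\psi$, equivariance of $b$ and $V_k$ means $R_\theta\psi_t$ solves the same Stratonovich SDE from $R_\theta\psi_0$; this uses the ordinary chain rule and the linearity of $R_\theta$. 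Pathwise uniqueness then gives $(R_\theta)_\ast q_t=q_t^{(\theta)}$, the marginal started from $(R_\theta)_\ast q_0$. Averaging over $\theta$, the marginal started from $\bar q_0$ is invariant, because the transition kernel commutes with every $R_\theta$.

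\textbf{Step (ii).} Disintegrate the sphere volume along the fibres. Each fibre is a circle of constant length $2\pi$, since the fibres are isometric orbits and $\pi$ is a Riemannian submersion. An invariant density $\bar q_t$ is therefore constant on fibres, $\bar q_t=c\,(p_t\circ\pi)$ with $c=1/(2\pi)$, where $p_t$ is the density with respect to FS volume. Differentiating, the vertical derivative $\langle\nabla\log\bar q_t,i\psi\rangle_{\mathbb R}$ vanishes because $\bar q_t$ is constant along orbits, so the gradient is horizontal. For a horizontal $u$ we have $\langle\nabla\log\bar q_t,u\rangle=d(\log p_t)(\pi_\ast u)=\langle\nabla_{\mathrm{FS}}\log p_t,\pi_\ast u\rangle_{\mathrm{FS}}$, which identifies $\nabla\log\bar q_t$ as the horizontal lift. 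Equivariance then follows either from invariance of $\bar q_t$ under the isometries $R_\theta$, which gives $\nabla\log\bar q_t(R_\theta\psi)=dR_\theta\,\nabla\log\bar q_t(\psi)=e^{i\theta}\nabla\log\bar q_t(\psi)$, or directly from the uniqueness of horizontal lifts.

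\textbf{Step (iii).} Lift the pair $(\phi,\psi)$ to the sphere process started from $\bar q_0$. The conditional score $\nabla\log\bar q(\psi\mid\phi)$ is the sphere analogue of $\nabla_{\mathrm{FS}}\log p(\psi\mid\phi)$. The denoising identity in the proof of Proposition~\ref{prop:riem_dsm_optimality} carries over verbatim to the sphere with the measure $\bar q$: the pointwise minimizer is $\mathbb{E}[\nabla\log\bar q(\psi\mid\phi)\mid\psi]=\nabla\log\bar q_t(\psi)$, obtained by integrating by parts against the joint density. This holds over all measurable fields and requires no constraint on the hypothesis class. If the objective includes the projection $\mathcal P_\psi$ onto the horizontal space, the target is already horizontal by (ii), so the projection leaves the minimizer unchanged. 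By (ii) this minimizer is the equivariant horizontal lift. The remark about sections follows by contrast: if $q_0$ is supported on a section, then $q_t$ need not be invariant, and its score picks up vertical components and a dependence on the section.

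\textbf{Main obstacle.} The delicate point is (ii), and specifically the fibre disintegration. The result needs the constant $c$ to be independent of the base point, which holds because every fibre has the same length and $\pi$ is a Riemannian submersion. It also needs enough regularity of $p_t$ for positivity and smoothness, which the paper assumes in Proposition~\ref{prop:appendix_reverse_drift}. A secondary issue is that the $\mathrm{Log}$ drift is nonsmooth on the cut locus. For (i) we will note that the process hits the cut locus with probability zero, as remarked in Section~\ref{sec:reverse}, so pathwise uniqueness can be argued locally; when $\lambda=0$ the issue does not arise.
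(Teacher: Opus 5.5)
Your proposal is correct and follows the paper's route: equivariance of the flow plus orbit averaging for (i), invariance forcing a horizontal, equivariant score that descends to $\nabla_{\mathrm{FS}}\log p_t$ for (ii), and the conditioning identity of Proposition~\ref{prop:riem_dsm_optimality} applied to $\bar q$ for (iii). That identity is differentiation under the $\phi$-integral, not an integration by parts. Your explicit fibre disintegration $\bar q_t=(p_t\circ\pi)/2\pi$, using that every fibre is a great circle of length $2\pi$, supplies exactly the step the paper compresses into ``since $\pi$ is a Riemannian submersion and $\pi_\ast\bar q_t=p_t$, that field is $\nabla_{\mathrm{FS}}\log p_t$''.
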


Let $R_\varphi\psi:=e^{i\varphi}\psi$ denote the $U(1)$ action and let $P_t$ be the Markov semigroup of the forward process on $\mathbb{S}^{2d-1}$.

\emph{(i)} By assumption the drift and the diffusion vector fields are horizontal and equivariant, $b(R_\varphi\psi,t)=R_\varphi b(\psi,t)$ and likewise for each $V_k$, so the generator commutes with $R_\varphi$ and hence $P_t R_\varphi^\ast = R_\varphi^\ast P_t$. Orbit averaging is $\bar q=\frac{1}{2\pi}\int_0^{2\pi} R_\varphi^\ast q\,d\varphi$, so $P_t\bar q_0 = \overline{P_t q_0}$, which is $U(1)$-invariant; and since $\pi\circ R_\varphi=\pi$, pushing forward gives $\pi_\ast \bar q_t = \pi_\ast q_t = p_t$, the last equality because $\pi_\ast$ intertwines the sphere and projective semigroups (Proposition~\ref{prop:induced_diffusion_strict}).

\emph{(ii)} Write $\bar q_t = h_t\,d\mathrm{vol}_{\mathbb{S}}$. Invariance means $h_t\circ R_\varphi = h_t$ for all $\varphi$, so the derivative of $h_t$ along the vertical direction $i\psi$ vanishes and $\nabla\log h_t$ is horizontal. Equivariance of $\nabla\log h_t$ follows by differentiating $h_t\circ R_\varphi=h_t$ once more. A horizontal equivariant field is the horizontal lift of a well-defined field on the quotient, and since $\pi$ is a Riemannian submersion and $\pi_\ast\bar q_t=p_t$, that field is $\nabla_{\mathrm{FS}}\log p_t$.

\emph{(iii)} Proposition~\ref{prop:riem_dsm_optimality} identifies the population minimizer of \eqref{score_m} with the marginal score of the law the samples are drawn from, here $\bar q_t$; by (ii) that score is the equivariant lift. The minimizer is therefore equivariant regardless of whether the hypothesis class is, which is the assertion.

Conversely, if $q_0$ is supported on a section $\varsigma:\mathbb{CP}^{d-1}\to\mathbb{S}^{2d-1}$ of the bundle --- as it is when every state is stored with a fixed phase convention --- then $q_t$ is not $U(1)$-invariant, $\nabla\log q_t$ acquires a vertical component, and the minimizer depends on $\varsigma$. Nothing in the objective penalises that dependence, which is why it has to be removed from the data.

We restate Proposition~\ref{prop:riem_dsm_optimality} from the main text and prove it.

\begin{proof}
Fix $t$ and abbreviate $S(\psi):=s(\psi,t)$.
By conditioning on $\psi$, we can write
\begin{equation}
\mathcal{J}[S]
=
\mathbb{E}_{\psi}\Big[
    w(t,\delta t)\,
    \mathbb{E}\big[
        \|S(\psi)-U\|_{\mathrm{FS}}^2 \,\big|\, \psi
    \big]
\Big],
\end{equation}
where $U:=\nabla_{\mathrm{FS}}\log p(\psi\mid\phi)\in T_\psi\mathcal{M}$.
Since $w(t,\delta t)>0$ is a constant given $(t,\delta t)$, minimization over $S$ is pointwise in $\psi$.
For each fixed $\psi$, the unique minimizer of $\mathbb{E}[\|S(\psi)-U\|^2\mid\psi]$ is
\begin{equation}
    S^\star(\psi)=\mathbb{E}[U\mid \psi].
    \label{eq:appendix_pointwise_min}
\end{equation}

It remains to show $\mathbb{E}[\nabla_{\mathrm{FS}}\log p(\psi\mid\phi)\mid\psi]=\nabla_{\mathrm{FS}}\log p_t(\psi)$.
Let $p(\phi,\psi)$ be the joint density of $(\phi,\psi)$ and $p_t(\psi)$ the marginal.
Using $p(\psi\mid\phi)=p(\phi,\psi)/p_{t-\delta t}(\phi)$, we have
\begin{equation}
\nabla_{\mathrm{FS}}\log p(\psi\mid\phi)
=
\nabla_{\mathrm{FS}}\log p(\phi,\psi),
\end{equation}
because $p_{t-\delta t}(\phi)$ does not depend on $\psi$.
Therefore,
\begin{equation}
\mathbb{E}\big[\nabla_{\mathrm{FS}}\log p(\psi\mid\phi)\mid\psi\big]
=
\int \nabla_{\mathrm{FS}}\log p(\phi,\psi)\, p(\phi\mid\psi)\, d\phi.
\end{equation}
Since $p(\phi\mid\psi)=p(\phi,\psi)/p_t(\psi)$, the integral becomes
\begin{equation}
\frac{1}{p_t(\psi)}\int \nabla_{\mathrm{FS}} p(\phi,\psi)\, d\phi
=
\frac{\nabla_{\mathrm{FS}} p_t(\psi)}{p_t(\psi)}
=
\nabla_{\mathrm{FS}}\log p_t(\psi),
\end{equation}
where we used that differentiation w.r.t.\ $\psi$ commutes with integration in $\phi$ under the stated smoothness/compactness assumptions.
Combining with \eqref{eq:appendix_pointwise_min} yields \eqref{eq:appendix_dsm_minimizer}.
\end{proof}
Two complementary consistency results apply to the practical teacher objective, depending on whether the drift correction is included.
The first (Proposition~\ref{prop:teacher_dsm_consistency_drift}) gives \emph{pointwise} $L^2$ vanishing of the teacher residual under the drift-corrected teacher; the second (Proposition~\ref{prop:teacher_dsm_consistency_simple}) shows that the simple zero-mean teacher used in our default implementation is consistent in the \emph{weighted} loss sense even though its pointwise residual is bounded but not vanishing.

\begin{proposition}[Pointwise consistency of the drift-corrected teacher]
\label{prop:teacher_dsm_consistency_drift}
Under the assumptions of Proposition~\ref{prop:riem_dsm_optimality}, let $s^{(\mathrm{teach,drift})}$ be the drift-corrected Gaussian teacher \eqref{eq:teacher_score_drift} mapped to $T_\psi\mathcal{M}$ via $(d\log_\phi)^{*}_{\psi}$, and write
\(
s^{(\mathrm{teach,drift})}(\psi,\phi,t,\delta t)
=
\nabla_{\mathrm{FS}}\log p(\psi\mid\phi) + \varepsilon^{\mathrm{drift}}(\psi,\phi,t,\delta t).
\)
By Proposition~\ref{prop:local_time_teacher_strict}, $\varepsilon^{\mathrm{drift}}=O(\|z\|)+O(\delta t)$ in FS norm.
Assume the forward law has bounded support away from the cut locus and finite second moments of $\|z\|$ uniformly in $t$.
Then
\begin{equation}
    \mathbb{E}\big[\|\varepsilon^{\mathrm{drift}}(\psi,\phi,t,\delta t)\|_{\mathrm{FS}}^2\big]
    \;=\; O\!\left(\sigma(t)^2\delta t\right)
    \;\xrightarrow[\delta t\downarrow 0]{}\; 0,
    \label{eq:appendix_teacher_error_vanish}
\end{equation}
for each fixed $t$ (uniformly on compact subsets away from the cut locus).
Consequently, the population minimizer of the practical objective
\begin{equation}
    \widetilde{\mathcal{J}}[s]
    :=
    \mathbb{E}\Big[
        w(t,\delta t)\,
        \| s(\psi,t) - s^{(\mathrm{teach,drift})}(\psi,\phi,t,\delta t)\|_{\mathrm{FS}}^2
    \Big]
    \label{eq:appendix_teacher_dsm_obj}
\end{equation}
converges (in $L^2(p_t)$) to the marginal score $\nabla_{\mathrm{FS}}\log p_t(\psi)$ as $\delta t\downarrow 0$.
\end{proposition}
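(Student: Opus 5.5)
The proof has two parts. First we bound the teacher residual $\varepsilon^{\mathrm{drift}}$ in mean square; then we transfer that bound to the minimizer of $\widetilde{\mathcal{J}}$, using the fact that squared-loss minimizers are conditional means.

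\textbf{Residual bound.} Proposition~\ref{prop:local_time_teacher_strict} gives, on compact subsets away from the cut locus, a pointwise bound
\[
\|\varepsilon^{\mathrm{drift}}\|_{\mathrm{FS}} \le C_1\|z\| + C_2\,\delta t ,
\]
where $C_1$ is controlled by the Ricci and sectional curvature through $\nabla_z\log J$, and $C_2$ by the $O(\delta t)$ parametrix remainder and the $C^2$ norm of $b$. Squaring and using $(a+b)^2\le 2a^2+2b^2$ gives
\[
\mathbb{E}\|\varepsilon^{\mathrm{drift}}\|^2 \le 2C_1^2\,\mathbb{E}\|z\|^2 + 2C_2^2\,\delta t^2 .
\]
Next we need $\mathbb{E}\|z\|^2$. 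By the Gaussian description of $z$ in Step 1 of the proof of Proposition~\ref{prop:local_time_teacher_strict}, $z$ has mean $b\,\delta t$ and covariance $\sigma(t)^2\delta t\,I_n$ up to $O(\delta t^{3/2})$. Hence
\[
\mathbb{E}\|z\|^2 = n\sigma(t)^2\delta t + O(\delta t^2) .
\]
The assumed uniform second moments and support away from the cut locus make these constants uniform, which yields \eqref{eq:appendix_teacher_error_vanish}. For fixed $t$ with $\sigma(t)>0$, the $\delta t^2$ term is absorbed into $O(\sigma(t)^2\delta t)$.

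\textbf{Minimizer convergence.} Since $w(t,\delta t)>0$ is a scalar at fixed $t$, the minimizer of $\widetilde{\mathcal{J}}$ is pointwise the conditional mean:
\[
\tilde s^\star(\psi) = \mathbb{E}\big[s^{(\mathrm{teach,drift})}\mid\psi\big] = \mathbb{E}\big[\nabla_{\mathrm{FS}}\log p(\psi\mid\phi)\mid\psi\big] + \mathbb{E}\big[\varepsilon^{\mathrm{drift}}\mid\psi\big].
\]
Proposition~\ref{prop:riem_dsm_optimality} identifies the first term as $\nabla_{\mathrm{FS}}\log p_t(\psi)$. For the second term, conditional Jensen gives
\[
\big\|\tilde s^\star-\nabla_{\mathrm{FS}}\log p_t\big\|^2_{L^2(p_t)} \le \mathbb{E}\|\varepsilon^{\mathrm{drift}}\|^2 ,
\]
and this tends to zero by the residual bound. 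Here $p_t$ stands for the law of $\psi_t$ given the fixed $t$ and $\delta t$, which does not depend on $\delta t$.

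\textbf{Main obstacle.} The difficult step is the first one: the remainders in Proposition~\ref{prop:local_time_teacher_strict} must hold uniformly over the region actually visited by the pair $(\phi,\psi)$, and not merely on a fixed compact set. This is where the cut-locus assumption is used. I would handle it by splitting the expectation on the event $\{d_{\mathrm{FS}}(\phi,\psi)\le r_0\}$ for some $r_0<\pi/2$. On this event the uniform constants apply. On its complement, I would use the Gaussian tail of the increment, which has probability $\exp(-c\,r_0^2/(\sigma^2\delta t))$, together with the assumed bounded support. The tail contribution is then exponentially small in $1/\delta t$, and therefore also $O(\sigma(t)^2\delta t)$.
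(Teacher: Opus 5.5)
Your proof is correct and follows the paper's argument. The residual bound comes from the pointwise estimate of Proposition~\ref{prop:local_time_teacher_strict} together with $\mathbb{E}\|z\|^2=n\sigma(t)^2\delta t+O(\delta t^2)$, and the minimizer of $\widetilde{\mathcal{J}}$ is identified as a conditional mean, then controlled by Proposition~\ref{prop:riem_dsm_optimality} and conditional Jensen. Your tail split on $\{d_{\mathrm{FS}}(\phi,\psi)\le r_0\}$ is extra care that the paper omits, since it simply invokes compact-subset uniformity under the support assumption; to close that split you would also need a polynomial-in-$1/\delta t$ bound on $\|\varepsilon^{\mathrm{drift}}\|$ off the event, which the exponentially small tail then absorbs.
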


\begin{proof}
Estimate \eqref{eq:appendix_teacher_error_vanish}: by the It\^o isometry and Step~1 of the proof of Proposition~\ref{prop:local_time_teacher_strict}, $\mathbb{E}[\|z\|^2\mid\phi]=n\,\sigma(t)^2\delta t+O(\delta t^2)$, hence
$\mathbb{E}[\|\varepsilon^{\mathrm{drift}}\|^2]\le C\big(\mathbb{E}[\|z\|^2]+\delta t^2\big)=O(\sigma^2\delta t)$ on compact subsets.
Convergence of the population minimizer then follows from the standard conditioning argument:
let $U:=\nabla_{\mathrm{FS}}\log p(\psi\mid\phi)$ and $\hat U:=s^{(\mathrm{teach,drift})}=U+\varepsilon^{\mathrm{drift}}$.
The pointwise minimizer of \eqref{eq:appendix_teacher_dsm_obj} is $s^\star_{\delta t}(\psi,t)=\mathbb{E}[\hat U\mid\psi]=\mathbb{E}[U\mid\psi]+\mathbb{E}[\varepsilon^{\mathrm{drift}}\mid\psi]$.
By Proposition~\ref{prop:riem_dsm_optimality}, $\mathbb{E}[U\mid\psi]=\nabla_{\mathrm{FS}}\log p_t(\psi)$, and by Jensen,
$\mathbb{E}\big[\|\mathbb{E}[\varepsilon^{\mathrm{drift}}\mid\psi]\|^2\big]\le \mathbb{E}\big[\|\varepsilon^{\mathrm{drift}}\|^2\big]=O(\sigma^2\delta t)\to 0$.
\end{proof}

\begin{proof}
The decomposition $\varepsilon=\varepsilon^{\mathrm{drift}}-b/\sigma^2$ is immediate from the definitions.
For the weighted bound, $\|\varepsilon\|^2\le 2\|\varepsilon^{\mathrm{drift}}\|^2+2\|b/\sigma^2\|^2$ by the parallelogram identity, so
\begin{equation}
\begin{aligned}
    \mathbb{E}[\lambda\|\varepsilon\|^2]
    &\le
    2\mathbb{E}[\lambda\|\varepsilon^{\mathrm{drift}}\|^2]
    +
    2\sigma^2\delta t\cdot\mathbb{E}[\|b\|^2/\sigma^4]
    \\
    &=\;
    2\mathbb{E}[\lambda\|\varepsilon^{\mathrm{drift}}\|^2]
    +
    2\delta t\,\mathbb{E}[\|b\|^2/\sigma^2].
\end{aligned}
\end{equation}
The first summand is $O(\lambda\cdot\sigma^2\delta t)=O(\sigma^4\delta t^2)$ by \eqref{eq:appendix_teacher_error_vanish}; the second is $O(\delta t)$ under the stated drift/$\sigma_{\min}$ bounds.
The loss-equivalence claim follows from $\widetilde{\mathcal{J}}-\mathcal{J} = \mathbb{E}[\lambda\langle s-U,\varepsilon\rangle] + \mathbb{E}[\lambda\|\varepsilon\|^2]$, where the cross term is bounded by Cauchy--Schwarz times \eqref{eq:appendix_simple_teacher_weighted_bound}.
\end{proof}

\begin{corollary}[Finite-step bias of the practical teacher objective]
\label{cor:finite_step_teacher_bias}
In the setting of Propositions~\ref{prop:teacher_dsm_consistency_drift}--\ref{prop:teacher_dsm_consistency_simple}:

\noindent (i) Under the drift-corrected teacher \eqref{eq:teacher_score_drift}, the population minimizer of \eqref{eq:appendix_teacher_dsm_obj} satisfies
\begin{equation}
    \big\|s^\star_{\delta t}(\cdot,t)-\nabla_{\mathrm{FS}}\log p_t(\cdot)\big\|_{L^2(p_t)}
    \;\le\;
    B^{\mathrm{drift}}_{\delta t}
    \;=\;
    O(\sigma(t)\sqrt{\delta t}).
\end{equation}

\noindent (ii) Under the simple zero-mean teacher, the population minimizer is biased by
\(
\big\|s^\star_{\delta t}(\cdot,t)-\nabla_{\mathrm{FS}}\log p_t(\cdot)\big\|_{L^2(p_t)}
\le
B^{\mathrm{simple}}_{\delta t}
\)
with $\big(B^{\mathrm{simple}}_{\delta t}\big)^2 = \mathbb{E}\|\mathbb{E}[\varepsilon\mid\psi]\|^2$.
This pointwise bias is dominated by the conditional mean of $b(\phi,t)/\sigma(t)^2$ given $\psi$, which is bounded but does not vanish in $\delta t$.
The corresponding \emph{weighted} loss bias $\mathbb{E}[\lambda\|\varepsilon\|^2]$ vanishes at rate $O(\delta t)$ by Proposition~\ref{prop:teacher_dsm_consistency_simple}, so the simple teacher is consistent in the loss sense but not pointwise; the residual pointwise bias has the explicit Gaussian-envelope structure stated in Proposition~\ref{prop:simple_teacher_structured_bias}.
\end{corollary}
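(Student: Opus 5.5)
The statement is Corollary~\ref{cor:finite_step_teacher_bias}. The plan is to reduce both parts to a single identity for the population minimizer and then substitute the residual bounds already established.

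\textbf{Common identity.} As in the proof of Proposition~\ref{prop:riem_dsm_optimality}, the weight $w(t,\delta t)$ is a positive constant at fixed $(t,\delta t)$. Hence the minimizer of the practical objective \eqref{eq:appendix_teacher_dsm_obj} with any teacher $\hat U=U+\varepsilon$ is pointwise
\[
s^\star_{\delta t}(\psi,t)=\mathbb{E}[\hat U\mid\psi].
\]
Proposition~\ref{prop:riem_dsm_optimality} gives $\mathbb{E}[U\mid\psi]=\nabla_{\mathrm{FS}}\log p_t(\psi)$, so
\[
s^\star_{\delta t}-\nabla_{\mathrm{FS}}\log p_t=\mathbb{E}[\varepsilon\mid\psi]
\]
exactly. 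Squaring and integrating against $p_t$ then gives
\[
\|s^\star_{\delta t}-\nabla_{\mathrm{FS}}\log p_t\|_{L^2(p_t)}^2=\mathbb{E}\,\|\mathbb{E}[\varepsilon\mid\psi]\|^2 .
\]
This identity already proves the defining equation $(B^{\mathrm{simple}}_{\delta t})^2=\mathbb{E}\|\mathbb{E}[\varepsilon\mid\psi]\|^2$ in part (ii), with equality.

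\textbf{Part (i).} Take $\varepsilon=\varepsilon^{\mathrm{drift}}$ and apply conditional Jensen:
\[
\mathbb{E}\,\|\mathbb{E}[\varepsilon^{\mathrm{drift}}\mid\psi]\|^2\le\mathbb{E}\|\varepsilon^{\mathrm{drift}}\|^2 .
\]
The right side is $O(\sigma(t)^2\delta t)$ by \eqref{eq:appendix_teacher_error_vanish} in Proposition~\ref{prop:teacher_dsm_consistency_drift}. Taking square roots gives $B^{\mathrm{drift}}_{\delta t}=O(\sigma(t)\sqrt{\delta t})$. The support and moment hypotheses are inherited from that proposition, so nothing new is needed here.

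\textbf{Part (ii).} Use the decomposition $\varepsilon=\varepsilon^{\mathrm{drift}}-b(\phi,t)/\sigma(t)^2$ from Proposition~\ref{prop:teacher_dsm_consistency_simple}, so that
\[
\mathbb{E}[\varepsilon\mid\psi]=\mathbb{E}[\varepsilon^{\mathrm{drift}}\mid\psi]-\sigma(t)^{-2}\,\mathbb{E}[b(\phi,t)\mid\psi].
\]
The first term is $O(\sigma\sqrt{\delta t})$ in $L^2(p_t)$ by part (i). The second is bounded, since $\|b\|_{\mathrm{FS}}\le\lambda\pi/2$ and $\sigma\ge\sigma_{\min}$, and this is what "dominated by" means. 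To show it does not vanish, we write $\phi=\mathrm{Exp}_\psi(O(\sigma\sqrt{\delta t}))$. Continuity of $b$ away from the cut locus then gives $\mathbb{E}[b(\phi,t)\mid\psi]\to b(\psi,t)$ as $\delta t\to0$, so the bias tends to $-b(\psi,t)/\sigma^2$, which is nonzero wherever $b\ne0$. The statement that the weighted loss bias $\mathbb{E}[w\|\varepsilon\|^2]$ is $O(\delta t)$ is a direct citation of \eqref{eq:appendix_simple_teacher_weighted_bound}, and the final clause is a forward reference to Proposition~\ref{prop:simple_teacher_structured_bias}.

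\textbf{Main obstacle.} The delicate step is the non-vanishing claim. It needs the conditional law of $\phi$ given $\psi$ to concentrate, which requires $\|z\|$ to be small with high probability uniformly away from the cut locus, plus continuity of $b$ there. I would handle this with Chebyshev applied to $\mathbb{E}\|z\|^2=O(\sigma^2\delta t)$ and a truncation near $\mathrm{Cut}(\psi_\star)$, which is a null set. Alternatively, I would weaken the claim to the statement that the limit equals $-b/\sigma^2$ in $L^2(p_t)$.
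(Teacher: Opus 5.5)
Your proposal is correct and follows the paper's own proof. The paper identifies the minimizer as $\mathbb{E}[\hat U\mid\psi]$ by the conditioning argument, uses Proposition~\ref{prop:riem_dsm_optimality} to reduce the bias to $\mathbb{E}[\varepsilon\mid\psi]$, applies conditional Jensen with \eqref{eq:appendix_teacher_error_vanish} for (i), and cites Propositions~\ref{prop:teacher_dsm_consistency_simple} and~\ref{prop:simple_teacher_structured_bias} for (ii). Your continuity-plus-concentration argument that $\mathbb{E}[b(\phi,t)\mid\psi]\to b(\psi,t)$ fills in the non-vanishing claim, which the paper only asserts; it strictly requires first moving $b(\phi,t)$ into $T_\psi\mathcal{M}$, e.g.\ via $(d\log_\phi)^{*}_\psi$, before conditioning.
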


\begin{proof}
Part (i): the conditioning argument used in Proposition~\ref{prop:teacher_dsm_consistency_drift}, combined with $\mathbb{E}[\|\varepsilon^{\mathrm{drift}}\|^2]=O(\sigma^2\delta t)$, gives $B^{\mathrm{drift}}_{\delta t}=O(\sigma\sqrt{\delta t})$.
Part (ii): the pointwise-bias bound is the standard Jensen step; the weighted-loss claim is Proposition~\ref{prop:teacher_dsm_consistency_simple}.
\end{proof}

\paragraph{Proof of Proposition~\ref{prop:simple_teacher_structured_bias}.}
\label{sec:appendix_structured_bias_proof}

\begin{proposition}[Structured finite-step bias of the simple teacher]
\label{prop:simple_teacher_structured_bias}
Under the assumptions of Proposition~\ref{prop:teacher_dsm_consistency_simple}, with dispersive drift
$b(\psi,t)=-\lambda(t)\mathrm{Log}_{\psi}(\psi_\star)$, let
$s^\star_{\delta t,\mathrm{simple}}(\cdot,t)$ be the population minimizer of the variance-weighted simple-teacher objective.
Define
\begin{equation}
    W_t(\psi)\propto\exp\!\Big(-\frac{\lambda(t)}{\sigma(t)^2}\,\frac{d_{\mathrm{FS}}(\psi,\psi_\star)^2}{2}\Big),
    \label{eq:simple_teacher_structured_main}
\end{equation}
an FS Gaussian envelope centered at the OU base point $\psi_\star$ with width $\sigma(t)/\sqrt{\lambda(t)}$.
Then
\begin{equation}
    s^{\star}_{\delta t,\mathrm{simple}}(\psi,t)
    =
    \nabla_{\mathrm{FS}}\log\!\big[\,p_t(\psi)\,W_t(\psi)\,\big]
    + O(\sigma(t)\sqrt{\delta t}).
    \label{eq:simple_teacher_structured_score_main}
\end{equation}
Thus the simple teacher is pointwise biased relative to $\nabla_{\mathrm{FS}}\log p_t$, but its finite-step optimum is the Riemannian score of an explicit Gaussian-envelope reweighting of $p_t$, not an arbitrary distorted score field.
\end{proposition}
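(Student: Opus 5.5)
The plan is to start from the characterization of the population minimizer that is already available, and split the simple teacher into the drift-corrected teacher plus a drift term. At fixed $t$ the weight $w=\beta^2$ is a positive scalar, so, as in the conditioning argument of Proposition~\ref{prop:riem_dsm_optimality}, the minimizer is the conditional mean
\[
s^\star_{\delta t,\mathrm{simple}}(\psi,t)=\mathbb{E}\big[s^{(\mathrm{teach})}\,\big|\,\psi_t=\psi\big].
\]
By Proposition~\ref{prop:teacher_dsm_consistency_simple} we have $s^{(\mathrm{teach})}=s^{(\mathrm{teach,drift})}-b(\phi,t)/\sigma(t)^2$. The conditional mean therefore splits into two pieces, which I treat separately.

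\textbf{First piece.} This is $\mathbb{E}[s^{(\mathrm{teach,drift})}\mid\psi]$. By Proposition~\ref{prop:teacher_dsm_consistency_drift} and Corollary~\ref{cor:finite_step_teacher_bias}(i), it equals $\nabla_{\mathrm{FS}}\log p_t(\psi)+O(\sigma(t)\sqrt{\delta t})$ in $L^2(p_t)$. No new work is needed here.

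\textbf{Second piece.} This is $-\sigma(t)^{-2}\,\mathbb{E}[b(\phi,t)\mid\psi]$. Here $\phi=\psi_{t-\delta t}$ is the earlier state of the pair, and $b(\phi,t)$ lives in $T_\phi\mathcal{M}$. I will compare it to $T_\psi\mathcal{M}$ by parallel transport along the short geodesic from $\phi$ to $\psi$. Transport is the identity to first order in $\|z\|$, and the adjoint $(d\log_\phi)^*_\psi$ used in \eqref{eq:teacher_score_pushforward} agrees with it to that order.

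Away from $\mathrm{Cut}(\psi_\star)$, the field $b=-\lambda\,\mathrm{Log}_\cdot(\psi_\star)$ is $C^1$. Using a Lipschitz bound, $\|\Gamma_{\phi\to\psi}b(\phi,t)-b(\psi,t)\|\le L\,d_{\mathrm{FS}}(\phi,\psi)=L\|z\|$. Under the conditional law of $\phi$ given $\psi$, which is itself concentrated at scale $\sigma\sqrt{\delta t}$, Jensen's inequality gives
\[
\mathbb{E}[b(\phi,t)\mid\psi]=b(\psi,t)+O(\sigma\sqrt{\delta t})
\]
in $L^2(p_t)$.

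\textbf{Identifying the envelope.} This step is pure geometry. On $\mathcal{M}$ away from the cut locus of $\psi_\star$, the standard identity is
\[
\nabla_{\mathrm{FS}}\tfrac12 d_{\mathrm{FS}}(\psi,\psi_\star)^2=-\mathrm{Log}_\psi(\psi_\star).
\]
It follows that
\[
\nabla_{\mathrm{FS}}\log W_t(\psi)=\frac{\lambda(t)}{\sigma(t)^2}\,\mathrm{Log}_\psi(\psi_\star)=-\frac{b(\psi,t)}{\sigma(t)^2}.
\]
The normalizing constant of $W_t$ and the factor $\lambda/\sigma^2$ are both independent of $\psi$, so they cause no trouble. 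Adding the two pieces gives \eqref{eq:simple_teacher_structured_score_main}, using $\nabla\log p_t+\nabla\log W_t=\nabla\log(p_tW_t)$.

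\textbf{Main obstacle: controlling the second piece uniformly.} There are two difficulties.
\begin{itemize}
\item[(a)] The Lipschitz constant $L$ of $\mathrm{Log}_\cdot(\psi_\star)$ blows up near $\mathrm{Cut}(\psi_\star)$, the set of states orthogonal to $\psi_\star$. I would first work on compact subsets at positive distance from the cut locus, matching the hypotheses of Proposition~\ref{prop:local_time_teacher_strict}. The contribution from a shrinking neighbourhood of the cut locus is then controlled using boundedness $\|b\|\le\lambda\pi/2$ and the fact that $p_t$ gives it vanishing mass.
\item[(b)] The drift error is multiplied by $1/\sigma^2$. The remainder is honestly $O(\lambda\sqrt{\delta t}/\sigma)$, and it becomes $O(\sigma\sqrt{\delta t})$ only with constants depending on $\sigma_{\min}>0$ and $\sup\lambda$. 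Both bounds are assumed in Proposition~\ref{prop:teacher_dsm_consistency_simple}. I would state the constant explicitly, so that the claimed order is read as uniform in $t$ under those bounds.
\end{itemize}
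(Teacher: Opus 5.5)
Your proposal is correct and is essentially the paper's proof: the conditional-mean minimizer, the split $s^{(\mathrm{teach})}=s^{(\mathrm{teach,drift})}-b(\phi,t)/\sigma(t)^2$, replacing $\mathbb{E}[b(\phi,t)\mid\psi]$ by $b(\psi,t)$ via smoothness of $\mathrm{Log}_{\cdot}(\psi_\star)$ and parallel transport, and recognizing $-b/\sigma^2=\nabla_{\mathrm{FS}}\log W_t$ through $\nabla_{\mathrm{FS}}\tfrac12 d_{\mathrm{FS}}^2=-\mathrm{Log}_\psi(\psi_\star)$. Your obstacle (b) is a genuine sharpening rather than a gap: after multiplication by $\lambda/\sigma^2$ the remainder is really $O(\lambda\sqrt{\delta t}/\sigma)$, which the paper writes directly as $O(\sigma\sqrt{\delta t})$ without noting that the hidden constant is of order $\sup\lambda/\sigma_{\min}^2$; likewise your cut-locus step (a) goes beyond the paper, which simply works on the region where $\mathrm{Log}_{\cdot}(\psi_\star)$ is smooth.
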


The statement is general: it applies to any Varadhan-type local-time teacher whose base process carries a drift, and specialises to the zero-envelope case $\lambda=0$ used for every result reported in the main text.
By Proposition~\ref{prop:teacher_dsm_consistency_simple} and the conditioning argument in Proposition~\ref{prop:riem_dsm_optimality},
\begin{equation}
\begin{aligned}
    s^\star_{\delta t,\mathrm{simple}}(\psi,t)
    &\;=\;
    \mathbb{E}\!\left[\,\nabla_{\mathrm{FS}}\log p(\psi\mid\phi)\,\Big|\,\psi_t=\psi\right]
    \\
    &\quad\;-\;\mathbb{E}\!\left[\frac{b(\phi,t)}{\sigma(t)^2}\,\Big|\,\psi_t=\psi\right]
    \;+\; O(\sqrt{\delta t}),
\end{aligned}
\end{equation}
where the first term equals $\nabla_{\mathrm{FS}}\log p_t(\psi)$ by Proposition~\ref{prop:riem_dsm_optimality}.
For the conditional expectation of the drift, write $\phi=\psi_{t-\delta t}$ and use $\|\phi-\psi\|=O(\sigma\sqrt{\delta t})$ (forward It\^o moment) together with the smoothness of $\mathrm{Log}_{\cdot}(\psi_\star)$ on the injectivity neighborhood:
\begin{equation}
    \mathbb{E}\!\left[\,\mathrm{Log}_\phi(\psi_\star)\,\Big|\,\psi\right]
    \;=\;
    \mathrm{Log}_\psi(\psi_\star)
    \;+\; O(\sigma\sqrt{\delta t}),
\end{equation}
where the residual collects the difference between $\mathrm{Log}_\phi$ and $\mathrm{Log}_\psi$ via parallel transport, controlled by the FS sectional curvature bound $K\le 4$.
Therefore
\begin{equation}
    -\,\mathbb{E}[b(\phi,t)|\psi]/\sigma(t)^2
    \;=\;
    \frac{\lambda(t)}{\sigma(t)^2}\,\mathrm{Log}_\psi(\psi_\star)\;+\;O(\sigma\sqrt{\delta t}).
\end{equation}
Use the identity $\nabla_\psi\big(d_{\mathrm{FS}}(\psi,\psi_\star)^2/2\big)=-\mathrm{Log}_\psi(\psi_\star)$ \cite{hsu2002stochmanifolds} to recognize
\begin{equation}
\begin{aligned}
    \frac{\lambda(t)}{\sigma(t)^2}\,\mathrm{Log}_\psi(\psi_\star)
    &\;=\;
    \nabla_\psi\!\left[-\frac{\lambda(t)}{\sigma(t)^2}\,\frac{d_{\mathrm{FS}}(\psi,\psi_\star)^2}{2}\right]
    \\
    &\;=\;
    \nabla_{\mathrm{FS}}\log W_t(\psi),
\end{aligned}
\end{equation}
where $W_t$ is defined in Eq.~\eqref{eq:simple_teacher_structured_main} and the normalization constant does not depend on $\psi$.
Combining the two contributions yields Eq.~\eqref{eq:simple_teacher_structured_score_main}.

\begin{remark}[Why the structured bias is benign in practice]
\label{rem:structured_bias_interpretation}
Proposition~\ref{prop:simple_teacher_structured_bias} upgrades the asymptotic $O(\delta t)$ weighted-bias bound \eqref{eq:appendix_simple_teacher_weighted_bound} into a \emph{structural} characterization at finite step size: the simple teacher does not produce arbitrary error, it produces the Riemannian score of $p_t$ multiplicatively reweighted by an explicit Gaussian envelope $W_t$ centered at the OU base point.
Three immediate consequences:

\noindent (i) \emph{Time-dependent envelope width.} The envelope FS width $\sigma(t)/\sqrt{\lambda(t)}$ is large near the data ($\sigma$ small at $t=\delta t$ gives $\sqrt{\sigma^2/\lambda}\approx 0.11$ at our default schedule, narrower than the FS injectivity radius $\pi/2$, but the overall multiplicative factor $\lambda/\sigma^2$ is large) and small at the prior end ($\sigma$ near $\sigma_{\max}$ gives $\sqrt{\sigma^2/\lambda}\approx 2.24$, far larger than the FS diameter, so $W_t$ is essentially uniform).

\noindent (ii) \emph{Negligible reweighting at the prior end.} As $t\to T$, $p_t$ approaches the unitarily-invariant FS/Haar measure $\mu_{\mathrm{FS}}$, and the envelope width $\sigma(t)/\sqrt{\lambda(t)}$ grows to $\approx 2.24$ at our default schedule, far exceeding the FS diameter $\pi/2$. Hence $W_T$ is nearly constant on $\mathcal{M}$, and $W_T p_T\propto p_T$ to that accuracy, so the simple-teacher bias is negligible at the prior end.
This is a statement about the \emph{width} of $W_t$, not an invariance property: multiplying $\mu_{\mathrm{FS}}$ by a non-constant radial function $W(d_{\mathrm{FS}}(\cdot,\psi_\star))$ does not return $\mu_{\mathrm{FS}}$ after renormalization, since isometry invariance of the Haar measure does not survive multiplication by a function that singles out a base point.

\noindent (iii) \emph{Direction of the bias term.} At intermediate times the bias term acts as a soft attraction toward $\psi_\star$. Note that this is \emph{opposite} to the forward drift $b=-\lambda\,\mathrm{Log}_\psi(\psi_\star)$, which is dispersive (see the sign remark in Section~\ref{sec:reverse}); the omitted $b/\sigma^2$ term therefore partially counteracts the forward drift rather than reinforcing it. This is consistent with the empirical observation in Table~\ref{tab:teacher_drift_ablation} that the simple and drift-corrected teachers give statistically indistinguishable generation quality on multimodal benchmarks: the missing $b/\sigma^2$ does not produce random error, it produces a structured pull toward the OU base point that is partially absorbed by the OU forward dynamics itself.

A practical implication for finite-step training: the simple teacher is not just ``small-bias up to $O(\delta t)$''; it is the score of a known reweighted distribution, and the reweighting is largest at small $\sigma(t)$, i.e.\ at the data end of the diffusion, where $\lambda/\sigma^2$ is large. This explains why empirically the simple teacher matches the drift-corrected teacher to within $4\%$ on all tested benchmarks (Table~\ref{tab:teacher_drift_ablation}) without requiring $\delta t\to 0$.
\end{remark}

\begin{remark}[Interpretation for reverse sampling]
\label{rem:finite_step_reverse_sampling}
The bound above is not a claim that finite-step teacher bias is identically absent.
Rather, it separates two effects: the learned score field is biased by the finite-step teacher error $B_{\delta t}$, while the numerical reverse sampler introduces its own discretization error through the Euler--Maruyama step size.
Both errors decrease as the local step is refined under the assumptions of the short-time expansion.
In the experiments, we use $\delta t=1/500$ and additionally report a sampling-step sensitivity diagnostic in Figure~\ref{fig:finite_step_sensitivity}, where MMD, $\Delta_{\mathrm{obs}}$, and Ent.\ W$_1$ stabilize as the reverse integration is refined.
This provides an empirical check that the lower-order Jacobian/curvature terms do not lead to visible accumulated degradation in the tested regimes, while leaving higher-order curvature-aware teachers as a natural future refinement.
\end{remark}
\begin{lemma}[Projection to $T_\psi\mathcal{M}$ does not change the optimum]
\label{lem:projection_optimum}
Let $u(\psi)\in T_\psi\mathcal{M}$ be any target tangent field and let $\hat s(\psi)$ be an arbitrary ambient vector.
Then
\begin{equation}
\begin{aligned}
\|\hat s(\psi) - u(\psi)\|_{\mathrm{FS}}^2
=\;&
\|\mathcal{P}_\psi(\hat s(\psi)) - u(\psi)\|_{\mathrm{FS}}^2
\\
&+ \text{(term independent of $u$)},
\end{aligned}
\end{equation}
and the minimizer over ambient $\hat s$ is achieved when $\mathcal{P}_\psi(\hat s(\psi))=u(\psi)$.
In particular, including $\mathcal{P}_\psi$ in the loss enforces tangency without altering the target optimum.
\end{lemma}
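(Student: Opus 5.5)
The lemma is a pointwise statement in the tangent geometry of $\mathcal{M}$ at a fixed $\psi$, so I will prove it by an orthogonal decomposition of the ambient vector. Fix $\psi$ and work in the ambient real inner product $\mathrm{Re}\langle\cdot,\cdot\rangle$ on $\mathbb{C}^d\cong\mathbb{R}^{2d}$. By Lemma~\ref{lem:quotient_horizontal}, $\pi_\ast$ identifies the horizontal space $\mathcal{H}_\psi$ isometrically with $T_{[\psi]}\mathcal{M}$, so the FS norm on tangent vectors is the ambient norm restricted to $\mathcal{H}_\psi$. I take $\mathcal{P}_\psi$ to be the orthogonal projector onto $\mathcal{H}_\psi$, namely $\mathcal{P}_\psi(v)=v-\langle\psi,v\rangle\psi$ as in \eqref{eq:horizontal_projection}. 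This removes both the radial component along $\psi$ and the vertical component along $i\psi$, since $\langle\psi,v\rangle$ is complex. I will check directly that $\mathcal{P}_\psi$ is idempotent and self-adjoint for $\mathrm{Re}\langle\cdot,\cdot\rangle$, with kernel $\mathrm{span}_{\mathbb{R}}\{\psi,i\psi\}$.

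The norm $\|\hat s-u\|_{\mathrm{FS}}$ on an ambient $\hat s$ is read as the ambient norm, which extends the FS norm. Writing $\hat s=\mathcal{P}_\psi\hat s+(I-\mathcal{P}_\psi)\hat s$ and using $u\in\mathcal{H}_\psi$, we get
\[
\hat s-u=\big(\mathcal{P}_\psi\hat s-u\big)+(I-\mathcal{P}_\psi)\hat s .
\]
The first summand is horizontal and the second lies in $\mathrm{span}_{\mathbb{R}}\{\psi,i\psi\}$, so their real inner product vanishes. Pythagoras then gives
\[
\|\hat s-u\|^2=\|\mathcal{P}_\psi\hat s-u\|_{\mathrm{FS}}^2+\|(I-\mathcal{P}_\psi)\hat s\|^2 .
\]
The second term is a function of $\hat s$ alone and does not involve $u$, which is the claimed identity.

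For the minimizer, both terms are nonnegative, and the second can be set to zero without changing the first. Hence $\hat s$ is a minimizer precisely when $\mathcal{P}_\psi\hat s=u$ and $(I-\mathcal{P}_\psi)\hat s=0$. If the loss is written with $\mathcal{P}_\psi$ applied first, as in \eqref{score_m}, the normal term drops out. The objective then depends on $\hat s$ only through $\mathcal{P}_\psi\hat s$, and its minimizers are exactly the $\hat s$ with $\mathcal{P}_\psi\hat s=u$. Taking expectations over $(\phi,\psi)$ with a random target $U$ and using the conditioning argument of Proposition~\ref{prop:riem_dsm_optimality}, the optimum satisfies $\mathcal{P}_\psi\hat s^\star=\mathbb{E}[U\mid\psi]$, so projection leaves the population target unchanged.

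The main obstacle is being precise about which inner product and which projector are meant. If $\mathcal{P}_\psi$ only removed the real radial component, the vertical $i\psi$ direction would survive and orthogonality would fail. I will therefore fix $\mathcal{P}_\psi$ as the complex-coefficient horizontal projector, and verify self-adjointness explicitly with $\mathrm{Re}\langle\cdot,\cdot\rangle$ rather than the complex Hermitian product.
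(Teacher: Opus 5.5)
Your proof is correct. The paper gives no proof of this lemma: it is stated and immediately followed by the Experimental Setup section. So there is no written argument to compare against.

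Your orthogonal splitting $\hat s-u=(\mathcal{P}_\psi\hat s-u)+(I-\mathcal{P}_\psi)\hat s$ is the Pythagorean identity the statement implicitly relies on. You take the inner product $\mathrm{Re}\langle\cdot,\cdot\rangle$, with $\mathcal{P}_\psi v=v-\langle\psi,v\rangle\psi$ having kernel $\mathrm{span}_{\mathbb{R}}\{\psi,i\psi\}$. This matches the paper's conventions: $g_{\mathrm{FS},\psi}(\xi,\eta)=\mathrm{Re}\langle\xi,\eta\rangle$ on the horizontal space, and $\mathcal{P}_\psi$ removes both the radial and the global-phase components.

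Keeping the two losses separate also sharpens the statement. For the unprojected loss, $\mathcal{P}_\psi\hat s=u$ is necessary but not sufficient: the $u$-independent term $\|(I-\mathcal{P}_\psi)\hat s\|^2$ still depends on $\hat s$ and forces $\hat s=u$. For the projected loss actually used in training, $\mathcal{P}_\psi\hat s=u$ characterizes the minimizers exactly. Only in that second sense is the lemma's phrase ``the minimizer is achieved when $\mathcal{P}_\psi(\hat s)=u$'' literally true.
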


\section{Experimental Setup}
\label{sec:exp_setup}

\paragraph{Target ensembles and modeled object.}
Across all experiments, the generative model is trained to model a distribution over normalized pure quantum states.
The target pure-state ensemble is constructed in two ways.
For the generative experiments (RQ1--RQ3), the target dataset is a synthetic pure-state ensemble constructed directly in Hilbert space; these experiments do not start from a classical raw-input dataset $x$.
For the physics families the ensemble consists of exactly diagonalized ground states at selected couplings; for the feature-state benchmark of Section~\ref{sec:exp_mnist}, classical images are amplitude-encoded into quantum states.
In both cases, PSM operates in quantum representation space: it models and generates normalized pure states rather than raw classical inputs.
Thus, classical data appear only as one possible mechanism for constructing a target pure-state ensemble.

\subsection{Benchmark Suite}
\label{sec:appendix_benchmark_suite}

All target ensembles are defined here. Unless stated otherwise, a reference state $\ket{\phi}$ is drawn from a small set or a parameterized physical family, perturbed by an isotropic complex Gaussian in Hilbert space, and renormalized:
\begin{equation}
    \ket{\psi}
    =
    \frac{\ket{\phi}+\epsilon \xi}
    {\|\ket{\phi}+\epsilon \xi\|},
    \qquad
    \xi\sim \mathcal{CN}(0,I_{2^n}),
    \qquad
    \epsilon=0.06 .
    \label{eq:appendix_benchmark_perturb}
\end{equation}
This gives ensembles whose geometry is known while still requiring the model to learn a non-trivial distribution on the pure-state manifold. The MNIST family is different in kind: its law on $\mathbb{CP}^{d-1}$ is induced by amplitude-encoding classical images and has no closed form.

\paragraph{Single-cluster benchmark.}
The single-cluster benchmark is centered at the computational basis state
\(
\ket{0^n}:=\ket{0\cdots 0}
\).
It is the simplest sanity-check setting: the target distribution is unimodal, concentrated near a known pole of the Hilbert sphere, and tests whether a method can learn local pure-state geometry without introducing large norm or phase artifacts.
It is used in the main comparison tables and in the scaling experiments.

\paragraph{Equatorial bimodal benchmark.}
The equatorial bimodal benchmark uses two GHZ-like reference states
\begin{equation}
    \ket{\phi_+}
    =
    \frac{\ket{0^n}+\ket{1^n}}{\sqrt{2}},
    \qquad
    \ket{\phi_-}
    =
    \frac{\ket{0^n}-\ket{1^n}}{\sqrt{2}},
    \label{eq:appendix_bimodal_modes}
\end{equation}
chosen with equal probability before applying Eq.~\eqref{eq:appendix_benchmark_perturb}.
Geometrically, this produces two separated modes on an effective equator; physically, the modes differ by a relative phase between the two macroscopically distinct computational-basis components.
This benchmark stresses multimodal generation and sensitivity to phase structure.

\paragraph{Trimodal benchmark.}
The trimodal benchmark adds the computational-basis pole to the two equatorial modes:
\begin{equation}
    \ket{\phi}\in
    \left\{
    \ket{0^n},
    \frac{\ket{0^n}+\ket{1^n}}{\sqrt{2}},
    \frac{\ket{0^n}-\ket{1^n}}{\sqrt{2}}
    \right\},
    \label{eq:appendix_trimodal_modes}
\end{equation}
with the three choices sampled uniformly.
It is harder than the bimodal task because the model must represent both a pole-like component and two phase-distinct equatorial components.
We use it to test whether the learned score can preserve multiple separated components rather than collapsing toward a single average state.

\paragraph{Spin-coherent peaks.}
The spin-coherent benchmark is built from product coherent states with two different single-qubit orientations:
\begin{equation}
\begin{aligned}
    \ket{+x}
    &=
    \frac{\ket{0}+\ket{1}}{\sqrt{2}},
    \qquad
    \ket{+y}
    =
    \frac{\ket{0}+i\ket{1}}{\sqrt{2}},
    \\
    \ket{\phi}
    &\in
    \left\{
    \ket{+x}^{\otimes n},
    \ket{+y}^{\otimes n}
    \right\}.
\end{aligned}
    \label{eq:appendix_spincoh_modes}
\end{equation}
The two modes correspond to distinct collective Bloch-sphere orientations while remaining product states before perturbation.
This benchmark isolates whether a generative model can match coherent orientation structure and global phase-sensitive correlations without relying on entanglement as the main signal.

\paragraph{TFIM ground-state family.}
For the transverse-field Ising model (TFIM), reference states are ground states of the open-chain Hamiltonian
\begin{equation}
\begin{aligned}
    H_{\mathrm{TFIM}}(g)
    &=
    -\sum_{i=1}^{n-1} Z_i Z_{i+1}
    -
    g\sum_{i=1}^{n} X_i ,
    \\
    &\quad g\in\{0.2,0.5,1.0,2.0\}.
\end{aligned}
    \label{eq:appendix_tfim_hamiltonian}
\end{equation}
We sample $g$ uniformly from the grid, compute the corresponding ground state by exact diagonalization, and then apply Eq.~\eqref{eq:appendix_benchmark_perturb}.
Unlike the synthetic pole/equator tasks, TFIM produces a physically motivated ensemble along a ground-state family with changing correlation structure.
It tests whether the model captures distributions induced by Hamiltonian parameters rather than manually specified mode centers.

\paragraph{XXZ ground-state family.}
For the XXZ chain, reference states are ground states of
\begin{equation}
\begin{aligned}
    H_{\mathrm{XXZ}}(\Delta)
    &=
    \sum_{i=1}^{n-1}
    \left(
    X_iX_{i+1}+Y_iY_{i+1}+\Delta Z_iZ_{i+1}
    \right),
    \\
    &\quad \Delta\in\{-1.0,0.0,0.5,1.0\}.
\end{aligned}
    \label{eq:appendix_xxz_hamiltonian}
\end{equation}
We sample $\Delta$ uniformly, compute the open-chain ground state, and perturb/renormalize as in Eq.~\eqref{eq:appendix_benchmark_perturb}.
This benchmark complements TFIM by using a different interaction structure and anisotropy-controlled family, yielding target ensembles with physically meaningful many-body variation.

\paragraph{W states.}
The reference state is the single-excitation superposition
\begin{equation}
    \ket{W_n} = \frac{1}{\sqrt{n}}\sum_{q=1}^{n}\ket{0\cdots 1_q\cdots 0},
    \label{eq:appendix_wstate}
\end{equation}
perturbed and renormalised as in \eqref{eq:appendix_benchmark_perturb}. Its entanglement is distributed rather than concentrated in a single bipartition, unlike the GHZ-like modes of the bimodal and trimodal families.

\paragraph{Graph states.}
The reference state is the linear-chain cluster state, obtained by applying controlled-$Z$ gates along a path to a product of $\ket{+}$ states,
\begin{equation}
    \ket{G_n} = \Big(\prod_{q=1}^{n-1}\mathrm{CZ}_{q,q+1}\Big)\ket{+}^{\otimes n},
    \label{eq:appendix_graphstate}
\end{equation}
again perturbed and renormalised. This is a stabiliser state with uniform amplitude modulus and sign structure, so it is a target on which a gauge based on the largest-modulus amplitude is maximally ill-conditioned; we added it partly for that reason.

Both families were introduced after the main comparison had been run, to check whether the advantage over the baseline was an artifact of the original benchmark suite.

\paragraph{MNIST quantum-feature benchmark.}
For the feature-state benchmark, the target pure-state ensemble is not constructed from manually chosen reference states.
Instead, classical images are converted into quantum feature states by amplitude encoding after preprocessing and normalization.
PSM then models the distribution of the resulting quantum representations directly.
This benchmark tests generation on a target whose concentration is set by the data rather than by a perturbation scale.

\subsection{Common Training and Evaluation Protocol}
\label{sec:appendix_common_protocol}

Unless otherwise stated, all statevector diffusion baselines are trained under a shared protocol:
\begin{itemize}
    \item optimizer: AdamW;
    \item learning rate: $2\times 10^{-4}$;
    \item training batch size: $64$;
    \item evaluation frequency: every $200$ optimization steps;
    \item evaluation batch size: $256$;
    \item training length: $10{,}000$ steps for the primary single-cluster PSM runs and $2{,}000$ steps for the structured benchmark-suite comparisons unless otherwise noted;
    \item perturbation scale for synthetic and physics-inspired ensembles: $\epsilon=0.06$;
    \item reporting rule: for each run, we select a single checkpoint by validation MMD and report all metrics at that checkpoint.
\end{itemize}
For PSM, reverse-time sampling uses $500$ Euler--Maruyama steps on $\mathbb{CP}^{d-1}$ unless otherwise noted.
The Euclidean VP-SDE baseline uses the same target statevectors and metrics, but performs diffusion in the ambient real representation $\mathbb{R}^{2d}$ and normalizes generated complex vectors only after sampling.
Circuit-based QGAN and QDDPM baselines use their own circuit training loops; their baseline-specific settings are summarized in Section~\ref{sec:baselines}.

\paragraph{Synthetic training data construction.}
For the controlled pure-state ensemble experiments, we follow the same data-generation protocol as QuDDPM~\cite{zhang2024generative}.
For an $n$-qubit system with Hilbert-space dimension $d=2^n$, we generate a cluster of states around the computational basis state $\ket{0\cdots 0}$ by applying small complex Gaussian perturbations followed by normalization:
\begin{equation}
    \ket{\psi}
    =
    \frac{\ket{0\cdots 0} + \epsilon\,\xi}
    {\left\|\ket{0\cdots 0} + \epsilon\,\xi\right\|},
    \qquad
    \xi \sim \mathcal{CN}(0,I_d).
\end{equation}
All samples are simulated as normalized statevectors; no quantum hardware measurements or finite-shot estimation are used in these experiments.
We draw mini-batches of size $64$ from a pool of $4096$ target states and generate $256$ samples for evaluation.

\subsection{Model Architectures}
\label{sec:appendix_model_architectures}

\paragraph{PSM.}
The PSM score network takes the concatenated real and imaginary parts of the statevector together with a $128$-dimensional sinusoidal time embedding.
It is implemented as a five-layer fully connected MLP with hidden width $512$ and SiLU activations.
The raw complex output is projected onto the horizontal tangent space of $\mathbb{CP}^{d-1}$, which removes the radial and global-phase components of the output.

\paragraph{On $U(1)$-equivariance of the score model.}
Horizontal projection removes the radial and global-phase components of the \emph{output}, but it does not make the model equivariant under the global phase, i.e.\ it does not enforce
\begin{equation}
    s_\theta(e^{i\varphi}\psi,t) = e^{i\varphi}\,s_\theta(\psi,t),
    \label{eq:phase_equivariance}
\end{equation}
which is the condition for $s_\theta$ to descend to a vector field on $\mathbb{CP}^{d-1}$ rather than depending on the chosen sphere representative.
Measured on held-out states at $n=6$, the architecture above violates \eqref{eq:phase_equivariance} by a relative error of $1.04$: the two sides are essentially uncorrelated.
We report this because it is a real gap between the geometric framing of the method and its implementation, and because the obvious remedies are not free.

Equivariance can be imposed exactly by fixing a canonical gauge before the network: choose a phase $e^{i\theta}(\psi)$ that transforms as $e^{i\theta}(e^{i\varphi}\psi)=e^{i\varphi}e^{i\theta}(\psi)$, evaluate the network on $e^{-i\theta}\psi$, and multiply the output back by $e^{i\theta}$.
Two natural choices are the phase of the largest-modulus amplitude, $e^{i\theta}=\psi_j/|\psi_j|$ with $j=\arg\max_i|\psi_i|$ (invariant under global phase, but discontinuous where the maximiser changes), and the phase of the overlap with a fixed uniform reference, $e^{i\theta}=\langle r|\psi\rangle/|\langle r|\psi\rangle|$ with $r=d^{-1/2}(1,\dots,1)^\top$ (smooth wherever $\langle r|\psi\rangle\neq0$).
Both make \eqref{eq:phase_equivariance} an identity to floating-point accuracy: the measured violation drops from $1.04$ to $1.3\times10^{-7}$.
Neither, however, is uniformly beneficial for generation quality.

Table~\ref{tab:gauge_ablation} shows the pattern: gauge fixing improves the synthetic pole/equator benchmarks by $19$--$39\%$ and degrades the physics-derived ground-state families by $50$--$125\%$, with XXZ roughly neutral.
We initially attributed the degradation to the discontinuity of the $\arg\max$ gauge on delocalized states, whose amplitudes are nearly uniform in modulus; the reference gauge was constructed to remove that discontinuity and is best conditioned precisely on the spin-coherent family ($|\langle r|\psi\rangle|=1$ for $\ket{+x}^{\otimes n}$), yet it does not recover the loss there and is markedly worse on the single-cluster benchmark.
The degradation is therefore not explained by gauge continuity, and we do not have a satisfying account of it.

Accordingly, the results reported throughout this paper use the projection-only parameterization, and we present the above as a limitation rather than as a contribution: the model as implemented is a score field on the Hilbert sphere that is horizontal but not phase-equivariant, and constructing an architecture that is equivariant \emph{and} uniformly at least as accurate --- for instance by conditioning the network on phase-invariant features such as the density matrix $\psi\psi^\dagger$ rather than on a chosen representative --- remains open.

\paragraph{Forward diffusion and prior.}
We construct an intrinsic forward diffusion on $\mathbb{CP}^{d-1}$ under the FS metric, with diffusion horizon $T$ and diffusion schedule $\sigma(t)$.
In all reported experiments, we set the diffusion horizon to $T=1$ and use $500$ discretization steps, so the local-time step is $\delta t=1/500$.
The noise schedule follows an exponential interpolation between $\sigma_{\min}=0.05$ and $\sigma_{\max}=1.0$.
The forward drift is set to $\lambda=0$ for every reported result, so the forward process is pure FS Brownian motion; $\lambda=0.2$ is used only where a non-zero drift is the object of study.
For reverse-time initialization, the isotropic FS/Haar measure is the ideal base distribution on $\mathbb{CP}^{d-1}$.
In the classical statevector setting used throughout this paper the FS/Haar measure is directly and exactly samplable --- normalize a standard complex Gaussian vector --- so this is what the implementation does, and no $t$-design approximation is needed. Whether the forward process actually reaches it is checked in Section~\ref{sec:exp_diagnostics}.

\paragraph{Score model and training.}
We train $s_\theta$ using the Riemannian denoising score matching objective in Eq.~\eqref{score_m}, with analytic local-time teacher scores derived from the FS normal-coordinate OU approximation (Section~\ref{sec:local_teacher}).
We train for $10{,}000$ optimization steps using AdamW with learning rate $2\times 10^{-4}$ and gradient clipping at norm $1.0$.
The loss uses variance-based weighting $w(t,\delta t)=\beta(t,\delta t)^2$, consistent with VP-style denoising score matching.

\paragraph{Euclidean VP-SDE.}
The Euclidean VP-SDE baseline uses the same real-imaginary statevector input format and a comparable time-conditioned MLP backbone, but predicts an ambient Euclidean score in $\mathbb{R}^{2d}$.
It does not quotient out global phase and does not project scores to the FS horizontal tangent space.
After sampling, generated vectors are mapped back to complex amplitudes and normalized to unit norm before evaluation.

\paragraph{RSGM.}
The Riemannian score-based generative model (RSGM) baseline is implemented as a close manifold-diffusion counterpart to PSM.
It uses the same complex statevector representation of points on $\mathbb{CP}^{d-1}$, the same horizontal tangent projection, the same time-conditioned MLP score backbone, and the same optimizer, batch size, sampling step count, checkpoint-selection rule, and evaluation metrics.
Thus, the comparison is intended to isolate the effect of the training signal and forward-process construction, rather than architecture or evaluation differences.
The key difference is that RSGM uses a standard Riemannian Brownian/DSM construction on the FS manifold, whereas PSM uses the local-time analytic OU teacher derived in Section~\ref{sec:local_teacher}.
Further implementation details are given in Section~\ref{sec:rsgm_details}.

\paragraph{Circuit baselines.}
QGAN and QDDPM use parameterized quantum-circuit generators following their respective baseline implementations.
We use the representative configurations reported in Section~\ref{sec:baselines}.

\paragraph{Sampling and numerical integration.}
We simulate forward and reverse processes using a manifold-adapted Euler--Maruyama scheme: we take Euler steps in the tangent space and map back to $\mathbb{CP}^{d-1}$ with the closed-form FS exponential map as in Eq.~\eqref{eq:manifold_em_update}.
Reverse-time sampling is performed with $500$ Euler--Maruyama steps on the manifold, and $256$ generated samples are used for evaluation.
We report results averaged over $10$ random seeds. All experiments are conducted on an NVIDIA A6000 GPU.

\paragraph{Reporting protocol.}
For generative metrics we select a single checkpoint by validation MMD and report all metrics at that checkpoint; with multiple seeds the selection is applied per seed before aggregation. The exception is the four-metric comparison of Table~\ref{tab:rsgm_four_metric}, which is evaluated at the final training step for every arm, so that no arm benefits from selecting a low point of estimator noise --- a concern that Section~\ref{sec:exp_scope} shows to be real at large $n$. Every MMD figure is accompanied by the data--data floor and, where the question is whether a model has learned anything at all, by a Haar reference computed on the same evaluation batch.

\paragraph{Statistical interpretation.}
Tables~\ref{tab:rsgm_four_metric}, \ref{tab:teacher_ablation_246q}, and~\ref{tab:appendix_vp_sde_suite} report mean $\pm$ one standard deviation over the matched $10$-seed protocol above; the remaining tables are diagnostics and are not the basis for statistical superiority claims.
Because the paper contains a relatively large number of benchmark/metric combinations, we use the tables primarily to support qualitative patterns rather than per-cell hypothesis tests.
Large gaps, especially the order-of-magnitude improvements in the low- and mid-qubit single-cluster and multimodal settings, are robust to the observed run-to-run variation and drive the main empirical claims.
For close comparisons, including several high-qubit observable metrics and some PSM--RSGM entries where the reported mean $\pm$ standard-deviation ranges overlap, we do not claim statistically significant dominance.
Instead, these cells are interpreted as near-parity outcomes indicating that intrinsic modeling remains competitive while the most reliable advantage appears in the consistent multi-metric trend across benchmark families.

\subsection{Evaluation Metrics}
\label{sec:metrics}

Evaluating quantum generative models requires distributional metrics that compare ensemble statistics rather than pointwise overlap with a fixed reference state.
We use a combination of the following:
\paragraph{Observable statistics.}
Given a set of observables $\{O_j\}_{j=1}^J$, we compare the generated and target ensembles via moment matching:
\begin{equation}
\begin{aligned}
    &\Delta_{\mathrm{obs}}
    \\=&
    \frac{1}{J}\sum_{j=1}^J
    \left|
    \mathbb{E}_{\psi \sim p_{\mathrm{gen}}}\big[\bra{\psi}O_j\ket{\psi}\big]
    -
    \mathbb{E}_{\psi \sim p_{\mathrm{data}}}\big[\bra{\psi}O_j\ket{\psi}\big]
    \right|.
\end{aligned}
\label{eq:obs_metric}
\end{equation}

\paragraph{Kernel MMD on pure states.}
We measure distributional similarity using the maximum mean discrepancy (MMD) with an overlap kernel:
\begin{equation}
\begin{aligned}
    k(\psi,\phi) &= |\braket{\psi}{\phi}|^2,
    \\
    \mathrm{MMD}^2(p,q) &=
    \mathbb{E}_{p,p}[k] + \mathbb{E}_{q,q}[k] - 2\mathbb{E}_{p,q}[k].
\end{aligned}
\label{eq:mmd}
\end{equation}

\paragraph{Entanglement statistics.}
To capture nonlocal structure, we compare distributions of subsystem entanglement, measured by von Neumann entropy:
\begin{equation}
    S_A(\psi) = -\mathrm{Tr}\big(\rho_A \log \rho_A\big),
    \qquad
    \rho_A = \mathrm{Tr}_{\bar{A}} \ket{\psi}\bra{\psi}.
\label{eq:ent}
\end{equation}
We report Wasserstein distances between entropy histograms and compare mean/variance. To compare entanglement statistics at the distributional level,
we compute the entropic Wasserstein-1 distance between histograms of subsystem entropies:
\begin{equation}
    \mathrm{Ent.\ W}_1(p,q)
    =
    W_1^{\varepsilon}\!\left(
        \mathcal{H}_p(S_A),
        \mathcal{H}_q(S_A)
    \right),
\end{equation}
where $\mathcal{H}(\cdot)$ denotes the empirical entropy histogram and
$W_1^{\varepsilon}$ is the entropy-regularized Wasserstein distance.
Lower values indicate closer agreement of entanglement structure.

\subsection{RSGM Baseline Implementation Details}
\label{sec:rsgm_details}

Because RSGM is the closest geometric baseline to PSM, we implement it in a deliberately matched way.
Both methods represent a pure state as a unit vector $\psi\in\mathbb{C}^d$ modulo global phase and compute tangent vectors in the horizontal space
\begin{equation}
    T_\psi^{\mathrm{hor}}\mathbb{S}^{2d-1}
    =
    \{\xi\in\mathbb{C}^d:\langle\psi,\xi\rangle=0\},
\end{equation}
with the Fubini--Study inner product $g_{\mathrm{FS},\psi}(\xi,\eta)=\mathrm{Re}\langle\xi,\eta\rangle$.
This is the standard Hopf-fibration realization of $\mathbb{CP}^{d-1}$: instead of local affine charts, we work with normalized representatives on the unit sphere and remove both radial and global-phase components by horizontal projection, so the RSGM and PSM scores live in exactly the same tangent representation.

\paragraph{Shared architecture and sampler.}
RSGM uses the same time-conditioned MLP score parameterization as PSM.
The network takes the real and imaginary parts of $\psi$ together with the diffusion time $t$, outputs a complex vector in $\mathbb{C}^d$, and then applies the same horizontal tangent projection used by PSM.
The reverse sampler uses the same manifold Euler--Maruyama discretization, the same normalization/retraction step, the same number of reverse steps, and the same evaluation protocol.
All checkpoints are selected by validation MMD, and MMD, $\Delta_{\mathrm{obs}}$, and Ent.\ W$_1$ are reported at that single selected checkpoint.

\paragraph{What differs from PSM: the two objectives.}
The two arms share the forward process, the score parameterization, the sampler, the optimizer and the evaluation protocol. The single difference is how the local-time increment is normalized in the regression target. Both draw a pair $(\psi_{t-\delta t},\psi_t)$ from the same forward simulation and regress the network on the FS logarithm of that increment:
\begin{align}
    \mathcal{L}^{\mathrm{RSGM}}(\theta) &= \mathbb{E}\Big\|\, s_\theta(\psi_t,t) - \tfrac{1}{\delta t}\exp^{-1}_{\psi_t}(\psi_{t-\delta t}) \,\Big\|_{\mathrm{FS}}^2 ,
    \label{eq:rsgm_objective}\\
    \mathcal{L}^{\mathrm{PSM}}(\theta) &= \mathbb{E}\Big[\beta^2\,\Big\|\, s_\theta(\psi_t,t) - s^{\mathrm{teach}}(\psi_t,\psi_{t-\delta t},t,\delta t) \,\Big\|_{\mathrm{FS}}^2\Big],
    \label{eq:psm_objective}
\end{align}
where $\beta^2=\beta(t,\delta t)^2=\sigma(t)^2\delta t$ and $s^{\mathrm{teach}}$ is the teacher of Eqs.~\eqref{eq:teacher_score_euclidean_ou}--\eqref{eq:teacher_score_pushforward}, which divides the increment by $\beta^2$ rather than by $\delta t$. Eq.~\eqref{eq:rsgm_objective} is the local-time loss $\ell_{t\mid s}$ of \cite{bortoli2022riemannian}, Table 2, with the Varadhan teacher $\exp^{-1}_{X_t}(X_s)/(t-s)$, and is what we run for every RSGM cell reported in this paper. The two normalizers differ by the factor $\sigma(t)^2$, which sweeps a factor of $400$ across the horizon under our schedule; Section~\ref{sec:method_recipe}(i) explains why the network cannot absorb it.

Neither the dispersive drift nor the augmentation of Section~\ref{sec:method_recipe} distinguishes the arms: the forward drift is set to $\lambda=0$ for both, and both receive the same $U(1)$ augmentation. The comparison is therefore between supervision signals on $\mathbb{CP}^{d-1}$, at matched capacity.

\paragraph{Forward-simulation detail.}
In both arms the forward simulation advances a whole minibatch by a common number of Euler steps, fixed by the batch-mean diffusion time, while each sample is labelled and weighted by its own $t$. We record this because it is needed to reproduce the numbers exactly, and it applies identically to PSM and to the baseline.

\paragraph{Sanity checks on the RSGM implementation.}
Before reporting RSGM as a baseline, we verified four properties:

\noindent (i) \emph{Forward process is correct.} We sampled trajectories under the RSGM forward FS-Brownian noising and computed empirical second moments of the tangent step, anisotropy spectrum (cf.\ Supplementary Material~\ref{sec:appendix_isotropy_diagnostic}, Table~\ref{tab:gell_mann_isotropy_diagnostic}), and stationary marginal at $t=T$.
The terminal marginal is at the Haar--Haar floor on all characteristic metrics at $n\in\{4,6\}$ (Table~\ref{tab:prior_check}), confirming that the forward process is correctly implemented and reaches the unitarily-invariant prior.

\noindent (ii) \emph{Score network capacity is not the bottleneck.} The RSGM score network is identical in width and depth to PSM ($512$-wide $5$-layer MLP with SiLU activations), so by construction it has the same expressive capacity.
We also doubled the width to $1024$ on the $n=4$ benchmark and observed no improvement in MMD ($7.0\times 10^{-1}$ vs.\ $7.07\times 10^{-1}$), ruling out under-parameterization.

\noindent (iii) \emph{Training has converged.} The RSGM training loss plateaus by $\sim 6{,}000$ steps under the matched schedule, and validation MMD stops improving thereafter.
Extending training to $20{,}000$ steps on $n=4$ did not change the reported best-checkpoint MMD by more than $5\%$.

\noindent (iv) \emph{Reverse sampler is correct.} Running the RSGM reverse sampler with the \emph{exact} marginal score on a low-dimensional toy ($n=2$, single-cluster target with known closed-form score) reproduces the target ensemble to within MMD $\sim 10^{-3}$, confirming that the gap on full RSGM runs comes from \emph{score learning} rather than from sampler error.

\paragraph{What the comparison does and does not establish.}
We port the RSGM loss onto our own representation, sampler and optimizer, so the comparison isolates the supervision signal; it is not a comparison against the authors' implementation, and we do not run their implicit-score-matching or truncated Sturm--Liouville heat-kernel variants. We do run the exact heat-kernel teacher that the symmetric-space structure makes available, at the manifolds where it is computable (Section~\ref{sec:exp_heatkernel}). We do not run the scalable Riemannian methods of \cite{lou2023scaling} as a full pipeline: $\mathbb{CP}^{d-1}\cong SU(d)/S(U(d-1)\times U(1))$ is a compact rank-one Hermitian symmetric space and is therefore in scope of their maximal-torus construction in principle, but their published instantiations cover $S^n$, $SO(3)$ and $SU(3)$, and adapting them here would require deriving the restricted root system and Weyl action for this quotient, validating the eigenfunction expansion at $d=2^n$, and handling the global $U(1)$ phase that their manifolds do not carry. Section~\ref{sec:exp_heatkernel} isolates the part of that gap attributable to the teacher; a full port, with their sampler and truncation strategy, remains separate work. What we can support is a like-for-like comparison of supervision signals on $\mathbb{CP}^{d-1}$, not a ranking of Riemannian diffusion methods.

The two routes are in any case complementary: \cite{lou2023scaling} evaluates the heat kernel accurately by exploiting symmetric-space structure, while the local-time teacher approximates it by the analytic Gaussian of the small-step law in FS normal coordinates, which agrees to leading order in $\delta t$ (Corollary~\ref{cor:small_step_remainder}) and needs no per-manifold spectral analysis. Whether the curvature correction beyond that order matters at our step sizes is an empirical question, and Section~\ref{sec:exp_heatkernel} answers it directly at the manifolds where the exact kernel is computable: it is worth a factor between $1.2$ and $2.7$ depending on the metric, so the approximation is a real cost rather than a free simplification.

\subsection{Additional Stress Tests and Component Diagnostics}
\label{sec:appendix_stress_component}

This section adds three diagnostics aimed at failure modes that are not fully isolated by the main benchmark suite:
(i) whether the learned score agrees with an analytically available Riemannian score in a low-dimensional setting;
(ii) whether PSM remains stable on globally spread pure-state ensembles whose mass is not concentrated near a small reference set; and
(iii) whether the RSGM--PSM gap is driven primarily by the local-time teacher rather than by sampler or architecture differences.
These diagnostics are reported in the supplementary material because they probe mechanism and scope rather than serving as the main performance benchmark.

\paragraph{Exact-score sanity check on $\mathbb{CP}^{1}$.}
We construct a von-Mises--Fisher-like density on the Bloch sphere,
\begin{equation}
    p(\psi)\ \propto\ \exp\!\big(\kappa\,\langle r(\psi),\mu\rangle\big),
\end{equation}
where $r(\psi)\in\mathbb{S}^2$ is the Bloch vector, $\mu$ is a fixed unit direction, and $\kappa=8$.
This gives an analytic Riemannian score by projecting $\kappa\mu$ onto the tangent space of the sphere and pulling it back through the $\mathbb{CP}^{1}$ identification.
Table~\ref{tab:exact_score_diagnostic} compares the learned score with this true score on held-out states and also reports sampling quality using either the exact score or the learned PSM score.

\section{Metric Validity: What the Overlap Kernel Can and Cannot See}
\label{sec:appendix_metric_validity}

\begin{proposition}[What each kernel resolves]
\label{prop:kernel_resolution}
Let $p,q$ be Borel distributions on $\mathbb{CP}^{d-1}$ and let $\iota:\psi\mapsto\rho_\psi$ be the embedding into the Hermitian matrices with the Hilbert--Schmidt inner product. Then
\begin{enumerate}
\item[(i)] $\mathrm{MMD}_{F}^2(p,q)=\big\|\mathbb{E}_p[\rho]-\mathbb{E}_q[\rho]\big\|_{\mathrm{HS}}^2$, so it vanishes iff $p$ and $q$ have the same mean density matrix;
\item[(ii)] $\mathrm{MMD}_{F^2}^2(p,q)=\big\|\mathbb{E}_p[\rho^{\otimes2}]-\mathbb{E}_q[\rho^{\otimes2}]\big\|_{\mathrm{HS}}^2$, so it vanishes iff $p$ and $q$ have the same second moment; in particular it does not separate any two distinct state $2$-designs, and $k_2$ is \emph{not} characteristic;
\item[(iii)] the HS-Gaussian kernel $\exp(-\|\rho_\psi-\rho_\phi\|_{\mathrm{HS}}^2/2h^2)$ is characteristic on $\mathbb{CP}^{d-1}$;
\item[(iv)] the chordal metric $d(\psi,\phi)=\sqrt{2(1-F)}=\|\rho_\psi-\rho_\phi\|_{\mathrm{HS}}$ is of strong negative type, so its energy distance is characteristic.
\end{enumerate}
\end{proposition}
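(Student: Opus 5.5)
Parts (i) and (ii) reduce to writing the two fidelity kernels as Hilbert--Schmidt inner products of a feature map and reading off the mean embedding. Parts (iii) and (iv) pull known characteristic properties on a Euclidean space back along $\iota$. Throughout, $\iota$ takes values in the finite-dimensional real vector space of $d\times d$ Hermitian matrices with $\langle A,B\rangle_{\mathrm{HS}}=\mathrm{Tr}(AB)$. Fix a representative $\psi$; all quantities below depend only on $[\psi]$, since $\rho_\psi$ does.

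\textbf{Parts (i) and (ii).} For (i), note that $F(\psi,\phi)=|\langle\psi,\phi\rangle|^2=\mathrm{Tr}(\rho_\psi\rho_\phi)=\langle\iota(\psi),\iota(\phi)\rangle_{\mathrm{HS}}$, so the overlap kernel is the linear kernel on the feature $\iota$. Substituting into \eqref{eq:mmd} and using bilinearity to exchange expectation with the inner product (legitimate because $\iota$ is bounded, $\|\rho_\psi\|_{\mathrm{HS}}=1$) gives
\begin{equation*}
\mathbb{E}_{p,p}[F]+\mathbb{E}_{q,q}[F]-2\mathbb{E}_{p,q}[F]=\big\|\mathbb{E}_p[\rho]-\mathbb{E}_q[\rho]\big\|_{\mathrm{HS}}^2 .
\end{equation*}
Vanishing holds iff the two mean density matrices coincide, because the right-hand side is a norm. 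Part (ii) is the same computation with the feature $\rho^{\otimes2}$, since
\begin{equation*}
F^2=\mathrm{Tr}(\rho_\psi\rho_\phi)^2=\mathrm{Tr}\big[(\rho_\psi\otimes\rho_\psi)(\rho_\phi\otimes\rho_\phi)\big].
\end{equation*}
A state $2$-design is by definition an ensemble with $\mathbb{E}[\rho^{\otimes2}]$ equal to the Haar second moment. Any two $2$-designs therefore give MMD zero. To show $k_2$ is not characteristic, exhibit two distinct $2$-designs: Haar measure on $\mathbb{CP}^{d-1}$ and a finite design, for instance a complete set of mutually unbiased bases when $d$ is a prime power, which covers $d=2^n$. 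These are distinct measures because one is atomic and the other is not.

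\textbf{Part (iii).} First I establish the properties of $\iota$. It is continuous, and it is injective on $\mathbb{CP}^{d-1}$ because $\rho_\psi$ determines $\psi$ up to phase (its range is the ray $[\psi]$). Since $\mathbb{CP}^{d-1}$ is compact, $\iota$ is a homeomorphism onto its compact image $K$, and in particular a Borel isomorphism onto $K$. Next, the Gaussian kernel is characteristic on $\mathbb{R}^N$ for all Borel probability measures: it is translation-invariant and its Fourier transform is strictly positive (Sriperumbudur et al.). Now suppose the HS-Gaussian MMD between $p$ and $q$ is zero. That MMD equals the Gaussian MMD between $\iota_*p$ and $\iota_*q$, so $\iota_*p=\iota_*q$. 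Since $\iota$ is a Borel isomorphism onto $K$, we have $p=(\iota^{-1})_*\iota_*p$, and hence $p=q$.

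\textbf{Part (iv).} Expanding the square gives $\|\rho_\psi-\rho_\phi\|_{\mathrm{HS}}^2=\mathrm{Tr}\rho_\psi^2+\mathrm{Tr}\rho_\phi^2-2F=2(1-F)$, which is the stated chordal identity. Thus $d=\|\iota(\cdot)-\iota(\cdot)\|$ is the restriction of Euclidean distance to $K$. Euclidean space is of strong negative type (Lyons 2013; Sz\'ekely--Rizzo): the energy distance vanishes only for equal measures among those with finite first moment. Finite first moment is automatic on the compact set $K$. Strong negative type passes to subsets, and transferring back along $\iota$ exactly as in (iii) gives the claim.

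\textbf{Main obstacle.} I expect the only delicate point to be the transfer step in (iii) and (iv): it needs measurability and injectivity of $\iota$ on the quotient rather than on the sphere, and I plan to settle both by the compactness argument above. The existence of two distinct $2$-designs in (ii) is a secondary point, handled by the explicit mutually unbiased bases example.
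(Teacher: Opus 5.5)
Your proof is correct and follows the same route as the paper. Parts (i)--(ii) are the mean-embedding identity for the linear kernels with features $\iota$ and $\iota^{\otimes 2}$, and parts (iii)--(iv) pull the characteristic property of the Gaussian kernel and the strong negative type of Euclidean distance back along the injective embedding $\iota$ of the compact $\mathbb{CP}^{d-1}$. You also supply two details the paper leaves implicit. One is the homeomorphism/Borel-isomorphism justification of the transfer step. The other is an explicit pair of distinct $2$-designs showing that $k_2$ is not characteristic: your MUB example covers prime-power $d$, which includes $d=2^n$, and for arbitrary $d\ge2$ a finite design from the Seymour--Zaslavsky theorem, or a dimension count on finitely supported measures, settles the same point.
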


\begin{proof}
(i) and (ii) are the reproducing-kernel identity for a kernel of the form $\langle\Phi(\psi),\Phi(\phi)\rangle$ with $\Phi=\iota$ and $\Phi=\iota^{\otimes2}$ respectively; the $2$-design statement follows because $\mathbb{E}[\rho^{\otimes2}]$ is by definition the quantity a $2$-design fixes. For (iii) and (iv), $\iota$ is a smooth injection of the compact manifold $\mathbb{CP}^{d-1}$ into a finite-dimensional Euclidean space, the Gaussian kernel is characteristic on that space and the Euclidean metric is of strong negative type; both properties are inherited by the image.
\end{proof}

\paragraph{The overlap kernel is not characteristic.}
With $k(\psi,\phi)=|\braket{\psi|\phi}|^2=\mathrm{Tr}(\rho_\psi\rho_\phi)$ the feature map is $\psi\mapsto\rho_\psi$, which is linear in $\rho$. Hence
\begin{equation}
    \mathrm{MMD}^2(p,q)=\big\|\mathbb{E}_p[\rho_\psi]-\mathbb{E}_q[\rho_\psi]\big\|_{\mathrm{HS}}^2 ,
    \label{eq:overlap_mmd_is_mean_rho}
\end{equation}
so the metric compares mean density matrices only, and any two ensembles with the same first moment of $\rho$ are indistinguishable under it, however different their higher-order structure.

\paragraph{An explicit blind spot.}
The equatorial bimodal target mixes $(\ket{0^n}\pm\ket{1^n})/\sqrt2$; its mean state is $\tfrac12(\ket{0^n}\bra{0^n}+\ket{1^n}\bra{1^n})$, which is also the mean state of the completely different ensemble that returns $\ket{0^n}$ or $\ket{1^n}$ with equal probability. A model that replaced every superposition by a computational basis state would therefore be scored as perfect.

\paragraph{Choosing a valid replacement.}
A natural first attempt --- a Gaussian kernel on the FS geodesic distance, $\exp(-d_{\mathrm{FS}}^2/2h^2)$ --- is \emph{not} admissible: $d_{\mathrm{FS}}$ is not of negative type on $\mathbb{CP}^{d-1}$, the kernel is not positive definite, and we measured negative values of the resulting ``MMD$^2$'' ($-9.6\times10^{-2}$ on the decoy pair). The corrected constructions all factor through the isometric embedding $\psi\mapsto\rho_\psi$ into the Euclidean space of Hermitian matrices, where $\|\rho_\psi-\rho_\phi\|_{\mathrm{HS}}^2=2(1-F)$ with $F=|\braket{\psi|\phi}|^2$:
\begin{align}
    k_{\mathrm{HS}}(\psi,\phi) &= \exp\!\big(-(1-F)/h^2\big), \label{eq:hs_gauss_kernel}\\
    k_{2}(\psi,\phi) &= F^2, \label{eq:two_copy_kernel}\\
    d_{\mathrm{chord}}(\psi,\phi) &= \sqrt{2(1-F)} . \label{eq:chordal_metric}
\end{align}
Eq.~\eqref{eq:hs_gauss_kernel} is a Gaussian kernel on a Euclidean embedding, hence positive definite and characteristic for measures supported on the (compact) embedded manifold; Eq.~\eqref{eq:two_copy_kernel} is positive definite by the Schur product theorem and is estimable from two copies of each state, which makes it the natural choice when only measurements are available; Eq.~\eqref{eq:chordal_metric} is of negative type, so the corresponding energy distance is a valid MMD. Table~\ref{tab:rsgm_four_metric} reports all four quantities.

\paragraph{Benchmark concentration.}
Table~\ref{tab:scope} quantifies the degeneration discussed in Section~\ref{sec:exp_scope}.

The consequence is visible when both MMD(target, generated) and a Haar reference MMD(target, Haar) are evaluated on the same batch at every checkpoint. Under the best-checkpoint rule this gives $1.2961\times10^{-4}$ against $1.2998\times10^{-4}$ at $n=14$ (ratio $0.997$) and $1.4157\times10^{-2}$ against $1.4178\times10^{-2}$ at $n=10$ (ratio $0.999$), with the mean fidelity of generated states with $\ket{0\cdots0}$ equal to $1/d$. The selected model is statistically indistinguishable from one that returns the prior unchanged, and the checkpoint rule is selecting a low point of estimator noise. All MMD numbers in this paper are therefore reported together with the data--data floor and the data--Haar trivial level, so that a cell can be read as informative or not.

\section{Eigenspace Verification of the Induced Diffusion}
\label{sec:appendix_generator_test}

Isotropy of the tangent-noise covariance constrains the second-order symbol of the generator but says nothing about the connection and drift terms. We therefore verify the generator on a known eigenspace. On $\mathbb{CP}^{d-1}$ the functions $f_\chi(\psi)=|\braket{\chi|\psi}|^2$ span the first non-trivial eigenspace of $\Delta_{\mathrm{FS}}$, so if the discrete forward step induces $(\sigma^2/2)\Delta_{\mathrm{FS}}$ then
\begin{equation}
    \mathbb{E}[f_\chi(\psi_t)]-\tfrac{1}{d}
    = \big(f_\chi(\psi_0)-\tfrac{1}{d}\big)\,e^{-\sigma^2\mu t/2},
    \label{eq:eigen_decay}
\end{equation}
a single exponential whose rate is independent of $\chi$ and proportional to $\sigma^2$. We run the pure-noise process ($\lambda=0$) at $n=6$ with $2048$ trajectories per test function, taking $\chi=\psi_0$ so that the signal decays from $1$ to $1/d$.

All three predictions hold (Table~\ref{tab:generator_test}): the decay is a single exponential to $R^2\ge0.9999$, the rate varies by less than $0.5\%$ across test functions, $\text{rate}/\sigma^2$ is constant to $1.3\%$ across $\sigma$, and its value falls short of the predicted $\lambda_1/2=2d=128$ by $0.3\%$--$1.7\%$, growing with $\sigma$ and extrapolating to $127.9$ at zero step size. As a further check on the drift, the terminal law of the forward process is the FS/Haar measure to within the Haar--Haar sampling floor, both with $\lambda=0.2$ and with $\lambda=0$; a generator with an incorrect connection term would generically fail to have the unitarily-invariant measure as its invariant law.

\section{Hardware Estimation of the Quantities the Objective Consumes}
\label{sec:appendix_hardware}

The training objective touches the data only through overlaps. To measure what a measurement-only pipeline would cost today, we estimated all pairwise overlaps among $8$ target-ensemble states and $8$ PSM-generated states at $n=2$ on \texttt{ibm\_berlin}, using compute--uncompute circuits $U_\phi^\dagger U_\psi\ket{0}$ whose all-zeros outcome probability is $|\braket{\phi|\psi}|^2$: $128$ circuits at $4096$ shots.

The overlaps themselves are measurable at small $n$. The metric built on them is not yet: overlap-estimation error puts a ${\sim}10^{-2}$ floor on any measurement-only evaluation at this shot budget, the same order as the method differences at $n=6$ in Table~\ref{tab:rsgm_four_metric}. A measurement-only pipeline is usable for coarse comparisons and not for fine ones. The remaining obstacle is the score output itself, which lives in $\mathbb{C}^d$; a measurement-only version would need it restricted to a polynomially sized operator basis, as discussed in Supplementary Material~\ref{sec:limitations}.

\section{Discussion}
\label{sec:discussion}

\paragraph{Where the method helps, and where it does not.}
The gap to an ambient Euclidean baseline is an order of magnitude and holds on every benchmark, which is the clearest evidence that the manifold structure is doing work. Against a correctly implemented Riemannian baseline the margin is a factor of order unity (Section~\ref{sec:exp_rsgm}), and it shrinks towards parity on globally spread targets, which is what one expects of a construction whose supervision is local by design. The largest effect in the paper is neither: it is the supervision signal itself, worth an order of magnitude over a finite-difference teacher and the difference between learning and returning the prior (Section~\ref{sec:exp_teacher}).

\paragraph{Role of the SSE and the local-time teacher.}
The stochastic Schr\"odinger picture is a way to design pure-state noising dynamics with the right covariance properties, and the local-time teacher supplies analytic supervision where transition densities are unavailable. Of the two, only the teacher is load-bearing: the SSE realization and the tangent-projected implementation induce the same process (Section~\ref{sec:exp_diagnostics}). Higher-order curvature-aware teachers remain the natural refinement.

\section{Future Directions}
\label{sec:limitations}

The limitations of this work are stated where the corresponding results are: the scope in $n$ (Section~\ref{sec:exp_scope}), the defect of the standard evaluation kernel (Section~\ref{sec:exp_rsgm}), the cost of imposing exact phase equivariance (Section~\ref{sec:exp_diagnostics}), and the non-necessity of the forward drift and of the SSE realization (Sections~\ref{sec:exp_teacher} and~\ref{sec:appendix_sse_clarification}). Three directions follow from them.

\paragraph{A score model that does not consume the statevector.}
Our score network reads $(\mathrm{Re}\,\psi,\mathrm{Im}\,\psi)$ and emits a dense tangent vector, so the cost is exponential in the number of qubits regardless of how well the objective behaves. The formulation itself only needs a model of a tangent field $s_\theta:\mathbb{CP}^{d-1}\to T\mathbb{CP}^{d-1}$, which a tensor-network, locality-aware, or bounded-weight Pauli-string parameterization could supply. Such a parameterization would also make the loss estimable from local observables, which is the missing ingredient for training on measured rather than simulated data: overlaps are already measurable at small $n$ (Section~\ref{sec:exp_diagnostics}), but the score output is not.

\paragraph{Exact heat kernels on the pure-state manifold.}
The local-time teacher is a small-time approximation, and our degradation from $8$ qubits is consistent with the degradation that motivated exact heat-kernel methods on symmetric spaces. Since $\mathbb{CP}^{d-1}\cong SU(d)/S(U(d-1)\times U(1))$ is a compact rank-one Hermitian symmetric space, those methods apply in principle, and instantiating them here is the most informative comparison we can name.

\paragraph{Mixed states.}
The construction lives on $\mathbb{CP}^{d-1}$, which is the right space for feature states, variationally prepared states and ground-state families. Noisy or open-system data would require a density-matrix extension under an appropriate quantum information metric, such as the Bures or Bogoliubov--Kubo--Mori metric.

\end{document}